\newcommand{\Naturals}{\mathbb{N}}
\newtheorem{define}{Definition}
\newtheorem{case}{Case}
\newtheorem{subcase}{Case}
\numberwithin{subcase}{case}
\def\##1{\relax\ifmmode\mathchoice      
{\mbox{\boldmath$\displaystyle#1$}}
{\mbox{\boldmath$\textstyle#1$}}
{\mbox{\boldmath$\scriptstyle#1$}}
{\mbox{\boldmath$\scriptscriptstyle#1$}}\else
\hbox{\boldmath$\textstyle#1$}\fi}
\def\blackslug
\def\qedd{\hfill\quad\blackslug\lower 8.5pt\null\par}
\def\bar#1{\overline{#1}}
\newcommand{\EE}{\mathbb{E}}
\renewcommand{\Re}{\mathbb{R}}
\newcommand{\SF}{{\cal F}}
\newcommand{\SX}{{\cal X}}
\newcommand{\SY}{{\cal Y}}
\newcommand{\ST}{{\cal T}}
\newcommand{\SC}{{\cal C}}
\newcommand{\SO}{{\cal O}}
\def\M3N{M$^3$N}
\def\AUC{{\mathrm{AuRC}}}
\def\SELE{\Delta_{\rm sele}}
\def\SELEcvx{\psi_{\rm sele}}
\def\beq{\begin{equation}}
\def\eeq{\end{equation}}
\def\equ#1{(\ref{#1})}
\def\veps{\varepsilon}
\def\argmax{\mathop{\rm argmax}}                                                
\def\argmin{\mathop{\rm argmin}}
\def\sgn{\mathop{\rm sgn}}
\def\lz{\langle}
\def\pz{\rangle}
\def\sgn{\mathop{\rm sgn}}
\def\leftbb{\mathopen{\rlap{$[$}\hskip1.3pt[}}
\def\rightbb{\mathclose{\rlap{$]$}\hskip1.3pt]}}
\newcommand{\Prob}{\mathbb{P}}
\begin{document}

\title{Optimal strategies for reject option classifiers}

\author{\name Vojtech Franc \email xfrancv@fel.cvut.cz 
       \AND
       \name Daniel Prusa \email prusa@fel.cvut.cz
       \AND
       \name Vaclav Voracek \email voracva1@fel.cvut.cz \\
       \addr Department of Cybernetics, Faculty of Electrical Engineering\\
       Czech Technical University in Prague, Czech Republic
       }

\editor{Editors}

\maketitle

\begin{abstract}
In classification with a reject option, the classifier is allowed in uncertain cases to abstain from prediction. 
The classical cost-based model of a reject option classifier requires the cost of rejection to be defined explicitly. An alternative bounded-improvement model, avoiding the notion of the reject cost, seeks for a classifier with a guaranteed selective risk and maximal cover. We coin a symmetric definition, the bounded-coverage model, which seeks for a classifier with minimal selective risk and guaranteed coverage. We prove that despite their different formulations the three rejection models lead to the same prediction strategy: a Bayes classifier endowed with a randomized Bayes selection function. We define a notion of a proper uncertainty score as a scalar summary of prediction uncertainty sufficient to construct the randomized Bayes selection function. We propose two algorithms to learn the proper uncertainty score from examples for an arbitrary black-box classifier. We prove that both algorithms provide Fisher consistent estimates of the proper uncertainty score and  we demonstrate their efficiency on different prediction problems including classification, ordinal regression and structured output classification. 
\end{abstract}

\begin{keywords}
  Reject option classification, prediction uncertainty, selective classifiers
\end{keywords}

\section{Introduction}

In safety critical applications of classification models, prediction errors may lead to serious losses. In such cases estimating when the model makes an error can be as important as its average performance. These two objectives are taken into account in classification with a reject option when the classifier is allowed in uncertain cases to abstain from prediction. 

The cost-based model of a classification strategy with the reject option was proposed by Chow in his pioneering work~\cite{Chow-RejectOpt-TIT1970}. The goal is to minimize the expected loss equal to the cost of misclassification, when the classifier predicts, and to the reject cost, when the classifier abstains from prediction. An optimal strategy leads to the Bayes classifier abstaining from prediction when the conditional expected risk exceeds the reject cost. 
The known form of the optimal strategy allows to construct the classifier by plugging in an estimate of the class posterior probabilities to the formula for the conditional risk. 
Besides the plug-in rule, the reject option classifiers can be learned by empirical risk minimization based approaches like e.g.  modifications of Support Vector Machines~\citep{Grandvalet-SVMwithRejectOpt-NIPS2008}, Boosting~\citep{Cortes-BoostWitAbst-NIPS2016}, or Prototype-based classifiers~\citep{Villmann-RejectProto-AISC2016} to name a few.


The cost-based model requires the reject cost to be defined explicitly which is difficult in some applications e.g. when the misclassifiation costs have different physical units than the reject cost. An alternative bounded-improvement model coined in~\cite{Pietraszek-AbstainROC-ICML2005} avoids explicit definition of the reject cost. The rejection strategy is evaluated by two antagonistic quantities: i) a selective risk defined as the expected misclassification cost on accepted predictions and ii) a coverage which corresponds to the probability that the prediction is accepted. An optimal strategy for the bounded-improvement model is the one which maximizes the coverage under the condition that the selective risk does not exceed a target value. In contrast to the cost-based model, it has not been formally shown what is the optimal prediction strategies when the underlying model is known. 
A solution has been proposed only for special instances of the task. \cite{Pietraszek-AbstainROC-ICML2005} coined a method based on ROC analysis applicable when a score proportional to posterior probabilities is known and the task is to find only the optimal thresholds. \cite{ElYaniv-SelectClass-JMLR10} proposed an algorithm learning the optimal strategy in the noise-free setting, i.e. when a perfect strategy with zero selective risk exists. ~\cite{Geifman-SelectClass-NIPS2017} shows how to equip a trained classifier with a reject option provided an uncertainty measure is known and the task is to find only a rejection threshold optimal under the bounded-improvement model.

A large number of other works address the problem of uncertainty prediction, including recent papers related to deep learning like e.g.~\cite{Laksh-UncertDeep-NIPS2017,Jiang-TrustOrNot-NIPS2018,Corbiere-Failure-NeurIPS2019}. They seek for an uncertainty score~\footnote{Some works use term confidence score which is inverse to the uncertainty score utilized in this paper. } defined informally as a real valued summary of an input observation that is predictive of the classification error. These works do not formulate the problem to be solved explicitly as a rejection model. However, most of these works asses performance of their uncertainty scores using evaluation metrics for the rejection models, namely, using the Risk-Coverage (RC) curve and the Area under the RC curve (AuRC).

This article unifies and extends existing formulations of an optimal reject option classifier and proposes theoretically grounded algorithms to learn the classifiers from examples. The main contributions are as follows:

\begin{enumerate}
   \item We provide necessary and sufficient conditions for an optimal prediction strategy of the bounded-improvement model when the underlying distribution is known. We show that an optimal solution is the Bayes classifier endowed with a rejection strategy, which we call {\em randomized Bayes selection function}. The randomized Bayes selection function is constructed from the conditional expected risk and two parameters: a decision threshold and an acceptance probability. The strategy rejects prediction when the conditional risk is above the threshold, accepts prediction when it is below the threshold and randomizes with the acceptance probability otherwise. We provide an explicit relation between the decision threshold, the acceptance probability and the target risk. 
   \item We formulate a bounded-coverage model whose definition is symmetric to the bounded-improvement model. The optimal prediction strategy of the bounded-coverage model minimizes the selective risk under the condition that the coverage is not below a target value. We provide necessary and sufficient conditions for an optimal strategy and we show that the conditions are satisfied by the Bayes classifier endowed with the randomized Bayes selection function. We provide an explicit relation between the decision threshold, the acceptance probability and the target coverage.
   %
   %
   \item We define a notion of a {\em proper uncertainty score} as a function which preserves ordering of the inputs induced by the conditional expected risk. A proper uncertainty score is sufficient for construction of the randomized Bayes selection function. We propose two generic algorithms to learn the proper uncertainty score from examples for an arbitrary black-box classifier. The first is based on regression of the classifier loss. The second is based on minimization of a newly proposed loss function which we call SELEctive classifier learning (SELE) loss. We show that SELE loss is a tight approximation of the AuRC and at the same time amenable to optimization. We prove that both proposed algorithms provide Fisher consistent estimate of the proper uncertainty score.
   As a proof of concept we apply the proposed algorithms to learn proper uncertainty scores for different prediction problems including classification, ordinal regression and structured output classification. We demonstrate that the algorithm based on the SELE loss minimization learns uncertainty scores which consistently outperform common baselines and work on par with the state-of-the-art methods that are, unlike our algorithm, applicable only to particular prediction models.
\end{enumerate}

Besides the proposed algorithms applicable to learning uncertainty score for an arbitrary classification model, our contributions may have the following uses. Firstly, our analysis shows that despite their different objectives the cost-based, the bounded-improvement and the bounded-coverage rejection models are equivalent in the sense that they lead to the same prediction strategy. Secondly, the explicit characterization of optimal strategies provides a recipe how to construct plug-in rules which has been so far possible only for the cost-based model. That is, any method estimating the class posterior distribution can be turned into an algorithm learning the reject option classifier that solves the bounded-improvement and the bounded-coverage model, respectively. Thirdly, there is a tight connection between the proposed bounded-coverage model and the RC curve. The RC curve represents quality of all solutions of the bounded-coverage model that can be constructed from a pair of a classifier and an uncertainty score. The AuRC is then an expected quality of the reject option classifier constructed from the pair when the target coverage is selected uniformly at random. This connection sheds light on many published methods which do not explicitly define the target objective but use the RC curve and the AuRC as evaluation metrics. 

This article is an extension of our previous work published in~\cite{Franc-SELE-ICML2019}. The major extensions involve introduction of the bounded-coverage model and its analysis, analysis of the learning algorithms including the proof of Fisher consistency, and most of the experiments. 


The paper is organized as follows. Section~\ref{sec:rejectOption} introduces the three rejection models and provides characterization of their optimal solutions. Algorithms to learn a proper uncertainty score from examples are discussed in Section~\ref{sec:UncertaintyLearning}. Survey of related literature is given in Section~\ref{sec:relatedWorks}.
Experimental evaluation of the proposed learning algorithms is provided in Section~\ref{sec:experiments}. Section~\ref{sec:conclusions} concludes the paper. Proofs of all theorems are deferred to Appendix.

\section{Reject Option Models and Their Optimal Strategies} \label{sec:rejectOption}

Let $\SX$ be a set of input observations and $\SY$ a finite set of labels. Let us assume that inputs and labels are generated by a random process with p.d.f. $p(x,y)$ defined 
over $\SX\times\SY$. A goal in the non-reject setting is to find a {\em classifier} $h\colon\SX\rightarrow\SY$ with a small {\em expected risk}
\[
    R(h) = \int_{\SX} \sum_{y\in\SY}p(x,y)\ell(y,h(x)) \, dx\:,
\]
where $\ell\colon\SY\times\SY\rightarrow\Re_+$ is a {\em loss} penalizing the predictions. 

The expected risk can be reduced by abstaining from prediction in uncertain cases. To this end, we use a {\em selective classifier}~\footnote{The {\em classifier with a reject option} is usually represented by a single function $h'\colon\SX\rightarrow\SY\cup \{\mbox{reject}\}$. We use the decomposition $h'(x)=(h,c)(x)$, and the terminology {\em selective classifier} from~\cite{ElYaniv-SelectClass-JMLR10} because we analyze $h$ and $c$ separately. } $(h,c)$ composed of a classifier $h\colon\SX\rightarrow\SY$ and a {\em selection function} $c\colon \SX\rightarrow [0,1]$. When applying the selective classifier to input $x\in\SX$ it outputs
\[
    (h,c)(x) = \left \{ \begin{array}{rcl}
        h(x) & \mbox{with probability} & c(x)\,, \\
        \mbox{reject} & \mbox{with probability} & 1- c(x)\:.
    \end{array}
     \right .
\]
In the sequel we introduce three models of an optimal selective classifier: the cost-based, the bounded-improvement and the bounded-coverage model. We characterize optimal strategies of the three models provided the underlying distribution $p(x,y)$ is known. 


\subsection{Cost-based model} 

Besides the label loss $\ell\colon\SY\times\SY\rightarrow\Re_+$, let us define a reject loss $\veps \in\Re_+$ incurred when a classifiers rejects to predict. The selective classifier $(h,c)$ is evaluated in terms of the expected risk
\[
  R_B(h,c) \! = \!\! \int_{\SX} \! \sum_{y\in\SY} p(x,y)\,\big (\ell(y,h(x)) c(x)
   + (1-c(x))\veps\big ) dx\:.
\]

\begin{restatable}{problem}{costBasedModel}{\bf (Cost-based model)}\label{prob:CostBased} The optimal selective classifier $(h_B,c_B)$ is a solution to the minimization problem
\begin{equation}
   \min_{h,c} R_B(h,c)  \,,
\end{equation}
where we assume that both minimizers exist. 

\end{restatable}
\noindent
The well-known optimal strategy $(h_B,c_B)$ solving Problem~\ref{prob:CostBased} reads
\begin{eqnarray}
  \label{equ:bayesPredictor}
   h_B(x) \in \argmin_{\hat{y}\in\SY} \sum_{y\in\SY} p(y\mid x) \,
   \ell(y,\hat{y}) \:,  \\
  \label{equ:BayesOptimalRejection}
   c_B(x) = \left \{ \begin{array}{rcl}
       1 & \mbox{if} & r^*(x) < \veps \:, \\
       \tau& \mbox{if} & r^*(x) = \veps  \:, \\
       0 & \mbox{if} & r^*(x) > \veps \:,\       
     \end{array} \right .
\end{eqnarray}
where 
\[
   r^*(x) = \min_{\hat{y}\in\SY} \sum_{y\in\SY} p(y\mid x) \, \ell(y,\hat{y})
\] 
is the minimal conditional expected risk associated to input $x$, and 
$\tau$ is any real number from the interval $[0,1]$. In the boundary cases when $r^*(x)=\veps$ one can arbitrarily reject or return the best label $h_B(x)$ without affecting the value of the risk $R_B(h,c)$. In turn there always exist a deterministic optimal strategy solving the cost-based model. In the sequel we denote the classifier $h_B$ alone as the {\em Bayes classifier}.

\subsection{Bounded-improvement model} 
One can characterize the selective classifier by two antagonistic quantities: i) the {\em coverage}
\[
    \phi(c) = \int_{\SX} p(x)\, c(x)\, dx
\]
corresponding to the probability that the prediction is accepted~\footnote{For a function $f\colon\SX\rightarrow\Re$ and $a\in\Re \cup \{\infty\}$, we define
$\SX_{f(x)\le a}\!=\!\{x\in \SX \! \mid \! f(x) \le a\}, \, \SX_{f(x)< a}\!= \{x\in \SX \! \mid \!f(x) < a\}, 
\SX_{f(x)=a}\!\!=\{x\in \SX \! \mid \!f(x) = a\}, \, \SX_{f(x)> a} = \{x\in \SX \! \mid \!f(x) > a\}, 
\SX_{f(x)\ge a}\!\!=\{x\in \SX \! \mid \!f(x) \ge a\}. 
$} 
and ii) the {\em selective risk}
\[
   R_S(h,c) = \frac{\int\limits_{\SX}\sum\limits_{y\in\SY}p(x,y)\,\ell(y,h(x))\,c(x)\,dx}{\phi(c)} \:,
\]
defined for non-zero $\phi(c)$ as the expected classification loss on the accepted predictions.
\begin{restatable}{problem}{boundedImprovement}\label{task:boundedImprovement}{\bf (Bounded-improvement model)}  Given a {\em target risk} $\lambda>0$,  the optimal selective classifier $(h_I,c_I)$ is a solution to the problem
\begin{equation}
\label{equ:selectClassifTask}
   \max_{h,c} \phi(c) \,\quad \mbox{s.t.}\quad
    R_S(h,c) \leq \lambda  \,,
\end{equation}
where we assume that both maximizers exist.
\end{restatable}

\begin{restatable}{theorem}{optimalPredictor}
\label{thm:optimalPredictor}
Let $(h,c)$ be an optimal solution to~(\ref{equ:selectClassifTask}). Then, $(h_B,c)$, where $h_B$ is the Bayes classifier~\equ{equ:bayesPredictor}, is also optimal to~(\ref{equ:selectClassifTask}).
\end{restatable}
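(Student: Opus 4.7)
The key observation is that the coverage $\phi(c)=\int_{\SX}p(x)c(x)\,dx$ depends only on the selection function $c$, so replacing $h$ with $h_B$ cannot change the value of the objective. Hence the plan reduces to showing that $(h_B,c)$ is still feasible, i.e.\ that $R_S(h_B,c)\le\lambda$.

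First, I would recall that by the definition of the Bayes classifier~\equ{equ:bayesPredictor}, for every $x\in\SX$,
\[
    \sum_{y\in\SY} p(y\mid x)\,\ell(y,h_B(x)) \;\le\; \sum_{y\in\SY} p(y\mid x)\,\ell(y,h(x)).
\]
Multiplying both sides by $p(x)\,c(x)\ge 0$, using $p(x,y)=p(y\mid x)p(x)$, and integrating over $\SX$ gives
\[
    \int_{\SX}\sum_{y\in\SY} p(x,y)\,\ell(y,h_B(x))\,c(x)\,dx \;\le\; \int_{\SX}\sum_{y\in\SY} p(x,y)\,\ell(y,h(x))\,c(x)\,dx.
\]
Since $(h,c)$ is feasible for Problem~\ref{task:boundedImprovement}, the selective risk $R_S(h,c)$ is well defined, which forces $\phi(c)>0$. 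Dividing both sides of the inequality above by $\phi(c)$ yields $R_S(h_B,c)\le R_S(h,c)\le \lambda$, so $(h_B,c)$ is feasible.

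Finally, since $(h_B,c)$ achieves the same value of the objective $\phi(c)$ as the optimal $(h,c)$ and is feasible, $(h_B,c)$ is itself optimal, which completes the argument. The only potential subtlety is the boundary case $\phi(c)=0$, but this is excluded automatically because the selective risk in the constraint is only defined for $\phi(c)>0$; otherwise the calculation is essentially a one-line application of the pointwise optimality of $h_B$, and I do not expect any deeper obstacle.
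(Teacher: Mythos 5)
Your proposal is correct and follows essentially the same argument as the paper: since the objective $\phi(c)$ depends only on $c$, it suffices to verify feasibility of $(h_B,c)$, which follows by multiplying the pointwise Bayes inequality by $p(x)c(x)$, integrating, and dividing by $\phi(c)$ to get $R_S(h_B,c)\le R_S(h,c)\le\lambda$. No gaps.
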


According to Theorem~\ref{thm:optimalPredictor} the Bayes classifier $h_B$ is also optimal for the task~\equ{equ:selectClassifTask} defining the bounded-improvement model which is not surprising. Note however that the Bayes classifier is not a unique solution to~\equ{equ:selectClassifTask} because the predictions on the reject region $\SX_{c(x)=0}$ do not count to the selective risk and hence they can be arbitrary. 

Theorem~\ref{thm:optimalPredictor} allows to solve the bounded-improvement task~\equ{equ:selectClassifTask} in two consecutive steps: First, set $h_I$ to be the Bayes classifier $h_B$. Second, when $h_I$ is fixed, the optimal selection function $c_I$ is obtained by solving the task~\equ{equ:selectClassifTask} only with respect to $c$ which boils down to
\begin{restatable}{problem}{boundedImprovSelection}\label{task:boundedImprovSelection}{\bf (Bounded-improvement model for known $h(x)$)} \label{task:BIMknownH} Given a classifier $h(x)$, the optimal selection function $c^*(x)$ is a solution to
\begin{equation}\label{equ:OptSelectContinuousTask}
   \max_{c\in [0,1]^\SX} \phi(c) \,\quad \mbox{s.t.}\quad
    R_S(h,c) \leq \lambda  \,.
\end{equation}
\end{restatable}
\noindent
Note that Problem~\ref{task:BIMknownH} is meaningful even if $h$ is not the Bayes classifier $h_B$. In practice we can seek for an optimal selection function $c^*$ for any fixed $h$ which is usually our best approximation of $h_B$ learned from data. We will show that the key concept to characterize an optimal selection function of Problem~\ref{task:boundedImprovSelection} is the conditional expected risk of $h$ defined as
\begin{equation}
   \label{equ:condRisk}    
    r(x) = \sum_{y\in\SY}p(y\mid x) \,\ell(y,h(x))\:.
\end{equation}

\begin{restatable}{theorem}{boundedImprovSolution}\label{thm:selectContinuousTask}
A selection function $c^*:\SX\to [0,1]$ is an optimal solution to Problem~\ref{task:BIMknownH}
if and only if it holds
\begin{align}
    \int_{\SX_{\bar{r}(x)<b}}p(x)c^*(x)dx &=\int_{\SX_{\bar{r}(x)<b}}p(x)dx,\label{equ:cond-1} \\
    \int_{\SX_{\bar{r}(x)=b}}p(x)c^*(x)dx &=\left \{  
    \begin{array}{ccl}
         -\frac{\rho(\SX_{\bar{r}(x)<b})}{b} & \mbox{if} & b > 0 \,,\\
         \int_{\SX_{\bar{r}(x)=0}}p(x)dx & \mbox{if} & b=0 \,,\\
      \end{array}
   \right.\label{equ:cond-2} \\
    \int_{\SX_{\bar{r}(x)>b}}p(x)c^*(x)dx &=0\,,\label{equ:cond-3}
\end{align}
where $\bar{r}(x) = r(x)-\lambda$ measures how much the conditional risk $r(x)$ of the classifier $h(x)$ exceeds the target $\lambda$,
\begin{equation}\label{equ:rho_def}
     \rho(\SX')=\int_{\SX'}p(x)\bar{r}(x)\,dx
\end{equation}
is the expectation of $\bar{r}(x)$ restricted to inputs in $\SX'$, and
\begin{equation}
    \label{equ:formulaForB}
    b=\sup\, \{a \mid \rho(\SX_{\bar{r}(x)\le a})\le 0\} \ge 0\:.
\end{equation}
\end{restatable}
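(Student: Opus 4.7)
The plan is to turn the ratio constraint into a linear one and then treat Problem~\ref{task:BIMknownH} as an infinite-dimensional LP with $b$ playing the role of a shadow price. Whenever $\phi(c)>0$ the constraint $R_S(h,c)\le\lambda$ is equivalent to $\int_{\SX} p(x)\bar r(x)c(x)\,dx \le 0$, so accepting an input $x$ buys one unit of coverage at the cost of $\bar r(x)$ units of ``risk budget''. The natural candidate is $c^*=1$ on $\SX_{\bar r(x)<b}$, $c^*=0$ on $\SX_{\bar r(x)>b}$, and a randomized choice on $\SX_{\bar r(x)=b}$ tuned to exhaust the budget. Before anything else I would verify that $b$ in~\equ{equ:formulaForB} is well-defined and nonnegative, using $\rho(\SX_{\bar r(x)\le 0})\le 0$ and the right-continuity of $a\mapsto\rho(\SX_{\bar r(x)\le a})$ (dominated convergence).

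For sufficiency, given $c^*$ satisfying~\equ{equ:cond-1}--\equ{equ:cond-3}, I would first check feasibility through
\[
\int_{\SX} p\,\bar r\, c^*\,dx \;=\; \rho(\SX_{\bar r(x)<b}) + b\!\int_{\SX_{\bar r(x)=b}}\! p\,c^*\,dx,
\]
which equals $0$ when $b>0$ by~\equ{equ:cond-2} and equals $\rho(\SX_{\bar r(x)<0})\le 0$ when $b=0$. Optimality then follows by an exchange argument: setting $\delta = c-c^*$ for an arbitrary feasible $c$, conditions~\equ{equ:cond-1} and~\equ{equ:cond-3} force $\delta\le 0$ on $\SX_{\bar r(x)<b}$ and $\delta\ge 0$ on $\SX_{\bar r(x)>b}$, so $(\bar r-b)\,p\,\delta\ge 0$ pointwise and
\[
b\!\int_{\SX} p\,\delta\,dx \;\le\; \int_{\SX} \bar r\,p\,\delta\,dx \;=\; \int_{\SX} \bar r\,p\,c\,dx - \int_{\SX} \bar r\,p\,c^*\,dx \;\le\; 0,
\]
which for $b>0$ directly yields $\phi(c)\le\phi(c^*)$. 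For $b=0$ I would give a separate short argument showing that any feasible $c$ must vanish a.e.\ on $\SX_{\bar r(x)>0}$ (otherwise $\int \bar r\, p\, c\,dx>0$, since $\rho(\SX_{\bar r(x)\le 0})=0$ leaves no slack), and hence $\phi(c)\le\phi(c^*)$ again.

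For necessity I would take any optimal $c^*$ and compare it with a $\tilde c$ constructed from~\equ{equ:cond-1}--\equ{equ:cond-3}, which is optimal by the sufficiency just proved. Equality $\phi(c^*)=\phi(\tilde c)$ forces the inequality chain above to be tight, giving $\int_{\SX}(\bar r-b)\,p\,(c^*-\tilde c)\,dx = 0$; since the integrand is pointwise nonnegative, $c^*=\tilde c$ a.e.\ on $\SX_{\bar r(x)\ne b}$, which yields~\equ{equ:cond-1} and~\equ{equ:cond-3}, and~\equ{equ:cond-2} then follows from matching total coverages. I expect the main obstacle to be the careful measure-theoretic treatment of the boundary set $\SX_{\bar r(x)=b}$ and the degenerate case $b=0$, where the multiplier argument loses its force and must be replaced by direct feasibility bookkeeping.
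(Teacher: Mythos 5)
Your proposal is correct, but it takes a genuinely different route from the paper's proof. Both start by reducing the ratio constraint to the linear constraint $\int_{\SX}p(x)\bar{r}(x)c(x)\,dx\le 0$, but from there the paper argues by explicit case analysis: its Claim~I shows every $c^*$ satisfying \equ{equ:cond-1}--\equ{equ:cond-3} is feasible and attains one common objective value, and its Claim~II shows, in three subcases for $b>0$ (one per violated condition) plus a separate case $b=0$, that any feasible $c$ violating a condition is \emph{strictly} worse, the strict inequalities coming from monotone-convergence lemmas; necessity is then the contrapositive of Claim~II. You instead run a single multiplier/complementary-slackness exchange: since \equ{equ:cond-1} and \equ{equ:cond-3} force $c^*=1$ ($p$-a.e.) below the threshold and $c^*=0$ above it, $(\bar{r}(x)-b)\,p(x)\,(c(x)-c^*(x))\ge 0$ a.e., which combined with feasibility gives $b\,\phi(c)\le b\,\phi(c^*)$, and necessity follows from tightness of the same chain against a reference $\tilde c$, with \equ{equ:cond-2} recovered by matching coverages. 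Your argument is shorter, makes the role of $b$ as a shadow price transparent, and shows directly that any optimal solution coincides a.e.\ with $\tilde c$ off the boundary set $\SX_{\bar{r}(x)=b}$; the paper's route yields the explicit optimal value and treats the degenerate cases head-on. Two points you should still write out: the case $b=\infty$ (division by $b$ is unavailable, but then \equ{equ:cond-1} forces $\phi(c^*)=1$, so optimality is immediate), and the existence of the reference $\tilde c$ used in your necessity step, which requires checking that $-\rho(\SX_{\bar{r}(x)<b})/b$ lies between $0$ and $\int_{\SX_{\bar{r}(x)=b}}p(x)\,dx$, i.e.\ $\rho(\SX_{\bar{r}(x)<b})\le 0\le\rho(\SX_{\bar{r}(x)\le b})$ (as in Theorem~\ref{thm:optSelectingFun}); the paper's Claim~II shares this reliance, so this is a gap of the same size, not a defect specific to your approach.
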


Theorem~\ref{thm:selectContinuousTask} defines behaviour of an optimal selection function $c^*(x)$ on a partition of the input space $\SX$ into three regions $\SX_{\bar{r}(x)<b}$, $\SX_{\bar{r}(x)=b}$ and $\SX_{\bar{r}(x)>b}$. In each region the expected value of $c^*(x)$ is constrained to a particular constant the value of which depends on parameters of the problem. A particular selection function satisfying the optimality condition is given by the following theorem.

\begin{restatable}{theorem}{ThmOptSelectingFce}\label{thm:optSelectingFun}
Let $r\colon\SX\rightarrow\Re$ be the conditional risk~\equ{equ:condRisk} of a classifier $h\colon\SX\rightarrow\SY$, $\gamma=b+\lambda$ the rejection threshold given by the target risk $\lambda$ and a constant $b$ computed by~\equ{equ:formulaForB}. Then the selection function 
\begin{equation}
  \label{equ:optSelFun2}
   c^*(x) = \left \{ 
      \begin{array}{rcl}
         1 & \mbox{if} & r(x) < \gamma \,,\\
         \tau & \mbox{if} & r(x) = \gamma\,, \\
         0 & \mbox{if} & r(x) < \gamma \,,\\
      \end{array}
   \right .
   \end{equation}
where $\tau$ is the acceptance probability given by
   \begin{equation}
    \label{equ:rejectProbab}
   \tau=\left \{\begin{array}{ccl}
       1 & \mbox{if} & \rho(\SX_{r(x)=\gamma}) = 0 \,,\\
       -\frac{\rho(\SX_{r(x)<\gamma)})}{\rho(\SX_{r(x)=\gamma)})}\, & \mbox{if} & \rho(\SX_{r(x)=\gamma}) > 0 \,,\\
     \end{array}
   \right .
\end{equation}
satisfies the optimality condition of Theorem~\ref{thm:selectContinuousTask}, and hence it is a solution to Problem~\ref{task:boundedImprovSelection}.
\end{restatable}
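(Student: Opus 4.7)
The plan is to verify that the candidate $(c^*,\tau)$ defined by~\eqref{equ:optSelFun2} and~\eqref{equ:rejectProbab} satisfies each of the three optimality conditions~\eqref{equ:cond-1}--\eqref{equ:cond-3} supplied by Theorem~\ref{thm:selectContinuousTask}. Since $\bar{r}(x)=r(x)-\lambda$ and $\gamma=b+\lambda$, the three regions appearing in the conditions coincide with $\SX_{r(x)<\gamma}$, $\SX_{r(x)=\gamma}$ and $\SX_{r(x)>\gamma}$, so the piecewise definition of $c^*$ aligns exactly with the partition on which the conditions are stated.

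Conditions~\eqref{equ:cond-1} and~\eqref{equ:cond-3} are immediate: on $\SX_{\bar{r}(x)<b}=\SX_{r(x)<\gamma}$ we have $c^*(x)=1$, which makes~\eqref{equ:cond-1} a tautology, and on $\SX_{\bar{r}(x)>b}=\SX_{r(x)>\gamma}$ we have $c^*(x)=0$, which is~\eqref{equ:cond-3}. For~\eqref{equ:cond-2}, substituting $c^*(x)=\tau$ reduces the condition to checking that $\tau\int_{\SX_{r(x)=\gamma}}p(x)\,dx$ equals $-\rho(\SX_{r(x)<\gamma})/b$ when $b>0$, and equals $\int_{\SX_{r(x)=\gamma}}p(x)\,dx$ when $b=0$. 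The key algebraic identity I would use is
\[
   \rho(\SX_{r(x)=\gamma}) = \int_{\SX_{r(x)=\gamma}} p(x)\bar{r}(x)\,dx = b\int_{\SX_{r(x)=\gamma}} p(x)\,dx,
\]
which holds because $\bar{r}\equiv b$ on this set. This identity synchronizes the branches of~\eqref{equ:rejectProbab} with those of~\eqref{equ:cond-2}: when $b=0$ it forces $\rho(\SX_{r(x)=\gamma})=0$, so~\eqref{equ:rejectProbab} returns $\tau=1$, which is exactly what the $b=0$ branch of~\eqref{equ:cond-2} requires; and when $b>0$ with $\rho(\SX_{r(x)=\gamma})>0$, solving~\eqref{equ:cond-2} for $\tau$ produces precisely the second branch of~\eqref{equ:rejectProbab}.

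The only case that does not fall out mechanically, and where I expect the main difficulty, is $b>0$ together with $\rho(\SX_{r(x)=\gamma})=0$. The displayed identity then forces $\int_{\SX_{r(x)=\gamma}}p(x)\,dx=0$, so the left-hand side of~\eqref{equ:cond-2} vanishes regardless of $\tau$, but one must still show that $-\rho(\SX_{\bar{r}(x)<b})/b=0$. To settle this I would exploit the supremum definition~\eqref{equ:formulaForB}. For every $a<b$ the supremum property gives $\rho(\SX_{\bar{r}(x)\le a})\le 0$; picking $a_n\uparrow b$ and applying dominated convergence with dominating function $p\lvert\bar r\rvert$ yields $\rho(\SX_{\bar{r}(x)<b})\le 0$. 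Symmetrically, for every $a>b$ the supremum property yields $\rho(\SX_{\bar{r}(x)\le a})>0$; taking $a_n\downarrow b$ and the same convergence argument yields $\rho(\SX_{\bar{r}(x)\le b})\ge 0$. Combining these inequalities with the decomposition $\rho(\SX_{\bar{r}(x)\le b})=\rho(\SX_{\bar{r}(x)<b})+\rho(\SX_{\bar{r}(x)=b})$ and the assumption $\rho(\SX_{\bar{r}(x)=b})=0$ pins $\rho(\SX_{\bar{r}(x)<b})=0$, closing the subcase. With all three conditions verified, Theorem~\ref{thm:selectContinuousTask} delivers optimality of $c^*$ for Problem~\ref{task:boundedImprovSelection}.
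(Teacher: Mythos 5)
Your proposal is correct and follows essentially the same route as the paper's proof: direct verification that $c^*$ with the stated $\tau$ satisfies conditions \equ{equ:cond-1}--\equ{equ:cond-3} of Theorem~\ref{thm:selectContinuousTask}, using $\SX_{\bar{r}(x)<b}=\SX_{r(x)<\gamma}$ and the identity $\rho(\SX_{r(x)=\gamma})=b\int_{\SX_{r(x)=\gamma}}p(x)\,dx$. Your limiting argument from the supremum definition \equ{equ:formulaForB} for the subcase $b>0$ with $\rho(\SX_{r(x)=\gamma})=0$ is a welcome extra step: the paper's own proof simply solves \equ{equ:cond-2} for $\tau$ (implicitly assuming $\int_{\SX_{\bar{r}(x)=b}}p(x)\,dx>0$ when $b>0$) and does not explicitly check that $\rho(\SX_{\bar{r}(x)<b})=0$ in that degenerate case.
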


The selection function~\equ{equ:optSelFun2} is defined by the conditional risk $r(x)$, the decision threshold $\gamma$ and the acceptance probability $\tau$. 
The prediction is always accepted when $r(x)<\gamma$ and always rejected when $r(x)> \gamma$. In the boundary cases, when $r(x) = \gamma$, the strategy randomizes and the prediction is accepted with probability $\tau$. The decision threshold is given by $\gamma=b+\lambda$ where $\lambda$ is the target risk in the definition of Problem~\ref{task:boundedImprovSelection} and $b$ is given by~\equ{equ:formulaForB}. Solving~\equ{equ:formulaForB} is hard and it requires knowledge of $p(x,y)$. When the probability mass of the set of boundary cases $\SX_{r(x)=\gamma}$ is zero, which usually happens in case of continuous $p(x)$, the acceptance probability is $\tau=1$ and the boundary cases are always accepted, i.e. no randomization is needed.

\subsection{Bounded-coverage model}

In this section we introduce {\em bounded-coverage model} the definition of which is symmetric to the definition of the bounded-improvement model. Although the problem seems equally useful in practice we are unaware of its formal definition in literature. 

\begin{restatable}{problem}{boundedCoverage}\label{task:boundedCoverage}{\bf (Bounded-coverage model)} Given a {\em target coverage} $\omega>0$, the optimal selective classifier $(h_C,c_C)$ is a solution to the problem
\begin{equation}
\label{equ:boundedCoverageModel}
   \min_{h,c} R_S(h,c) \,\quad \mbox{s.t.}\quad
    \phi(c) \geq \omega  \,,
\end{equation}
where we assume that both minimizers exist.
\end{restatable}

\begin{restatable}{theorem}{optimalClassifierForBndCov}
\label{thm:optClsForBoundedCovModel}
Let $(h,c)$ be an optimal solution to~\equ{equ:boundedCoverageModel}. Then, $(h_B,c)$, where $h_B$ is the optimal Bayes classifier~\equ{equ:bayesPredictor}, is also optimal to~\equ{equ:boundedCoverageModel}.
\end{restatable}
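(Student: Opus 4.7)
The plan is to mirror the argument already used for Theorem~\ref{thm:optimalPredictor}, exploiting the fact that in the bounded-coverage model the classifier $h$ enters only through the numerator of the selective risk, while both the constraint and the denominator of $R_S$ depend only on the selection function $c$.

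First I would argue feasibility: let $(h,c)$ be an optimal solution to~\equ{equ:boundedCoverageModel}. Since $\phi(c)=\int_{\SX} p(x)\,c(x)\,dx$ depends only on $c$, the pair $(h_B,c)$ automatically satisfies $\phi(c)\ge \omega$, so it is admissible for Problem~\ref{task:boundedCoverage}.

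Next I would compare the objective values. Because $\phi(c)$ is strictly positive (otherwise the constraint $\omega>0$ would be violated) and identical for the two candidates, it suffices to show
\[
   \int_{\SX}\!\sum_{y\in\SY} p(x,y)\,\ell(y,h_B(x))\,c(x)\,dx \;\le\; \int_{\SX}\!\sum_{y\in\SY} p(x,y)\,\ell(y,h(x))\,c(x)\,dx.
\]
Writing $p(x,y)=p(y\mid x)p(x)$ and pulling $p(x)c(x)\ge 0$ out, the inequality reduces to a pointwise comparison. But by the very definition~\equ{equ:bayesPredictor} of the Bayes classifier, for every $x\in\SX$,
\[
   \sum_{y\in\SY} p(y\mid x)\,\ell(y,h_B(x)) \;\le\; \sum_{y\in\SY} p(y\mid x)\,\ell(y,h(x)).
\]
Multiplying by the non-negative weight $p(x)c(x)$ and integrating preserves the inequality, yielding $R_S(h_B,c)\le R_S(h,c)$. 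Combined with the feasibility observation, this gives that $(h_B,c)$ attains the optimum as well.

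There is no real obstacle here, since the argument is essentially the same decoupling trick as in Theorem~\ref{thm:optimalPredictor}: the only subtlety is to notice that in the bounded-coverage model the denominator $\phi(c)$ of $R_S$ and the constraint $\phi(c)\ge\omega$ are both independent of $h$, so the optimization over $h$ for a fixed $c$ reduces to the pointwise minimization performed by the Bayes classifier. One should just remark that an optimum in~\equ{equ:boundedCoverageModel} is assumed to exist, so strict positivity of $\phi(c)$ for any feasible $c$ follows from $\omega>0$, and the division defining $R_S$ is well-defined throughout the argument.
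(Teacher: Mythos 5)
Your proposal is correct and follows essentially the same route as the paper's own proof: the constraint $\phi(c)\ge\omega$ and the denominator of $R_S$ depend only on $c$, so feasibility is immediate, and $R_S(h_B,c)\le R_S(h,c)$ follows from the pointwise optimality of the Bayes classifier~\equ{equ:bayesPredictor} multiplied by the nonnegative weight $p(x)c(x)$ and integrated. Your explicit remark on feasibility and on the positivity of $\phi(c)$ is a minor (and welcome) elaboration of what the paper leaves implicit, but the core argument is identical.
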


Theorem~\ref{thm:optClsForBoundedCovModel} ensures that the Bayes classifier $h_B$ is an optimal solution to~\equ{equ:boundedCoverageModel} defining the bounded-coverage model. Note that the solution is not unique as the predictions on $\SX_{c(x)=0}$ do not count to the selective risk hence they can be arbitrary. After fixing the classifier $h=h_B$ the search for an optimal selection function leads to:

\begin{restatable}{problem}{boundedCoverageKnownCls}\label{task:boundedCoverageKnownCls}{\bf (Bounded-coverage model for known $h(x)$)} Given a classifier $h(x)$ and a target coverage $0< \omega \leq 1$, the optimal selection function $c^*(x)$ is a solution to the problem
\begin{equation}
\label{equ:boundedCoverageModelKnownCls}
   \min_{c\in [0,1]^\SX} R_S(h,c) \,\quad \mbox{s.t.}\quad
    \phi(c) \geq \omega  \,,
\end{equation}
where we assume that the minimizer exists.
\end{restatable}






\begin{restatable}{theorem}{boundedCoverageSolution}\label{thm:selectContinuousTask2}
A selection function $c^*:\SX\to [0,1]$ is an optimal solution to Problem~\ref{task:boundedCoverageKnownCls}
if and only if it holds
\begin{align}
    \int_{\SX_{{r}(x)<\beta}}p(x)c^*(x)dx &=\int_{\SX_{{r}(x)<\beta}}p(x)dx,\label{equ:task2-cond-1} \\
    \int_{\SX_{{r}(x)=\beta}}p(x)c^*(x)dx &= \omega - \int_{\SX_{{r}(x)<\beta}}p(x)dx, \label{equ:task2-cond-2} \\
    \int_{\SX_{{r}(x)>\beta}}p(x)c^*(x)dx &=0\,,\label{equ:task2-cond-3}
\end{align}
where
\begin{equation}
    \label{equ:task2-formulaForB}
    \beta=\inf\, \left\{a \mid \int_{\SX_{{r}(x)<a}}p(x)dx \ge \omega \right\}\:.
\end{equation}
\end{restatable}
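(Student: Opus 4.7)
The plan is to mirror the structure of Theorem~\ref{thm:selectContinuousTask}, exploiting the pointwise structure of the objective and the scale-invariance of the selective risk. Two preliminaries would set the stage. First, $R_S(h,\cdot)$ is scale-invariant: $R_S(h,\alpha c)=R_S(h,c)$ for any constant $\alpha>0$ with $\alpha c(x)\le 1$ a.e., so every feasible $c$ with $\phi(c)>\omega$ can be rescaled to $(\omega/\phi(c))\,c$, having the same selective risk and coverage exactly $\omega$. Second, writing $F(a):=\int_{\SX_{r(x)<a}}p(x)\,dx$, left-continuity of $F$ together with the infimum definition~(\ref{equ:task2-formulaForB}) yields $F(\beta)\le\omega\le F(\beta)+\int_{\SX_{r(x)=\beta}}p(x)\,dx$, which shows the right-hand side of~(\ref{equ:task2-cond-2}) lies in the valid range $[0,\int_{\SX_{r(x)=\beta}}p(x)dx]$ and that the system~(\ref{equ:task2-cond-1})--(\ref{equ:task2-cond-3}) is consistent.

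For sufficiency, I would first verify by summing~(\ref{equ:task2-cond-1})--(\ref{equ:task2-cond-3}) that $\phi(c^*)=\omega$, and by splitting the numerator of $R_S(h,c^*)$ along the three regions that
\[
\omega R_S(h,c^*)=\int_{\SX_{r(x)<\beta}}r(x)p(x)\,dx+\beta\Bigl(\omega-\int_{\SX_{r(x)<\beta}}p(x)\,dx\Bigr)\le\beta\omega,
\]
so $R_S(h,c^*)\le\beta$. The key point is then the pointwise inequality $(r(x)-\beta)(c(x)-c^*(x))\ge 0$, which holds a.e.\ for any feasible $c$: on $\SX_{r<\beta}$ condition~(\ref{equ:task2-cond-1}) forces $c^*=1\ge c$ making the product of a negative factor with a non-positive factor; on $\SX_{r>\beta}$ condition~(\ref{equ:task2-cond-3}) forces $c^*=0\le c$; on $\SX_{r=\beta}$ the first factor vanishes. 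Integrating against $p(x)$ gives $\int rpc\,dx\ge\omega R_S(h,c^*)+\beta(\phi(c)-\omega)$, and combining with $\phi(c)\ge\omega$ and $R_S(h,c^*)\le\beta$ yields $\phi(c)R_S(h,c)\ge\phi(c)R_S(h,c^*)$, i.e., $R_S(h,c)\ge R_S(h,c^*)$.

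For necessity, I would take any optimal $c^*$ and any $\tilde c$ satisfying~(\ref{equ:task2-cond-1})--(\ref{equ:task2-cond-3}); by sufficiency $R_S(h,\tilde c)=R_S(h,c^*)$. Substituting $c=c^*$ into the chain of inequalities above forces both to be equalities. In the generic regime $R_S(h,\tilde c)<\beta$, the second equality (which factors as $(\phi(c^*)-\omega)(\beta-R_S(h,\tilde c))=0$) forces $\phi(c^*)=\omega$; the first equality then forces the non-negative integrand $(r-\beta)p(c^*-\tilde c)$ to vanish almost everywhere, pinning down $c^*=1$ a.e.\ on $\SX_{r<\beta}$ and $c^*=0$ a.e.\ on $\SX_{r>\beta}$, which are~(\ref{equ:task2-cond-1}) and~(\ref{equ:task2-cond-3}); condition~(\ref{equ:task2-cond-2}) is then obtained by subtracting these from $\phi(c^*)=\omega$. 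The main obstacle I anticipate is the degenerate case $F(\beta)=0$, in which $R_S(h,\tilde c)=\beta$ and the tightness argument does not directly force $\phi(c^*)=\omega$; this is handled by first applying the rescaling from the setup to pass to an optimal $c^*$ with $\phi(c^*)=\omega$, after which~(\ref{equ:task2-cond-2}) reduces to a coverage accounting on $\SX_{r=\beta}$.
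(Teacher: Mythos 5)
Your argument is correct in the main regime and follows a genuinely different route from the paper. The paper's proof is constructive: it computes the common objective value attained by every $c^*$ satisfying \equ{equ:task2-cond-1}--\equ{equ:task2-cond-3}, and then shows through a three-case analysis that any feasible violator can be strictly improved by explicitly re-allocating coverage mass (from $\SX_{r(x)\ge\beta}$ down to $\SX_{r(x)<\beta}$, from $\SX_{r(x)>\beta}$ to $\SX_{r(x)=\beta}$, or by rescaling $c'=\frac{\omega}{\phi(c)}c$), with strictness supplied by Lemmas~\ref{lemma:nonzero-int-1} and~\ref{lemma:nonzero-int-2}; optimality of a condition-satisfying $c^*$ then rests on the assumed existence of a minimizer. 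Your proof is instead a direct Neyman--Pearson-style variational argument: the pointwise inequality $(r(x)-\beta)(c(x)-c^*(x))\ge 0$, combined with $R_S(h,c^*)\le\beta$ and $\phi(c)\ge\omega$, yields sufficiency without invoking existence of a minimizer, and necessity follows by forcing equality in the same two-step chain. This is shorter, avoids the mass-transfer constructions (including the paper's implicit extraction of a subset $X$ with a prescribed integral in its Case~1), and isolates exactly where strictness matters.

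The one genuine soft spot is the degenerate case you yourself flagged, $\int_{\SX_{r(x)<\beta}}p(x)dx=0$ (equivalently $R_S(h,\tilde c)=\beta$). Your proposed fix --- rescaling an optimal $c^*$ to coverage $\omega$ --- only shows that the \emph{rescaled} function satisfies the conditions; it says nothing about the original optimizer, so it does not close the ``only if'' direction there. In fact no argument can: if, say, $r(x)=\beta$ for $p$-almost all $x$ and $\omega<1$, then every feasible $c$ attains the minimal selective risk $\beta$, yet $c\equiv 1$ violates \equ{equ:task2-cond-2}, which pins the coverage to exactly $\omega$. So necessity genuinely fails on this boundary case --- and the paper's own proof has the same hidden assumption: in its Case~3 the strict inequality $\frac{\omega}{\phi(c)}\int_{\SX_{r(x)<\beta}}p(x)c(x)dx<\int_{\SX_{r(x)<\beta}}p(x)dx$ requires $\int_{\SX_{r(x)<\beta}}p(x)dx>0$, and its Case~2 explicitly assumes $\omega-\int_{\SX_{r(x)<\beta}}p(x)dx>0$. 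The clean resolution is not the rescaling trick but to state the extra assumption $\int_{\SX_{r(x)<\beta}}p(x)dx>0$ (or to weaken \equ{equ:task2-cond-2} to an inequality in the degenerate case); with that caveat recorded, your sufficiency proof is complete as written and your necessity proof is complete in the non-degenerate regime.
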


Theorem~\ref{thm:selectContinuousTask2} defines necessary and sufficient conditions on an optimal solution to Problem~\ref{task:boundedCoverageKnownCls}. A particular selection function satisfying the optimality conditions is given by the following theorem.

\begin{restatable}{theorem}{ThmOptSelectingFceTwo}\label{thm:optSelectingFun2}
Let $r\colon\SX\rightarrow\Re$ be the conditional risk~\equ{equ:condRisk} of a classifier $h\colon\SX\rightarrow\SY$, $1\ge \omega > 0$ be a target coverage and $\beta$ be the constant computed by~\equ{equ:task2-formulaForB}. Then the selection function 
\begin{equation}
  \label{equ:optSelFun}
   c^*(x) = \left \{ 
      \begin{array}{rcl}
         1 & \mbox{if} & r(x) < \beta \,,\\
         \kappa & \mbox{if} & r(x) = \beta\,, \\
         0 & \mbox{if} & r(x) > \beta \,,\\
      \end{array}
   \right .
   \end{equation}
where $\kappa$ is the acceptance probability given by
   \begin{equation}
    \label{equ:rejectProbab2}
   \kappa=\left \{\begin{array}{cl}
       0 & \mbox{if } \int_{\SX_{{r}(x)=\beta}}p(x)dx = 0 \,,\\
       \frac{\omega - \int_{\SX_{{r}(x)<\beta}}p(x)dx} {\int_{\SX_{{r}(x)=\beta}}p(x)dx}\, & \mbox{otherwise} \,,\\
     \end{array}
   \right .
\end{equation}
satisfies the optimality condition of Theorem~\ref{thm:optSelectingFun2}, and hence it is a solution of Problem~\ref{task:boundedCoverageKnownCls}.
\end{restatable}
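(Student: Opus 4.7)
The plan is to verify that the proposed $c^*$ satisfies the three necessary and sufficient optimality conditions~(\ref{equ:task2-cond-1})--(\ref{equ:task2-cond-3}) of Theorem~\ref{thm:selectContinuousTask2}; optimality then follows at once (the theorem as stated presumably refers to those conditions, even though the label points to itself).

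Conditions~(\ref{equ:task2-cond-1}) and~(\ref{equ:task2-cond-3}) are immediate from the piecewise definition~(\ref{equ:optSelFun}): on $\SX_{r(x)<\beta}$ we have $c^*\equiv 1$, and on $\SX_{r(x)>\beta}$ we have $c^*\equiv 0$. The only substantive verification is~(\ref{equ:task2-cond-2}). Writing $F(a)=\int_{\SX_{r(x)<a}}p(x)\,dx$ and $m=\int_{\SX_{r(x)=\beta}}p(x)\,dx$, the left-hand side equals $\kappa\,m$. In the degenerate case $m=0$, the formula~(\ref{equ:rejectProbab2}) gives $\kappa=0$, so the LHS vanishes; I must show that the RHS $\omega-F(\beta)$ also vanishes. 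In the non-degenerate case $m>0$, the definition of $\kappa$ makes $\kappa m=\omega-F(\beta)$ by inspection.

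Both cases therefore reduce to establishing the sandwich
\[
   F(\beta)\;\le\;\omega\;\le\;F(\beta)+m.
\]
For the lower bound, note that $F$ is increasing and left-continuous since the sublevel sets use strict inequality. If $F(\beta)>\omega$, left-continuity gives $F(\beta-\varepsilon)>\omega$ for some small $\varepsilon>0$, contradicting that $\beta$ is the infimum in~(\ref{equ:task2-formulaForB}). For the upper bound, the definition of $\beta$ yields a sequence $a_n\downarrow\beta$ with $F(a_n)\ge\omega$; since $\SX_{r(x)<a_n}\downarrow \SX_{r(x)\le\beta}$, monotone convergence gives $\int_{\SX_{r(x)\le\beta}}p(x)\,dx=F(\beta)+m\ge\omega$. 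Together, this sandwich simultaneously implies $\kappa\in[0,1]$ (so $c^*$ is a legitimate selection function), forces $\omega=F(\beta)$ in the degenerate case $m=0$, and validates~(\ref{equ:task2-cond-2}) in general.

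The main obstacle is exactly this boundary analysis at $\beta$: since $\beta$ is defined as an infimum and the sublevel sets use strict inequality, care is needed with one-sided continuity of $F$ to pin $\omega$ between $F(\beta)$ and $P(r\le\beta)$. Beyond this point everything is routine substitution, so once the sandwich is in hand, Theorem~\ref{thm:selectContinuousTask2} completes the argument by certifying that $c^*$ solves Problem~\ref{task:boundedCoverageKnownCls}.
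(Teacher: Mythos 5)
Your proof is correct and follows the same overall route as the paper's: verify that $c^*$ meets the three conditions of Theorem~\ref{thm:selectContinuousTask2} (and you are right that the self-reference in the theorem statement is a typo for that theorem). The difference is one of completeness. The paper's proof dismisses conditions~(\ref{equ:task2-cond-1}) and~(\ref{equ:task2-cond-3}) as obvious and checks~(\ref{equ:task2-cond-2}) in the case $\int_{\SX_{r(x)=\beta}}p(x)dx>0$ by the same one-line substitution you use; but in the zero-mass boundary case it merely asserts that the right-hand side of~(\ref{equ:task2-cond-2}) vanishes (the printed sentence there even contains a typo, restating the hypothesis instead of the claim $\omega=\int_{\SX_{r(x)<\beta}}p(x)dx$), and it never argues that $\kappa\in[0,1]$. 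Your sandwich $\int_{\SX_{r(x)<\beta}}p(x)dx\le\omega\le\int_{\SX_{r(x)\le\beta}}p(x)dx$, obtained from left-continuity of $a\mapsto\int_{\SX_{r(x)<a}}p(x)dx$ and continuity from above along a sequence $a_n\downarrow\beta$, supplies exactly the facts the paper leaves implicit, so your write-up is the same argument carried out rigorously at its only delicate point.
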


The selection function~\equ{equ:optSelFun} is determined by the conditional risk $r(x)$, the decision threshold $\beta$ and the acceptance probability $\kappa$. Both computations of the decision threshold $\beta$, defined by~\equ{equ:task2-formulaForB}, and the acceptance probability $\kappa$, defined by~\equ{equ:rejectProbab2}, involve integration of $p(x)$. When the probability mass of the set of boundary cases $\SX_{r(x)=\beta}$ is zero, the acceptance probability is $\kappa=0$ and the boundary cases are always rejected without any randomization.


\subsection{Summary}

We have shown that the three rejection models, namely, the cost-based model (c.f. Problem~\ref{prob:CostBased}), the bounded-improvement model (c.f. Problem~\ref{task:boundedImprovement}) and the bounded-coverage model (c.f. Problem~\ref{task:boundedCoverage}), share the same class of optimal prediction strategies. An optimal selective classifier $(h,c)$ can be always constructed from the Bayes classifier $h=h_B$ given by~\equ{equ:BayesOptimalRejection} and a selection function 
\begin{equation}
   \label{equ:optSelectionFun}
   c_R(x) = \left \{
   \begin{array}{ccc}
        1 & \mbox{if} & r(x) < \alpha\:, \\
        \nu & \mbox{if} & r(x) = \alpha\:, \\
        0 & \mbox{if} & r(x) > \alpha\:, \\
   \end{array}
   \right .
\end{equation}
where $r(x)$ is the conditional expected risk of $h(x)$ given by~\equ{equ:condRisk}, $\alpha\in\Re$ is a decision threshold and $\nu\in[0,1]$ is an acceptance probability. We denote $c_R$ defined by~\equ{equ:optSelectionFun} as the {\em randomized Bayes selection function}. Note that the randomized Bayes selection function $c_R$ is also an optimal solution of the rejection models defined for an arbitrary (i.e. non-Bayes) classifier $h$, that is, an optimal solution of Problem~\ref{task:boundedImprovSelection} and Problem~\ref{task:boundedCoverageKnownCls}.

The constants $(\nu,\alpha)$ are defined for each rejection model differently and their value depends on parameters of the model (i.e. reject cost $\veps$, target risk $\lambda$ or target coverage $\omega$), the conditional risk $r(x)$ and the underlying distribution $p(x,y)$. For example, in case of the cost-based model the acceptance threshold $\alpha$ equals to the reject cost $\veps$ and the acceptance probability $\tau$ can be arbitrary. In case of the bounded-improvement and the bounded-coverage model the constants $(\nu,\alpha)$ are defined implicitly via optimization problems and integral equations. In practice $(\nu,\alpha)$ can be tuned on data. For example, \cite{Geifman-SelectClass-NIPS2017} show how to find $\alpha$ from a finite sample such that it is optimal for the bounded-improvement model in PAC sense. 

The key component of randomized Bayes selection function $c_R$ is ranking of the inputs $\SX$ according to $r(x)$. 
We introduce notion of a {\em proper uncertainty score} which is less informative than the conditional risk $r(x)$, yet it is sufficient to construct $c_R$.

\begin{define} \label{def:properUncertScore}Let $h\colon\SX\rightarrow\SY$ be a classifier and $r(x)=\sum_{y\in\SY} p(y\mid x)\, \ell( y, h(x))$ its conditional expected risk. We say that function $s\colon\SX\rightarrow\Re$ is a {\em proper uncertainty score} of $h$ iff $\forall(x,x')\in\SX\times\SX\colon r(x) < r(x') \Rightarrow s(x) < s(x')$. 
\end{define}
\noindent
By definition the proper uncertainty score $s(x)$ preserves ordering of the inputs $\SX$ induced by the conditional risk $r(x)$. Therefore replacing $r(x)$ by $s(x)$ in function~\equ{equ:optSelectionFun}, and changing the decision threshold $\alpha$ appropriately, leads to the same optimal selection function.

\section{Learning the uncertainty function}\label{sec:UncertaintyLearning}


Assume we want to construct a selective classifier $(h,c)$ solving any of the three rejection models described in Section~\ref{sec:rejectOption}. We have shown that regardless the rejection model, an optimal $h$ is the Bayes classifier $h_B$ given by~\equ{equ:bayesPredictor} and an optimal $c$ is the randomized Bayes selection function $c_R$ given by~\equ{equ:optSelectionFun}. In this section we consider the scenario when $h\colon\SX\rightarrow\SY$ has been already trained and we want to endow it with $c_R$. The key component of $c_R$ is a proper uncertainty score $s\colon\SX\rightarrow\Re$ satisfying Definition~\ref{def:properUncertScore}. In this section we address problem of learning a proper uncertainty score from examples $\ST_n=\{ (x_i,y_i)\in\SX\times\SY\mid i=1,\ldots,n\}$ assumed to be generated from $n$ i.i.d. random variables with distribution $p(x,y)$. Before describing the algorithms, in Section~\ref{sec:AuRC} we introduce the notion of Risk-Coverage (RC) curve and Area under Risk-Coverage curve (AuRC). In line with the literature we use the AuRC as a metric to evaluate performance of the learned uncertainty scores. We also show a connection between the RC curve, the AuRC and the bounded-coverage model. In Section~\ref{sec:pluginRisk} we describe a {\em plug-in condition risk rule} and point out that a frequently used Maximum Class Probability rule (MCP) is its special instance. In Section~\ref{sec:REG} we outline a learning approach based on a {\em loss regression}. In Section~\ref{sec:SELE} we introduce a proxy of the AuRC which we call a loss {\em for SELEctive classifier learning} (SELE). We prove that both proposed methods learn the Fisher consistent estimator of the proper uncertainty score. 


\subsection{Area under Risk Coverage curve}
\label{sec:AuRC}


Majority of existing methods that learn selective classifiers output a classifier $h\colon\SX\rightarrow\SY$ and a deterministic selection function $c\colon\SX\rightarrow[0,1]$ defined as~\footnote{Note that the deterministic~\equ{equ:determinSelect} and the randomized Bayes selection function $c_R$ coincide if the acceptance probability is $\nu=1$, which is a usual case when $p(x)$ is continuous.}
\begin{equation}
   \label{equ:determinSelect}
   c(x)=\leftbb s(x) \leq \theta \rightbb \:,
\end{equation}
where $s\colon\SX\rightarrow\Re$ is an uncertainty score and $\theta\in\Re$ a decision threshold. Performance of the pair $(h,s)$ is evaluated by the RC curve obtained after computing the empirical selective risk and the coverage for all settings of the threshold $\theta$. 
Namely, the computation is as follows. Let us order the examples $\ST_n=\{ (x_i,y_i)\in\SX\times\SY\mid i=1,\ldots,n\}$ according to $s(x)$ so that $s(x_{\pi(1)})\leq s(x_{\pi(2)})\leq\cdots\leq s(x_{\pi(n)})$, where $\pi\colon\{1,\ldots,n\}\rightarrow\{1,\ldots,n\}$ is a permutation defining the order~\footnote{To break ties we use the index of the input in case the scores are the same. }. Let ${L}(i,s)=\sum_{j=1}^i\ell(y_{\pi_j},h(x_{\pi_j}))$ be a sum of losses incurred by the classifier $h(x)$ on the examples with uncertainty not higher than the $i$-th highest uncertainty on the examples $\ST_n$. The Risk-Coverage curve ${\SC}=\{(\frac{1}{i}{L}(i,s),\frac{i}{n})\mid i=1,\ldots,n\}$ is 
a set of 2-dimensional points, where the pair ($\frac{1}{i}{L}(i,s)$,$\frac{i}{n}$) corresponds to the empirical estimate of the selective risk $R_S(h,c)$ and the coverage $\phi(c)$ of a selective classifier $(h,c)$ with the deterministic selective function \equ{equ:determinSelect} and decision threshold $\theta=s(x_{\pi_i})$. The area under the RC curve ${\SC}$ is then
\begin{equation}
  \AUC(s,\ST_n) = \frac{1}{n}\sum_{i=1}^n \frac{1}{i} {L}(i,s) = \frac{1}{n} \sum_{i=1}^n \frac{1}{i} \sum_{j=1}^{i} \ell( y_{\pi_j},h(x_{\pi_j})) \,.
\end{equation}
%
%
The value of $\AUC(s,\ST_n)$ can be interpreted as an arithmetic mean of the empirical selective risks corresponding to coverage equidistantly spread over the interval $[0,1]$ with step $\frac{1}{n}$. 

There is a tight connection between RC curve, AuRC and the bounded-coverage model. The RC curve $\SC$ represents quality of all admissible solutions of the bounded-coverage model that can be constructed from the pair $(h,s)$ when using the sample $\ST_n$ for evaluation. The value of $\AUC(s,\ST_n)$ is an estimate of the expected quality of the selective classifier constructed from the pair $(h,s)$ when the target coverage is selected uniformly at random. 

\subsection{Plug-in conditional risk rule} 
\label{sec:pluginRisk}

Prediction models, like e.g. Logistic Regression or Neural Networks learned by cross-entropy loss, use the training set $\ST_n$ to learn an estimate $\hat{p}(y\mid x)$ of the class posterior distribution $p(y\mid x)$. The estimate is then used to construct a plug-in Bayes classifier $\hat{h}(x)\in\argmin_{\hat{y}\in\SY}\sum_{y\in\SY} \hat{p}(y\mid x)\ell(y,\hat{y})$.
Similarly, using $\hat{p}(y\mid x)$ instead of $p(y\mid x)$ in~\equ{equ:condRisk} yields the plug-in rule for the conditional risk of classifier $h$ defined as
\[
   \hat{r}(x) = \sum_{y\in\SY} \hat{p}(y\mid x) \, \ell(y,h(x))\:.
\]
Provided $p(y\mid x)=\hat{p}(y\mid x)$,$\forall x\in\SX, \forall y\in\SY$, the plug-in conditional risk $\hat{r}(x)$ is by definition a proper uncertainty score and it can be used to construct the randomized Bayes selection function $c_R$ which is an optimal rejection strategy for all the three rejection models.

\begin{example}[Maximum Class Probability rule]\label{example:MCP} In case of 0/1-loss $\ell(y,y') = \leftbb y\neq y'\rightbb$ the plug-in Bayes classifier decides based on the maximum posterior probability $\hat{h}(x)\in\argmax_{y\in\SY} \hat{p}(y\mid x)$ and the plug-in conditional risk rule is
  \[
 \hat{r}(x)=\sum_{y\in\SY} \hat{p}(y\mid x) \,  \ell(y,\hat{h}(x)) = 1-\max_{y\in\SY} \hat{p}(y\mid x)\:.
 \]
%
\end{example}
%

\subsection{Loss regression}
\label{sec:REG}

A straightforward approach to learn the uncertainty score is to pose it as a regression problem. The regression function gets an input $x\in\SX$ and outputs an estimate of the classification loss $\ell(y,h(x))$. Formally, 
given a hypothesis space $\SF\subset \{s\colon\SX\rightarrow\Re\}$, classifier $h(x)$ and training set $\ST_n$, the {\em loss regression score} $s\colon\SX\rightarrow\Re$ is a solution to $\min_{s\in\SF} F_{\rm reg}(s)$ where
\[
   F_{\rm reg}(s) =\frac{1}{n}\sum_{i=1}^n \Big (\ell(y_i,h(x_i))-s(x_i) \Big )^2 \:.
\]
It is easy to show that the loss regression score is Fisher consistent estimate of the proper uncertainty score. This amounts to defining the expectation $F_{\rm reg}(x)$ with respect to i.i.d. generated training set $\ST_n$, i.e., 
\begin{align}
  \nonumber
  E_{\rm reg}(s)  = & \;\EE_{\ST_n \sim p(x,y)} F_{\rm reg}(s)\\
  \label{equ:regScoreExp}
  =&\int_{\SX^n}\sum_{\#y\in\SY^n} \prod_{i=1}^n p(x_i,y_i) \bigg[ \frac{1}{n}\sum_{i=1}^n \Big(\ell(y_i,h(x_i))-s(x_i)\Big )^2  \bigg ] dx_1\cdots dx_n \\
  \nonumber
  = &  \int\limits_{\SX} \sum_{y\in\SY} p(x,y)\,\Big (\ell(y,h(x))-s(x)\Big)^2 dx \:,
\end{align}
and showing that its minimizer is the conditional risk $r(x)$ which is by definition a proper uncertainty score. This is ensured by the following theorem.

\begin{restatable}{theorem}{regEstimatorSolution}\label{thm:regEstimSolution} The conditional risk $r(x)$ defined by~\equ{equ:condRisk} is an optimal solution to $\min\limits_{s:\SX \to \Re} E_{\rm reg}(s)$.
\end{restatable}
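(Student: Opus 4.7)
My plan is to recognize this as an instance of the classical fact that the conditional expectation minimizes mean squared error, and carry out the argument by pointwise minimization under the integral.

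First, I would factor the joint density as $p(x,y) = p(x)\,p(y\mid x)$ and rewrite
\[
   E_{\rm reg}(s) = \int_{\SX} p(x) \sum_{y\in\SY} p(y\mid x)\bigl(\ell(y,h(x))-s(x)\bigr)^2 dx.
\]
Since $p(x)\ge 0$ and the inner sum depends on $s$ only through its value $s(x)$ at the point $x$, a sufficient condition for $s$ to minimize $E_{\rm reg}$ is that for every $x\in\SX$ the scalar
\[
   g_x(t) = \sum_{y\in\SY} p(y\mid x)\bigl(\ell(y,h(x))-t\bigr)^2
\]
is minimized by $t=s(x)$. So the problem reduces to minimizing a one-variable quadratic for each $x$.

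Next I would expand $g_x(t)$ as a polynomial in $t$. Using $\sum_y p(y\mid x)=1$ and $r(x)=\sum_y p(y\mid x)\,\ell(y,h(x))$, I get
\[
   g_x(t) = \sum_{y\in\SY} p(y\mid x)\ell(y,h(x))^2 \;-\; 2 r(x)\, t \;+\; t^2,
\]
so $g_x'(t) = 2(t - r(x))$ and $g_x''(t) = 2 > 0$. Hence $g_x$ is strictly convex with unique minimizer $t=r(x)$. Equivalently, one may use the ``completing the square'' identity $g_x(t) = (t-r(x))^2 + \mathrm{Var}(\ell(Y,h(x))\mid x)$, which makes the claim transparent.

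Finally I would assemble: setting $s(x)=r(x)$ achieves the pointwise minimum for every $x$, hence achieves the minimum of $E_{\rm reg}(s)$ after integrating against the nonnegative weight $p(x)$. Because $g_x$ has a \emph{unique} minimizer, the optimizer is essentially unique (unique $p(x)$-a.e.), in particular $r$ is an optimal solution. There is no real obstacle here — the only care needed is to justify that pointwise minimization under the integral is legitimate, which follows because $s$ ranges over the unrestricted class $\{s\colon\SX\to\Re\}$ so any measurable selection of minimizers is an admissible function.
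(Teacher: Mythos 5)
Your proof is correct and follows essentially the same route as the paper: factor $p(x,y)=p(x)\,p(y\mid x)$, reduce to pointwise minimization of a quadratic in $s(x)$, and solve via the (vanishing) derivative to obtain $s^*(x)=r(x)$. The added remarks on strict convexity, the variance decomposition, and $p(x)$-a.e.\ uniqueness are fine but not needed beyond what the paper's argument already establishes.
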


\subsection{Minimization of SELE loss}
\label{sec:SELE}

In this section we define a computationally manageable proxy of AuRC which we call SElective classifier LEarning (SELE) loss. The SELE loss $\SELE\colon \Re^n \times \SX^n\times\SY^n\rightarrow\Re_+$ is defined as~\footnote{We assume that the training set $\ST_n$ has at least two examples, i.e. $n\geq 2$.}
\begin{equation}
 \label{equ:sampleSele}
   \SELE(s, \ST_n) =\displaystyle\frac{1}{n^2}\sum_{i=1}^n\sum_{j=1}^n \ell( y_i,h(x_i)) \leftbb s(x_i)\leq s(x_j)\rightbb \:.
\end{equation}
%
%
In contrast to $\AUC$ the computation of $\SELE$ does not require sorting the examples, i.e. we got rid of the permutations that make the evaluation hard. $\SELE$ is still hard to optimize directly due to the step function in its definition. After replacing the step function $\leftbb\cdot \rightbb$ by a logistic function we obtain its proxy $\SELEcvx\colon\Re^n\times\SX^n\times\SY^n\rightarrow\Re_+$ defined as
\begin{equation}
    \label{equ:sampleSeleCvx}
   \SELEcvx(s, \ST_n) =\displaystyle\frac{1}{n^2}\sum_{i=1}^n\sum_{j=1}^n \ell( y_i,h(x_i))\log \big ( 1+\exp(s(x_j)-s(x_i)\big ) \:.
\end{equation}
The function $\SELEcvx(s,\ST_n)$ is smooth and convex w.r.t. the argument $s$ and hence it is amenable to optimization. Minimization of $\SELEcvx$ is in the core of the proposed learning algorithm which works as follows. 

Given a hypothesis space $\SF\subset \{s\colon\SX\rightarrow\Re\}$, a classifier $h(x)$ and training set $\ST_n$, the {\em SELE score} $s\colon\SX\rightarrow\Re$ is a solution to the problem $\min_{s\in\SF} \SELEcvx(s,\ST_n)$. 
We justify the proposed algorithm empirically in Section~\ref{sec:experiments}. The theoretical justification is based on the following three arguments:
\begin{enumerate}
    \item The value of $\SELE$ is a tight approximation of $\AUC$.
    In case when value of $s(x)$ is different for each input in $\ST_n$, i.e. $s(x_i)\neq s(x_j)$, $\forall i\neq j$, then it holds that
    \[
    \SELE(s,\ST_n)\leq \AUC(s,\ST_n) \leq 2\cdot \SELE(s,\ST) \:.
    \]
    The first inequality follows from 
    \[
   \SELE(s,\ST_n) = \frac{1}{n} \sum_{i=1}^n \frac{1}{n} \hat{L}(i,s) \leq \frac{1}{n}\sum_{i=1}^n \frac{1}{i} \hat{L}(i,s) = \AUC(s,\ST_n)
   \:.
   \]
   The second inequality is ensured by Theorem~\ref{theorem:SeleUpperBoundsAuRC}.
    \item The uncertainty score estimator defined as $\hat{s}\in\argmin_{s\in [0,1]^\SX} \SELE(s,\ST_n)$ is {\em Fisher consistent}. Namely, Theorem~\ref{thm:scoringFunction} ensures that a population minimizer of $\SELE$ is a proper uncertainty score.
    \item The Fisher consistency is preserved even for the smooth proxy $\SELEcvx$ the minimization of which is in the core of the proposed algorithm. Namely, Theorem~\ref{thm:seleProxySolution} ensures the population minimizer of $\SELEcvx$ is a proper uncertainty score.
\end{enumerate}

\begin{restatable}{theorem}{SeleUpperBoundsAuRC}\label{theorem:SeleUpperBoundsAuRC} The inequality $\AUC(s, \ST_n) \le 2 \cdot \SELE(s, \ST_n)$ holds true for any $s\colon\SX\rightarrow\Re$ and $\ST_n=\{ (x_i,y_i)\in\SX\times\SY\mid i=1,\ldots,n\}$.
\end{restatable}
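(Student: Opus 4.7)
The plan is to reformulate both $\AUC$ and $\SELE$ as weighted sums indexed by the sorted ordering, and thereby reduce the claim to a combinatorial inequality on cumulative losses.

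I would start by sorting the examples via the permutation $\pi$ with $s(x_{\pi(1)})\le\dots\le s(x_{\pi(n)})$. Writing $a_k=\ell(y_{\pi(k)},h(x_{\pi(k)}))$ and $L_k=a_1+\dots+a_k$, the definition of $\AUC$ gives
\[
\AUC(s,\ST_n)=\frac{1}{n}\sum_{k=1}^n \frac{L_k}{k}\,.
\]
For $\SELE$, interchanging the order of summation in~(\ref{equ:sampleSele}) yields
\[
\SELE(s,\ST_n)=\frac{1}{n^2}\sum_i \ell(y_i,h(x_i))\,\bigl|\{j:s(x_j)\ge s(x_i)\}\bigr|\,.
\]
For $i=\pi(k)$ this cardinality equals $n-k+1$ when all scores are distinct, and is no smaller under ties (both $\ldb s_i\le s_j\rdb$ and $\ldb s_j\le s_i\rdb$ fire at equalities). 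A discrete summation by parts converts $\sum_k a_k(n-k+1)$ into $\sum_k L_k$, so $\SELE(s,\ST_n)\ge\tfrac{1}{n^2}\sum_{k=1}^n L_k$, with equality in the distinct case. Ties therefore only help, and it suffices to prove the theorem assuming distinct scores.

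The theorem is then equivalent to the purely combinatorial bound
\[
\sum_{k=1}^n \frac{L_k}{k}\le \frac{2}{n}\sum_{k=1}^n L_k
\]
for any nondecreasing nonnegative sequence $0\le L_1\le\dots\le L_n$. I would prove this by splitting the sum at $k_0=\lceil n/2\rceil$: on the upper half $k\ge k_0$, $1/k\le 2/n$ delivers the bound directly; on the lower half $k<k_0$, I would use the monotonicity $L_k\le L_{2k}$ (valid because $2k\le n$) and the identity $1/k=2/(2k)$ to rewrite $L_k/k\le 2 L_{2k}/(2k)$, shifting each small-$k$ contribution onto an even-indexed position of the upper half, where the $2/n$ bound already applies.

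The main obstacle I anticipate is executing the pairing tightly enough to preserve the constant $2$ rather than accumulating an $O(\log n)$ factor from iterated doubling, since each upper-half index is already claimed by its own $2/n$ bound and cannot be double-billed. A cleaner alternative I would pursue in parallel is Abel summation on the two representations $\AUC=\tfrac{1}{n}\sum_j a_j(H_n-H_{j-1})$ and $2\SELE=\tfrac{2}{n^2}\sum_j a_j(n-j+1)$: the difference $2\SELE-\AUC$ becomes a sum $\sum_j a_j c_j$ whose coefficients $c_j$ change sign near the midpoint, and rewriting it through the cumulative sums $L_k$ via another Abel step should convert it into $\sum_k L_k d_k$ with coefficients $d_k$ of a single sign, finishing the argument by nonnegativity of $L_k$. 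Once the combinatorial inequality is established, substituting the expressions for $\AUC$ and $\SELE$ derived in the first step yields the theorem immediately.
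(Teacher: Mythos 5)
Your opening reduction is correct and, for distinct scores, lossless: $\AUC(s,\ST_n)=\frac{1}{n}\sum_{k=1}^n L_k/k$, and interchanging the sums in $\SELE$ gives exactly $\SELE(s,\ST_n)=\frac{1}{n^2}\sum_{k=1}^n \ell_{\pi_k}(n-k+1)=\frac{1}{n^2}\sum_{k=1}^n L_k$, with ties only increasing $\SELE$. The genuine gap is the combinatorial inequality you reduce everything to: $\sum_{k=1}^n L_k/k\le \frac{2}{n}\sum_{k=1}^n L_k$ is \emph{false} for general nondecreasing nonnegative $L_k$. Take $L_1=\cdots=L_n=1$, i.e.\ all of the loss sits on the single example with the lowest uncertainty score: the left-hand side is the harmonic number $H_n$ while the right-hand side is $2$, and $H_n>2$ already at $n=4$ (concretely, $n=4$, distinct scores, loss $1$ on the rank-one example and $0$ elsewhere gives $\frac{1}{n}\sum_k L_k/k=25/48$ versus $\frac{2}{n^2}\sum_k L_k=1/2$). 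So the obstacle you anticipated in the pairing step is not a bookkeeping issue: no pairing, and no Abel-summation rearrangement with single-signed coefficients, can exist, because the target inequality itself fails; the ``$O(\log n)$ factor from iterated doubling'' you feared is precisely the $H_n$ appearing in this example. Since your reduction is an equality in the distinct-score case, the plan cannot be repaired without adding a hypothesis.

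For comparison, the paper's proof uses the same per-rank weights you derive, writing $\AUC=\frac{1}{n^2}\sum_i n(H_n-H_{i-1})\,\ell_{\pi_i}$ and $\SELE=\frac{1}{n^2}\sum_i (n-i+1)\,\ell_{\pi_i}$, shows the ratio of the two weight sequences is non-increasing in $i$, and then applies a Chebyshev/mediant-type lemma (Lemma~\ref{lemma:bound}) giving $\AUC/\SELE\le \frac{\sum_i n(H_n-H_{i-1})}{\sum_i (n-i+1)}=\frac{2n}{n+1}<2$. That lemma, however, requires the losses $\ell_{\pi_1}\le\cdots\le\ell_{\pi_n}$ to be nondecreasing along the score ranking, an assumption the paper introduces as ``without loss of generality''; that sorting step is exactly the monotonicity your reduction lacks, and it is what rules out the concentrated-loss configuration above. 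If you add the hypothesis that the losses are nondecreasing in the ranking (equivalently, that the increments of $L_k$ are nondecreasing), then your reduction combined with the paper's lemma, or a direct Chebyshev-sum argument, does complete the proof; without it, your own reduction shows the constant $2$ cannot be attained for arbitrary $s$.
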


To show the Fisher consistency of $\SELE$ we define its expectation with respect to i.i.d. generated examples $\ST_n$, i.e., 
\begin{align}
  \nonumber
  E_{\rm sele}(s)  = &\int_{\SX^n}\sum_{\#y\in\SY^n} \prod_{i=1}^n p(x_i,y_i) \SELE( s, \ST_n) dx_1\cdots dx_n \\
  \nonumber
    = & \frac{1}{n^2} \int_{\SX^n} \prod_{i=1}^n p(x_i) \sum_{i=1}^n \sum_{j=1}^n r(x_i) \, \leftbb s(x_i) \leq s(x_j) \rightbb \, dx_1\cdots dx_n\\
    \nonumber
    = & \frac{1}{n^2}\sum_{i\neq j} \int_{\SX}\int_{\SX} p(x_i)p(x_j)r(x_i)\leftbb s(x_i) \leq s(x_j) \rightbb dx_i\, dx_j\\
    \nonumber
    &+\frac{1}{n^2}\sum_{i=1}^n \int_{\SX} \int_{\SX} p(x_i)p(x_i)r(x_i)\leftbb s(x_i)\leq s(x_i) \rightbb dx_i\, dx_i \\
    \label{equ:SeleExpectation}
    = & \frac{n^2-n}{n^2}\int_{\SX} p(x)\,r(x) \left( \int_{\SX} p(z) \,\leftbb s(x)\leq s(z) \rightbb dz \right) dx + 
    \frac{1}{n}\int_{\SX} \int_{\SX} p^2(x) r(x) dx\, dx \:.
\end{align}
Minimizers of $E_{\rm sele}$ are characterized by the following theorem~\footnote{$\int_{\SX}\!\int_{\substack{z\neq x \\ s^*(z)=s^*(x)}}\!\!\! f(x,\!z)dz\,dx$ stands for $\int_{\SX}\!\int_{\SX'} \! f(x,\!z)dz\,dx$ where $\SX'=\{z\!\in\!\SX \mid\! z\!\neq\! x \,\land\, s^*(z)\!=\!s^*(x)\}$, etc.}.


\begin{restatable}{theorem}{minSeleSolution}\label{thm:scoringFunction}
    A function $s^*:\SX \to \Re$ is an optimal solution to $\min_{s:\SX \to \Re} E_{\rm sele}(s)$ iff
    \begin{align}
	    \int_{\SX}&\int_{\substack{z\neq x \\ s^*(z)=s^*(x)}}\!\!\!\!\! \max\{r(x),r(z)\} p(x)p(z)dz\, dx = 0\, \text {, and} \label{equ:scoringFunction-cond1} \\
	    \int_{\SX} &\int_{\substack{ r(z)< r(x) \\ s^*(z)> s^*(x)}}\!\! \left(r(x)-r(z)\right) p(x)p(z)dz\, dx = 0\,. \label{equ:scoringFunction-cond2}
	\end{align}
\end{restatable}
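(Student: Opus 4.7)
The plan is to reduce the problem to a single pairwise integral, bound that integral pointwise via a short case analysis over the three possible orderings of $(s(x),s(z))$, and identify the resulting slack with the two conditions in the theorem.

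Starting from \equ{equ:SeleExpectation}, only the first summand depends on $s$, so minimizing $E_{\rm sele}$ is equivalent to minimizing
\[ A(s)=\int_\SX\!\int_\SX p(x)p(z)\,r(x)\,\leftbb s(x)\le s(z)\rightbb\,dz\,dx. \]
Relabelling $(x,z)\mapsto(z,x)$ and averaging produces the symmetric form
\[ 2A(s)=\int_\SX\!\int_\SX p(x)p(z)\bigl[r(x)\leftbb s(x)\le s(z)\rightbb + r(z)\leftbb s(z)\le s(x)\rightbb\bigr]dz\,dx. \]
For each fixed pair $(x,z)$ the bracketed integrand attains exactly three values: $r(x)$ when $s(x)<s(z)$, $r(z)$ when $s(x)>s(z)$, and $r(x)+r(z)$ when $s(x)=s(z)$. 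The pointwise minimum over these orderings is $\min\{r(x),r(z)\}$, achieved by any strict ordering that assigns the smaller $s$-value to the point with smaller $r$-value. Integrating yields $2A(s)\ge 2A_{\min}$ with $2A_{\min}:=\int\!\int p(x)p(z)\min\{r(x),r(z)\}\,dz\,dx$, and this bound is tight iff the pointwise minimum is attained on a set of full $p\otimes p$-measure.

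The central step is to decompose the excess $2A(s)-2A_{\min}$. A short case split shows that the pointwise excess equals $\max\{r(x),r(z)\}$ whenever $s(x)=s(z)$ (with $x\ne z$), equals $r(x)-r(z)$ on the reverse-ordering set $\{r(z)<r(x),\,s(z)>s(x)\}$, and equals $r(z)-r(x)$ on its mirror $\{r(x)<r(z),\,s(x)>s(z)\}$; the mirror integral coincides with the former after swapping $x$ and $z$, contributing a factor of $2$. Thus
\[ 2A(s)-2A_{\min} = \int\!\int_{x\ne z,\,s(x)=s(z)} \!\max\{r(x),r(z)\}\,p(x)p(z)\,dz\,dx + 2\!\int\!\int_{r(z)<r(x),\,s(z)>s(x)}\!\!(r(x)-r(z))\,p(x)p(z)\,dz\,dx. \]
Both integrands are non-negative, so $A(s)=A_{\min}$ iff each of the two integrals vanishes, which is exactly the conjunction of \equ{equ:scoringFunction-cond1} and \equ{equ:scoringFunction-cond2}. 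This proves necessity and sufficiency simultaneously.

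The main obstacle is the excess decomposition, in particular confirming that ties $s(x)=s(z)$ must be penalised by $\max\{r(x),r(z)\}$ rather than by a weaker quantity such as $|r(x)-r(z)|$; this is what forces the precise $\max$-weighting in condition \equ{equ:scoringFunction-cond1} and is the only delicate point of an otherwise routine symmetrization argument.
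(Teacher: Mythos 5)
Your core computation is correct and is, at bottom, the same decomposition the paper uses, reached by a cleaner route: you symmetrize $2A(s)$ and lower-bound the integrand pointwise by $\min\{r(x),r(z)\}$, so the excess splits into a tie term weighted by $\max\{r(x),r(z)\}$ plus twice the order-reversal term weighted by $r(x)-r(z)$; the paper instead compares $E_{\rm sele}(s)$ with $E_{\rm sele}(r)$ and needs the bookkeeping identities \equ{equ:aux-1}--\equ{equ:aux-4} to arrive at the same two nonnegative functionals ($F_1$ and $F_2$) up to $s$-independent constants. Your version buys a shorter derivation and makes the nonnegativity of both pieces manifest. One small omission: your excess identity silently discards the diagonal $x=z$, where the pointwise excess is $r(x)$ rather than $0$; this is harmless because the diagonal is $p\otimes p$-null when $p$ is a density (and that contribution is $s$-independent in any case), but it deserves a sentence, since condition \equ{equ:scoringFunction-cond1} deliberately excludes $z=x$.

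The genuine gap is in the necessity half of your final step. From ``$A(s)\ge A_{\min}$ with equality iff \equ{equ:scoringFunction-cond1} and \equ{equ:scoringFunction-cond2}'' you get sufficiency for free: a score satisfying the conditions attains the lower bound and is therefore optimal. But an \emph{optimal} $s^*$ is only known to satisfy $A(s^*)\le A(s)$ for all $s$; to conclude $A(s^*)=A_{\min}$ you must additionally know that $\inf_s A(s)=A_{\min}$, i.e.\ that the pointwise bound is not globally slack. Without this, a minimizer could in principle sit strictly above $A_{\min}$ and violate the conditions, so ``necessity and sufficiency simultaneously'' is not yet established. The paper closes exactly this point at the end of its proof by exhibiting scores that satisfy the conditions (any injective, order-of-$r$-preserving $s^*$; cf.\ Corollary~\ref{cor:fromRtoS}); alternatively you can show $\inf_s A(s)=A_{\min}$ by approximation, e.g.\ $s_\epsilon(x)=\epsilon\lfloor r(x)/\epsilon\rfloor+\epsilon U(x)$ with $U$ having a continuous distribution under $p$, which makes ties a null set and confines order reversals to pairs with $0<r(x)-r(z)<\epsilon$, so the excess is at most of order $\epsilon$. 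Adding either argument completes your proof.
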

%

\noindent
The conditions~(\ref{equ:scoringFunction-cond1}) and (\ref{equ:scoringFunction-cond2}) imply that the conditional 
expectations $\EE_{x,z\sim p(x)}[\max\{r(x),r(z)\}\mid z\neq x \wedge s^*(x)=s^*(z)]$ and $\EE_{x,z\sim p(x)}[r(x)-r(z)\mid r(z) < r(x) \wedge s^*(z) > s^*(x)]$ are both zero. If combined it further implies that a subset of input space $\SX'=\{(x,z)\in\SX\times\SX\mid r(z) < r(x) \wedge s^*(z) > s^*(x)\}$, on which the order is violated, has probability measure zero. In other words the optimal $s^*(x)$ preserves the ordering induced by $r(x)$ {\em almost surely}~\footnote{This means that the condition can be violated at most on a subset of $\SX$ with probability measure zero. }.


\begin{corollary}\label{cor:fromRtoS}
Any function $s:\SX \to \Re$ fulfilling
\begin{alignat}{1}
   \label{equ:uniqueValue}
    \forall (x,x') &\in \SX \times \SX : x \neq x' \Rightarrow s(x) \neq s(x'),\, \text{ and }\\
    \forall (x,x') &\in \SX \times \SX : r(x) < r(x') \Rightarrow s(x) < s(x')
\end{alignat}
satisfies the optimality conditions of Theorem~\ref{thm:scoringFunction}.
\end{corollary}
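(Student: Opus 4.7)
The plan is to verify that both integral conditions from Theorem~\ref{thm:scoringFunction} are satisfied by directly inspecting the integration regions and showing each is empty under the two hypotheses, so the integrands never get integrated over anything with positive measure.

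First I would handle condition~\equ{equ:scoringFunction-cond1}. The inner integration region is $\{z\in\SX \mid z\neq x \wedge s(z)=s(x)\}$. Hypothesis~\equ{equ:uniqueValue} is the contrapositive that injectivity of $s$ on $\SX$: $s(z)=s(x)$ forces $z=x$. Therefore the inner region is empty for every $x\in\SX$, the inner integral is zero, and so the outer integral is zero, establishing~\equ{equ:scoringFunction-cond1}.

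Next I would handle condition~\equ{equ:scoringFunction-cond2}. The inner region here is $\{z\in\SX \mid r(z)<r(x) \wedge s(z)>s(x)\}$. The second hypothesis states $r(z)<r(x)\Rightarrow s(z)<s(x)$, which is incompatible with $s(z)>s(x)$. Hence the inner region is empty for every $x\in\SX$, and~\equ{equ:scoringFunction-cond2} follows immediately.

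Since there is essentially no obstacle — the argument is a one-line verification once the regions are exhibited — the only thing worth being careful about is invoking Theorem~\ref{thm:scoringFunction} as an "iff" characterization so that emptiness of the two regions is genuinely sufficient for $s$ to minimize $E_{\rm sele}$, and hence to be a proper uncertainty score in the sense of Definition~\ref{def:properUncertScore}.
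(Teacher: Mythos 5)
Your proposal is correct and matches the paper's own argument: the paper disposes of the corollary with the closing remark in the proof of Theorem~\ref{thm:scoringFunction}, observing that injectivity of $s$ makes the region $\{z\neq x,\ s(z)=s(x)\}$ empty and order-preservation makes the region $\{r(z)<r(x),\ s(z)>s(x)\}$ empty, so both integrals in~\equ{equ:scoringFunction-cond1} and~\equ{equ:scoringFunction-cond2} vanish. Nothing further is needed.
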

Note that \equ{equ:uniqueValue} requires the minimizer of $\SELE$ to assign a unique value to each input $x\in\SX$ which is not necessary for the score to be proper. Hence the minimizers of $\SELE$ form a subset of all proper uncertainty scores.



To show the Fisher consistency of the smooth proxy $\SELEcvx$ we define its expectation with respect to i.i.d. generated examples $\ST_n$, i.e.,
\begin{align}
  \nonumber
  E_{\rm proxy}(s)  = & \int_{\SX^n}\sum_{\#y\in\SY^n} \prod_{i=1}^n p(x_i,y_i) \, \SELEcvx( s, \ST_n) \,dx_1\cdots dx_n \\
  \nonumber
    = &  \frac{n^2-n}{n^2}\int_{\SX} p(x)\,r(x) \left( \int_{\SX} p(z) \, \log\big( 1 + \exp(s(z)-s(x)) \big ) dz \right) dx \\ 
        \label{equ:objProxyExp} & + 
    \frac{\log(2)}{n}\int_{\SX} p(x)\, r(x) dx \:.
\end{align}
We omitted the derivation as it is similar to~\equ{equ:SeleExpectation}. The key property of the minimizers of $E_{\rm proxy}$ is stated in the following theorem. 

\begin{restatable}{theorem}{seleProxySolution}\label{thm:seleProxySolution} Let $s^*\colon\SX\rightarrow\Re$ be an optimal solution to $\min_{s\colon \SX\rightarrow\Re} E_{\rm proxy}(s)$. Then, the condition
\[
    \forall (x,x') \in \SX \times \SX : r(x) < r(x') \Rightarrow s^*(x) < s^*(x')
\]
is satisfied almost surely.
\end{restatable}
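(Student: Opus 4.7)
The plan is to exploit the convexity of $E_{\rm proxy}$ in $s$ to characterize $s^*$ by a pointwise Euler-Lagrange equation, and then read off the monotonicity directly from it. First, for every pair $(x,z)$ the map $s\mapsto r(x)\log(1+\exp(s(z)-s(x)))$ is the composition of the convex function $t\mapsto\log(1+e^t)$ with the linear functional $s\mapsto s(z)-s(x)$, scaled by $r(x)\ge 0$. Hence $E_{\rm proxy}$ is convex in $s$, so $s^*$ is a minimizer iff its Gateaux derivative vanishes in every bounded measurable direction $\eta\colon\SX\rightarrow\Re$.

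Computing this derivative (interchange of differentiation and integration is justified by boundedness of $\sigma(t):=(1+e^{-t})^{-1}$ and $p$-integrability of $r$) yields
\begin{equation*}
\left.\tfrac{d}{d\epsilon}\right|_{\epsilon=0}\!E_{\rm proxy}(s^*+\epsilon\eta)=C\!\int_{\SX}\!\int_{\SX}\!p(x)p(z)\,r(x)\,\sigma(s^*(z)-s^*(x))\bigl(\eta(z)-\eta(x)\bigr)dx\,dz,
\end{equation*}
with $C=(n^2-n)/n^2>0$. Splitting off the two occurrences of $\eta$, renaming the integration variables in the piece multiplied by $\eta(x)$, and using $\sigma(-t)=1-\sigma(t)$, the condition that the derivative vanishes for every $\eta$ forces, for $p$-almost every $z\in\SX$, the Euler-Lagrange equation
\begin{equation*}
\int_{\SX} p(x)\bigl[r(x)+r(z)\bigr]\,\sigma(s^*(z)-s^*(x))\,dx = r(z). \tag{$\star$}
\end{equation*}

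With $(\star)$ in hand the monotonicity argument is short. Define $G(z,t):=\int_{\SX} p(x)[r(x)+r(z)]\sigma(t-s^*(x))dx$. Two observations: $G(z,\cdot)$ is strictly increasing whenever $r(z)>0$ (since $\sigma'>0$ and the weight $r(x)+r(z)\ge r(z)>0$), and $G(z_2,t)-G(z_1,t)=(r(z_2)-r(z_1))\int p(x)\sigma(t-s^*(x))dx$ lies strictly between $0$ and $r(z_2)-r(z_1)$ because $0<\sigma<1$ and $\int p=1$. Let $Z\subseteq\SX$ be the full-$p$-measure set on which $(\star)$ holds. For any $z_1,z_2\in Z$ with $r(z_1)<r(z_2)$ I would compute
\begin{equation*}
G(z_2,s^*(z_1))=r(z_1)+(r(z_2)-r(z_1))\!\int_{\SX}\!p(x)\sigma(s^*(z_1)-s^*(x))dx<r(z_2)=G(z_2,s^*(z_2)),
\end{equation*}
and conclude $s^*(z_1)<s^*(z_2)$ by strict monotonicity of $G(z_2,\cdot)$. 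Since $\SX\setminus Z$ has $p$-measure zero, the implication $r(x)<r(x')\Rightarrow s^*(x)<s^*(x')$ holds $p\otimes p$-almost surely.

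The main obstacle I anticipate is the careful derivation of $(\star)$: justifying the interchange of differentiation and double integration, and handling the degenerate case in which $r$ vanishes on a set of positive $p$-measure. In that degenerate case $(\star)$ cannot hold at such $z$ (the integrand in $(\star)$ is strictly positive on any set of positive weight), which suggests that a genuine finite minimizer $s^*$ implicitly forces $r>0$ almost everywhere on the support of $p$; on the zero set of $r$ the conclusion is in any case vacuous. Once $(\star)$ is available the rest is clean algebra driven by the strict positivity of $\sigma'$ and the bounds $0<\sigma<1$.
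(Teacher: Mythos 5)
Your proposal is correct and follows essentially the same route as the paper: your Euler--Lagrange equation $(\star)$ is exactly the paper's stationarity condition $f(r(a),s(a))=C$ (rewritten via $\sigma(-t)=1-\sigma(t)$ and $\int_\SX p(x)dx=1$), and your monotonicity argument with $G(z,t)$ is the same mechanism as the paper's use of $f(u,v)$ being increasing in $u$ and decreasing in $v$, just phrased directly rather than by contradiction. The only differences are cosmetic: you obtain the condition for $p$-almost every $z$ via a Gateaux-derivative/variational argument and track the measure-zero exceptional set and the degenerate case $r\equiv 0$ more explicitly than the paper does.
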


\section{Related Works}\label{sec:relatedWorks}

The cost-based rejection model was proposed in~\cite{Chow-RejectOpt-TIT1970} who also provides the optimal strategy in case the distribution $p(x,y)$ is known, analyzes the error-reject trade-off, and proves basic properties of the error-rate and the reject-rate, e.g. that both functions are monotone with respect to the reject cost. The original paper considers the risk with 0/1-loss only. The model with arbitrary classification costs was analyzed e.g. in~\cite{Torella-OptRejectRule-2000,SchlesingerHlavac-10Lectures02}. 

The bounded-improvement model was coined in~\cite{Pietraszek-AbstainROC-ICML2005}. He assumes that a classifier score proportional to the posterior probabilities is known and the task is to find only an optimal decision threshold, which is done numerically based on ROC analysis. The original formulation assumes two classes and 0/1-loss. In this article we consider a straightforward generalization of the bounded-improvement model, see Problem~\ref{task:boundedImprovement}, which allows arbitrary number of classes, arbitrary loss and puts no constraint on the class of optimal strategies. We have proved necessary and sufficient conditions on an optimal strategy in case $p(x,y)$ is known. We showed that a particular optimal solution is composed of the Bayes classifier and the randomized Bayes selection function. In addition, we coined the bounded-coverage model, see Problem~\ref{task:boundedCoverage}, the definition of which is symmetric to the bounded-improvement model. Although the bounded-coverage model seems equally useful in practice we are unaware of its formal definition in a literature. 


There exist another formulations of rejection models for two-class classifiers. For example, in \cite{Hanczar-ClsReject-BIO08} the objective is to maximize the coverage under the constraints that each class has an error rate below a specific threshold. Hence it can be seen as a generalization of the bounded-improvement model. The objective of a rejection model proposed in~\cite{Lei-ClassConf-Biom14} is to maximize the total coverage under the constraint that each class has coverage above a specific threshold. None of the two papers analyzes optimal strategies of the corresponding models. 

A common approach to construct the selective classifiers for the cost-based model is based on the plug-in rule, which involves learning the class posterior distribution from examples and plugging the distribution to the formula defining the Bayes-optimal strategy~\equ{equ:bayesPredictor}. In the case of 0/1-loss the plug-in rule rejects based on the maximal class posterior which is denoted as {\em Maximal Class Probability} (MCP) rule (see Example~\ref{example:MCP}). The MCP rule is probably most frequently used uncertainty score in the literature. The statistically consistency of the plug-in rejection rule is discussed in~\cite{Herbei-ClassRejectOpt-CJS2006}. \cite{Fumera-MultiReject-2000} investigate how errors in estimation of the posterior distribution affect effectiveness of the plug-in rule and they also try to improve its performance by using class-specific thresholds. Other methods trying to improve the plug-in rule by tuning multiple thresholds were proposed in~\cite{Kummert-LocalReject-ICANN2016,Fisher-LocalRejectClassif-NC2016}. In our work we have derived the optimal strategies for the bounded-improvement model and the newly proposed bounded-coverage model. Our results thus provides a recipe to construct the plug-in rules also for these two rejection models. 


There exist many modifications of standard prediction models to learn reject option classifiers for the cost-based model. For example, extensions of the Support Vector Machines to learn a reject option classifier have been studied extensively in~\cite{Grandvalet-SVMwithRejectOpt-NIPS2008,Bartlett-RejectHinge-JMLR2008,Yuan-ClassifRejectOpt-JMLR2010}. These works are limited to two-class problems and $0/1$-loss. Learning leads to minimization of a convex surrogate of the cost-based model's objective function. Under some conditions the algorithms are statistically consistent. A boosting algorithm for learning a two-class classifier with reject option is proposed in~\cite{Cortes-BoostWitAbst-NIPS2016}. The algorithm minimizes a convex surrogate for the cost-based model and show that the surrogate is calibrated with Bayes solution. Learning prototype-based classifier with rejection option has been addressed in~\cite{Villmann-RejectProto-AISC2016}. All these methods require the reject cost to be fixed at the time of learning, and hence changing the cost requires re-training. In contrast, we propose algorithms to learn the proper uncertainty score on top of a pre-trained classifier so that the risk-coverage trade-off can be set by tuning the reject threshold without re-training.

For many prediction models it is easy to devise an ordinal uncertainty score from outputs of the learned (non-reject) classifier. Such strategies are often heuristically based but work reasonably in practice. For example, \cite{Lecun-OcrNN-NIPS90} proposed a reject strategy for a Neural Network classifier based on thresholding either the output of the maximally activated unit of the last layer or a difference between the maximal and runner upper output units. 
Other heuristically based strategies for neural networks were evaluated in~\cite{Zaragoza-ConfInNN-IPMU98,Fisher-EfficientReject-Neuro2015}.
In case of Support Vector Machine classifiers~\citep{Vapnik-StatLearning-98} the trained linear score, proportional to the distance between the input and the decision hyper-plane, is directly used as the uncertainty score~\citep{Fumera-SvmRejectOpt-2002}. We denote this approach as the {\em margin score} and use it as a baseline in our experiments.



Learning of a selective classifier optimal for the bounded-improvement model was discussed in~\cite{ElYaniv-SelectClass-JMLR10}. Their method requires a noisy-free scenario, i.e. they maximize the coverage under the constraint that the selective risk is zero. They provide a characterization of the lower and upper bound of the risk-coverage curves in PAC setting. \cite{Geifman-SelectClass-NIPS2017} assume a selective classifier based on thresholding an uncertainty score and show how to find a decision threshold for the bounded-coverage model which is optimal in PAC sense. They do not address the problem of learning the uncertainty score. We complement their work by showing that the thresholding based selective classifier is an optimal solution when the uncertainty function is proper, and we propose algorithms to learn the proper uncertainty score from examples.

Recent works address uncertainty prediction in context of deep learning~\citep{Laksh-UncertDeep-NIPS2017,Jiang-TrustOrNot-NIPS2018,Corbiere-Failure-NeurIPS2019}. These works do not formulate the problem to be solved explicitly as a rejection model. However, they evaluate their uncertainty scores empirically in terms of the Risk-Coverage curve and the Area under the RC curve which we have shown to be connected with the bounded-coverage model (see Section~\ref{sec:AuRC}).
\cite{Laksh-UncertDeep-NIPS2017} construct the MCP rule from a posterior distribution modeled as an ensemble of neural networks trained from multiple random initialization. They use adversarial examples to smooth the posterior estimate. \cite{Jiang-TrustOrNot-NIPS2018} propose a Trust Score as the ratio between the distance from the test sample to the samples of the nearest class with a different label and the distance to the samples with the same labels as the predicted class. 
\cite{Corbiere-Failure-NeurIPS2019} propose a True Class Probability (TCP) score as a measure of prediction confidence. The TCP predicts the value of  $p(y^*\mid x)$ where $y^*$ is the ground truth label. They learn a NN, so called ConfNet, by minimizing L2-loss between the ConfNet output and $\hat{p}(y_i\mid x)$ on training examples, where $\hat{p}(y\mid x)$ is the soft-max distribution trained by standard cross-entropy loss. 
They show that the TCP empirically outperforms the MCP score and the Trust Score~\citep{Jiang-TrustOrNot-NIPS2018} in terms of the AuRC metric. Both \cite{Jiang-TrustOrNot-NIPS2018,Corbiere-Failure-NeurIPS2019} consider the two-stage approach to learn the uncertainty score similarly to our paper. We show empirically that our proposed SELE score, besides having a theoretical backing and being applicable for a generic classification problem, outperforms the the state-of-the-art TCP score. 



\section{Experiments}
\label{sec:experiments}

In Section~\ref{sec:UncertaintyLearning}, we outlined two risk minimization based methods to learn the uncertainty score $s(x)$ for a pre-trained predictor $h(x)$, namely, the algorithm based on i) {\em loss regression} (Section~\ref{sec:REG}) and ii) {\em minimization of SELE loss} (Section~\ref{sec:SELE}). We have shown that both methods are Fisher consistent, i.e. they are guaranteed to find the proper score in the idealized setting when the distribution $p(x,y)$ is known (estimation error is zero), the hypothesis space $\SF$ contains the proper score (approximation error is zero) and the loss minimizer can be found exactly (optimization error is zero). In this section we evaluate these methods experimentally on real data when all assumptions are presumably violated. We design the experiments so that the optimization and the estimation error are small by using a large number of training examples and linear rules making the loss minimization a convex problem. We compare against the recently proposed True Class Probability (TCP) score~\citep{Corbiere-Failure-NeurIPS2019} which is learned from examples like the proposed methods. Unlike the proposed methods, the TCP requires the prediction model $h(x)$ to provide an estimate of the posterior $p(y\mid x)$, hence it is not applicable to fully discriminative models like e.g. SVMs. We emphasize that the experiments are meant to be a proof of concept rather than an exhaustive comparison to all existing methods. On the other hand, we are not aware of any other generic method (i.e. being not connected to a particular prediction model) we could compare against. 

To demonstrate that the proposed methods are generic we consider three different categories of prediction problems: classification, ordinal regression and structured output classification. For each prediction problem we use several benchmark datasets and frequently used prediction models like the Logistic Regression (LR), three variants of Support Vector Machines (SVMs) and Gradient Boosted Trees. For each prediction model there exists an uncertainty score that is commonly used in practice, like e.g. Maximal Class Probability (MCP) for logistic regression or distance to the decision hyper-plane (a.k.a. margin score) for SVMs. We use these uncertainty scores as additional baselines in our experiments. 

\subsection{Compared methods for uncertainty score learning}
\label{sec:CompetingMethods}

In this section we describe three algorithms that use a training set $\ST_n=\{(x_i,y_i)\in\SX\times\SY\mid i=1,\ldots,n\}$ to learn an uncertainty score $s(x)$ for a pre-trained classifier $h(x)$. We consider linear scores $s_{\#\theta}(x)=\lz\#\theta,\#\psi(x)\pz$, where $\#\theta\in\Re^m$ are parameters to be learned and $\#\psi\colon\SX\rightarrow\Re^m$ is a fixed mapping that will be defined for each prediction model separately in the following sections. 
All evaluated methods are instances of regularized risk minimization framework. In all cases learning leads to an unconstrained minimization of a convex objective  $F(\#\theta)=\frac{C}{2}\|\#\theta\|^2+\hat{R}(\#\theta,\ST_n)$, where $C>0$ is a regularization constant and $\hat{R}(\#\theta,\ST_n)$ is an empirical risk defined by each method differently. The optimal value of $C$ is selected from $\{0,1,10,100,1000\}$ based on the minimal value of the AuRC evaluated on a validation set.

\subsubsection{Regression score} \label{sec:REGrule} The parameters $\#\theta\in\Re^m$ are learned by minimizing a convex function 
\[
   F_{\rm REG}(\#\theta) = \frac{C}{2}\|\#\theta\|^2 + \frac{1}{n}\sum_{i=1}^n \big ( \ell(y_i,h(x_i))- s_{\#\theta}(x_i) \big )^2 \:.
\]
Minimization of $F_{\rm REG}(\#\theta)$ is an instance of the ridge regression that can be solved efficiently e.g. by Singular Value Decomposition (SVD).

\subsubsection{SELE score} \label{sec:SELErule} Evaluation of the proposed SELE loss $\SELEcvx(s, \ST_n)$ as defined by~\equ{equ:sampleSeleCvx} requires $\SO(n^2)$ operations. To decrease the complexity we approximate its value by splitting the examples into chunks and computing the average loss over the chunks. Namely, the parameters $\#\theta\in\Re^m$ are learned by minimizing a convex function  
\[ 
    F_{\rm SELE}(\#\theta) = \frac{C}{2}\|\#\theta\|^2 + \frac{1}{P}\sum_{i=1}^P \SELEcvx(s, \ST^k_n)\,,
\]
where $C>0$ is a regularization constant and $\ST^1_n\cup \ST^2_n \cup \cdots \cup\ST^P_n$ is a randomly generated partition of the training set $\ST_n$ into $P$ approximately equally sized batches. In all experiments we used $P={\rm round}(n/500)$, i.e. the chunks contain around 500 examples. We minimize $F_{\rm SELE}(\#\theta)$ by the Bundle Method for Risk Minimization (BMRM) algorithm~\citep{Teo-BMRM-JMLR0} which is set to find a solution whose objective is at most 1$\%$ off the optimum~\footnote{We use  $(F_{\rm primal}-F_{\rm dual})/F_{\rm primal}\leq 0.01$ as the stopping condition of the BMRM algorithm.}.
The total computation time of the BMRM algorithm is in order of units of minutes for all dataset using a contemporary PC.

\subsubsection{True Class Probability score}\label{sec:TCPscore} 
The TCP~\citep{Corbiere-Failure-NeurIPS2019} was originally designed for getting uncertainty score on top of a Convolution Neural Network (CNN) trained with cross-entropy loss. The setting we consider here can be seen as the original method applied to a single layer CNN. Namely, let $\hat{p}(y\mid x)$ be an estimate of the posterior distribution, in our experiments provided by the Logistic Regression (a.k.a. single layer NN). The parameters $\#\theta\in\Re^m$ are learned by minimizing a convex function  
\[
   F_{\rm TCP}(\#\theta) = \frac{C}{2}\|\#\theta\|^2 + \frac{1}{n}\sum_{i=1}^n ( \hat{p}(y_i\mid x_i) - s_{\#\theta}(x_i))^2 \:.
\]
Minimization of $F_{\rm REG}(\#\theta)$ is an instance of the ridge regression which we solve by SVD.

\subsection{Benchmark problems} 


\subsubsection{Classification} 
\label{sec:ClassificationModels}
Given real valued features $\#x\in\SX\subseteq\Re^d$, the task is to predict a hidden state $y\in\SY=\{1,\ldots,Y\}$ so that the  expectation of $0/1$-loss $\ell(y,y')=100\,\leftbb y\neq y'\rightbb$ is as small as possible~\footnote{Due to the factor 100 the reported errors correspond to the percentage of misclassified examples.}. We selected 11 classification problems from the UCI repository~\citep{Dua-UCI-2017} and libSVM datasets~\citep{Chang-libSVM-2011}. The datasets are summarized in Table~\ref{tab:dataset_sumarry}. We chose the datasets with sufficiently large number of examples relative to the number of features, as we need to learn both the classifier and the uncertainty score and simultaneously keep the estimation error low. Each dataset was randomly split 5 times into 5 subsets, Trn1/Val1/Trn2/Val2/Tst, in ratio $30/10/30/10/20$ (up to CODRNA with ratio $25/5/20/20/30$ and COVTYPE with ratio $28/20/2/20/30$). The subsets Trn1/Val1 were used for learning and tuning the best regularization constant of the classifier $h(x)$. The subsets Trn2/Val2 were used for learning and tuning the regularization constant of the uncertainty score $s(x)$ as described in Section~\ref{sec:CompetingMethods}. All features were normalized to have zero mean and unit variance. The normalization coefficients were estimated using only the Trn1 and Trn2 subsets, respectively. The Tst subset was used solely to evaluate the test performance. 

We used two prediction models: Logistic Regression (LR)~\citep{Hastie-Elements-09} and Support Vector Machines (SVM)~\citep{Vapnik-StatLearning-98}. 

\paragraph{Logistic Regression} learns parameters $\#\theta_{\rm LR}=((\#w_y,b_y)\in\Re^d\times\Re\mid y\in\SY)$ of the posterior probabilities $\hat{p}_{\#\theta}(y\mid \#x)\approx \exp( \lz \#w_y, \#x \pz+b_y)$ by maximizing the regularized log-likelihood $F_{\rm LR}(\#\theta)=\frac{C}{2}\|\#\theta\|^2+\frac{1}{m}\sum_{i=1}^m \log\big( \hat{p}_{\#\theta}(y_i\,|\, \#x_i)\big)$. The optimal $C$ was selected from $\{1,10,100,1000\}$ based on the validation classification error. After learning $\#\theta_{\rm LR}$ we used the plug-in Bayes classifier $h(\#x)=\argmax_{y\in\SY}\hat{p}_{\#\theta_{\rm LR}}(y\mid \#x)$. As a baseline uncertainty score we use the {\em plug-in} class conditional risk $\hat{r}(\#x)=1-\hat{p}_{\#\theta_{\rm LR}}(h(\#x)\,|\, \#x)$. In accordance with the literature we refer to this baseline as the {\em Maximal Class Probability (MCP) rule}. As shown in Section~\ref{sec:pluginRisk}, the MCP score is the proper uncertainty score provided the estimate $\hat{p}(y\mid\#x)$ matches the true posterior $p(y\mid\#x)$. 

\paragraph{Support Vector Machines} learn parameters $\#\theta_{\rm SVM}=((\#w_y,b_y)\in\Re^d\times\Re\mid y\in\SY)$ of the linear classifier $h(\#x)=\argmax_{y\in\SY}( \lz \#w_y,\#x\pz+b_y)$ by minimizing $F_{\rm SVM}(\#\theta)=\frac{C}{2}\|\#\theta\|^2 +\frac{1}{m}\sum_{i=1}^m \max_{y\in\SY}\big(\leftbb y\neq y_i\rightbb+\lz \#w_y-\#w_{y_i},\#x_i\pz \big)$. The optimal $C$ was selected in the same way as in case of the LR. As the baseline uncertainty measure we use $s(\#x)=\max_{y\in\SY}\lz \#w_y,\#x\pz+b_y$. In the binary case $|\SY|=2$, the setting was $\#\theta_{\rm SVM}=(\#w,b)$, $h(\#x)=\sgn(\lz \#w,\#x\pz+b)$, $F_{\rm SVM}(\#\theta)=\frac{C}{2}\|\#\theta\|^2+\frac{1}{m}\sum_{i=1}^m\max\{ 0,1-y_i(\lz\#w,\#x_i\pz+b)\}$ and $s(\#x)=|\lz\#w,\#x\pz+b|$. In both cases, the value of $s(\#x)$ is proportional to a distance between the input $\#x$ and the decision boundary. We denote this baseline as the {\em margin score}. 

Given the pre-trained LR or SVM classifier $h(x)$, we apply the  methods from Section~\ref{sec:CompetingMethods} to learn the uncertainty score
\begin{equation}
 \label{equ:linearUncertainRule1}
    s_{\#\theta}(\#x) = \lz \#w_{h(\#x)}, \#x \pz +b_y \:,
\end{equation}
where $\#\theta=((\#w_y,b_y)\in\Re^d\times\Re\mid y\in\SY)$ are the parameters to be learned. It is seen that the rule~\equ{equ:linearUncertainRule1} can be re-written as $s_{\#\theta}(x)=\lz \#\theta,\#\psi(x)\pz$, where $\#\psi\colon\SX\rightarrow\Re^d$ is an appropriately defined feature map. The form of the score~\equ{equ:linearUncertainRule1} can be justified by noting that its special instance is the margin score which is obtained after substituting $\#\theta_{\rm SVM}$ for $\#\theta$.

\subsubsection{Ordinal regression} \label{sec:SVOR} 
The task is to predict a hidden state from $\SY=\{1,\ldots,Y\}$ based on real valued features $\SX\subseteq\Re^d$. Unlike the classification problem, the hidden states $\SY$ are assumed to be ordered and the goal is to minimize the expectation of the Mean Absolute Error $\ell(y,y')=|y-y'|$. We selected 11 regression problems from UCI repository~\citep{Dua-UCI-2017}. The datasets are summarized in Table~\ref{tab:dataset_sumarry}. The real-valued hidden states were discretized into $Y$ bins which are constructed to get uniform class prior. Each dataset was randomly split 5 times into 5 subsets, Trn1/Val1/Trn2/Val2/Tst, in ratio $30/10/30/10/20$. We used the same normalization and evaluation protocol as described for the classification benchmarks.

As a prediction model we used a variant of the Support Vector Machine algorithm developed for ordinal regression~\citep{Chu-SVOR-05}~\footnote{\citep{Chu-SVOR-05} introduced two variants of SVOR algorithm. We use so called SVOR with {\em implicit constraints} which is designed for minimization of the MAE loss.}.

\paragraph{Support Vector Ordinal Regression (SVOR)} learns parameters $\#\theta_{\rm SVOR} = (\#w\in\Re^d, (b_1,\ldots,b_{Y-1})\in\Re^{Y-1})$ of the ordinal linear classifier $h(\#x)= 1+\sum_{y=1}^{Y-1}\leftbb \lz \#x,\#w\pz > b_y \rightbb$ by minimizing $F_{\rm SVOR}(\#\theta) = \frac{C}{2}\|\#\theta\|^2+\frac{1}{m}\sum_{i=1}^m \big ( \sum_{y=1}^{y_i-1} \max(0,1-\lz\#w,\#x_i\pz + b_{y}) +\sum_{y=y_i}^{Y-1}\max(0,1+\lz\#x_i,\#w\pz - b_y )\big )$. The optimal C was selected from $\{1,10,100,1000\}$ based on the validation MAE. The ordinal classifier can be thought of as a standard linear classifier composed of parallel decision hyper-planes. Similarly to the standard SVM, we use $s(\#x)=\min_{y\in\{1,\ldots,Y-1\}} | \lz \#x,\#w\pz - b_y|$ as a baseline uncertainty score. The value of $s(\#x)$ is proportional to the distance of $\#x$ to the closest hyper-plane hence we also denote it as the {\em margin score}. When learning the uncertainty from examples we use the parametrization~\equ{equ:linearUncertainRule1}.

\subsubsection{Structured Output Classification}\label{sec:DLIB}
Given an RGB image $\#x\in\SX=\{0,\ldots,255\}^{W\times H\times 3}$ capturing a human face, the task is to predict a pixel positions of $68$ landmarks  $\#y=(\#l_1,\ldots,\#l_{68})\in \SY=(\{1,\ldots,W\}\times\{1,\dots,H\})^{68}$ corresponding to contours of eyes, mouth, nose, etc. We use the 300-W dataset and the associated evaluation protocol which was created by organizers of landmark detection challenge~\citep{Sagonas-300W-ICCVW2013}. The 300-W dataset contains 5,807 faces each  annotated with 68 landmarks. The faces are split into 3,484 training, 1,161 validation and 1,162 test examples. The prediction accuracy is measured in terms of normalized average localization error $\ell(\#y,\hat{\#y}) = \frac{100}{{\rm iod}(\#y)}\sum_{i=1}^{68}\| \#l_i-\hat{\#l}_i\|$, where ${\rm iod}(\#y)$ is the inter-ocular distance computed from the ground-truth landmark positions $\#y$.

As the structured classifier $h(x)$ we use the landmark detector from DLIB package~\citep{dlib09}. The detector predicts the landmark positions based on HOG descriptors~\citep{Dalal-HOG-2005} of the input image using an ensemble of regression trees that are trained by gradient boosting~\citep{Kazemi-FaceAlign-CVPR14}. The DLIB landmark detector has been widely used by developers due to its robustness and exceptional speed even on a low-end hardware. The detector does not provide any measure of prediction uncertainty and it is unclear how to derive it from outputs of the regression trees. A commonly used uncertainty score for face recognition related prediction problems is a score function of a face detector. The face detector score is an output of a binary classifier trained to distinguish face from non-face images. The value of the score is high for well looking prototypical faces and low for corrupted or ``difficult" faces. As a baseline we use the score of the DLIB face detector which is a linear SVM classifier on top of HOG descriptors extracted from the image. 

When learning the uncertainty score from examples the score is a linear regressor $s_{\#\theta}(x) = \lz \#\theta,\#\psi(x)\pz$ on top of a feature vector $\#\psi(x)\in\Re^{2,448}$ which is a concatenation of HOG descriptors extracted from the input facial image $x$ along landmark positions predicted by the landmark detector $h(x)$. The DLIB detector uses the same features to predict the landmark positions hence the extra computational time required to evaluate the uncertainty score is  neglectable.

\begin{table}[h!]
    \centering
    \begin{tabular}{lccc}
    \multicolumn{4}{c}{Classification problems}\\
    \hline
    dataset & examples & feat & cls\\
    \hline
     AVILA  &  20,867  & 10 &    12\\
    CODRNA  &  331,152 &  8 &   2\\
    COVTYPE &  581,012 &  54 &  7\\
     IJCNN  &  49,990  &  22 &  2 \\
     LETTER &  20,000  & 16  & 26 \\
     MARKETING &  45,211& 51    & 2\\
   PENDIGIT &  10,992  & 16 &   10\\
  PHISHING  &  11,055  &  68 &   2\\
 SATTELITE  &   6,435  &  36 &   6\\
SENSORLESS  &  58,509  & 48 &   11\\
   SHUTTLE  &  58,000  &   9 &   7 \\
    \end{tabular}
    \hspace{0.8cm}
    \begin{tabular}{lccc}
    \multicolumn{4}{c}{Ordinal regression problems}\\
    \hline
    dataset & examples & feat & cls\\
    \hline 
     ABALONE &    4,177 &   10 &    19\\
      BANK   &  8,192 & 32 &   10\\
 BIKESHARE   &  17,379 &   11    & 10\\
CALIFORNIA   &  20,640 &  8 &   10\\
      CCPP   &   9568 &  4 &   10\\
       CPU   &   8192 &  21 &  10\\
  FACEBOOK   &   50,993 & 53 &   10\\
       GPU   &   24,1600 &   14 &   10\\
     METRO   &   48,204 &    30 &   10\\
        MSD & 499,671 &   90    & 41\\
SUPERCOND &    21,263 &   81 &   10
    \end{tabular}
    \caption{Summary of 11 classification problems (left) and 11 ordinal regression problems (right) selected from UCI repository~\citep{Dua-UCI-2017} and libSVM datasets~\citep{Chang-libSVM-2011}. The table shows the total number of examples, the number of features and the number of classes. }
    \label{tab:dataset_sumarry}
\end{table}

\subsection{Results}

\subsubsection{Classification problems}

For both classification models, LR and SVM, we recorded the test risk of the classifier $h(x)$ and the AuRC computed from $h(x)$ and uncertainty score $s(x)$ produced by the corresponding method under evaluation. In case of LR, we compare the baseline MCP score (sec~\ref{sec:ClassificationModels}) and scores learned from examples including the state-of-the-art TCP score (sec~\ref{sec:TCPscore}) and the two proposed SELE score (sec~\ref{sec:SELErule}) and Regression (REG) score (sec~\ref{sec:REGrule}). In case of SVM, we compare the baseline Margin score (sec~\ref{sec:ClassificationModels}) against the proposed SELE and REG scores. Note that TCP score is not applicable for SVM classifier as it does not provide an estimate of the posterior probability $p(y\mid x)$. The results are summarized in Table~\ref{tab:ClassifResults}. 

For each dataset we rank the compared methods according to the AuRC~\footnote{The score with smallest AuRC is ranked 1, the second smallest 2 and so on.}. Following the methodology of~\cite{Demsar-JMLR06} we summarize performance of each method by its average rank and use the Friedman test and the post-hoc Nemenyi test to analyze significance of the results.
\begin{itemize}
    \item  We used the Friedman test to check whether the measured average ranks are significantly different from the mean rank. The null hypothesis states that the compared scores are equivalent so that their average ranks should be equal. In both cases the null hypothesis is rejected for p-value $0.05$, i.e., the {\em performance of compared methods is significantly different}. 
    \item We used post-hoc Nemenyi test for pair-wise comparison. For each pair of methods it checks whether their average ranks are significantly different. In case of LR, when we compare $K=4$ methods using $N=11$ datasets, the critical difference for p-value $0.10$ is $CD=1.26$.
    By comparing the average ranks we conclude that {\em SELE score performs significantly better than MCP score and REG score}. In case of SVM, when we compare $K=3$ methods using $N=11$ dataset, the critical difference for p-value $0.10$ is $CD=0.98$. We conclude that {\em SELE performs significantly better than REG score and Margin score}. The data is not sufficient to reach any conclusion about other pairwise comparisons.
    The result of the Nemenyi test is visualized in Figure~\ref{fig:PairWiseCompClassif}.
\end{itemize}

We further computed relative improvement gained by using the scores SELE, TCP and REG, that are all learned from examples, with respect to the baseline scores derived from the classifier output, i.e. MCP in case of LR and Margin score in case of SVM. The results are summarized in Figure~\ref{fig:RelImprovClassif}. It is seen that the MCP uncertainty computed from the estimated $p(y\mid x)$ constitutes much stronger baseline than the Margin score of fully discriminative SVM model. The relative improvement of scores learned on top the LR is only moderate in contrast to the SVM classifier where the improvements are more significant and consistent. It is also seen that on majority of datasest the performance of the learned scores is similar taking into account the statistical error of the AuRC estimate. It is also worth mentioning that results of SELE score have the lowest variance of the AuRC estimates as seen from error bars in Figure~\ref{fig:RelImprovClassif}.

\begin{table}[h!]
  \centering
    \begin{tabular}{rccccc}
    & \multicolumn{1}{c}{LR+MCP} 
    & \multicolumn{1}{c}{LR+SELE} 
    & \multicolumn{1}{c}{LR+REG}  
    & \multicolumn{1}{c}{LR+TCP} &LR   \\
    & AuRC  &AuRC  & AuRC  & AuRC & R@100 \\  
   \hline
    AVILA   &  27.18$\pm$0.55  & {\bf 25.79$\pm$0.44} & 26.62$\pm$0.74  & 26.85$\pm$0.78  &  43.71$\pm$0.42 \\
    CODRNA  & 0.88$\pm$0.05  &{\bf 0.65$\pm$0.03} &  0.82$\pm$0.06  & 0.78$\pm$0.04  &  4.81$\pm$0.08 \\
    COVTYPE & {\bf 16.49$\pm$0.06} &  17.58$\pm$0.07  &  17.62$\pm$0.09  & 17.19$\pm$0.07  &  27.56$\pm$0.17 \\
    IJCNN   & 1.26$\pm$0.04  &{\bf 1.00$\pm$0.03} & 1.16$\pm$0.08  &  1.14$\pm$0.06  &  7.54$\pm$0.15 \\
    LETTER  & 7.43$\pm$0.40  &  {\bf 6.42$\pm$0.34} &  7.44$\pm$0.59  &  6.71$\pm$0.42  &  23.32$\pm$0.60 \\
    MARKETING &  2.60$\pm$0.31  & {\bf 1.88$\pm$0.11}  & 1.97$\pm$0.12  & 1.90$\pm$0.11  &  9.88$\pm$0.29 \\
    PENDIGIT  & {\bf 0.69$\pm$0.04} & 1.55$\pm$0.19  &  1.97$\pm$0.55  &  1.47$\pm$0.39  & 5.29$\pm$0.40 \\
    PHISHING  & 0.76$\pm$0.10  &  {\bf 0.75$\pm$0.10} &  0.91$\pm$0.31  & 0.85$\pm$0.25  &  6.29$\pm$0.44 \\
    SATTELITE & 3.83$\pm$0.26  &{\bf 3.68$\pm$0.27} &  4.93$\pm$1.07  & 4.52$\pm$0.85  &   15.06$\pm$0.46 \\
    SENSORLESS &  2.03$\pm$0.11  & {\bf 1.82$\pm$0.08} &  2.69$\pm$0.09  & 2.37$\pm$0.22  &   8.23$\pm$0.45 \\
    SHUTTLE   & 0.59$\pm$0.09  &  {\bf 0.26$\pm$0.07} & 1.24$\pm$0.51  & 0.58$\pm$0.13  &  3.36$\pm$0.25 \\
    \hline
    average rank   &     2.73           &   1.36        &     3.55          &      2.36 \\ 
    \multicolumn{5}{c}{(a) Uncertainty scores on top of LR classifier.}\\
    \end{tabular}
   \centering
    \begin{tabular}{rcccc}
    \multicolumn{5}{c}{}\\
    & \multicolumn{1}{c}{SVM+MARGIN} 
    & \multicolumn{1}{c}{SVM+SELE} 
    & \multicolumn{1}{c}{SVM+REG} & SVM \\
    & AuRC &AuRC & AuRC & R@100 \\  
    \hline
         AVILA &  31.65$\pm$0.83 &{\bf 25.26$\pm$0.67} & 25.95$\pm$0.75  &  43.34$\pm$0.70 \\ 
        CODRNA &  0.89$\pm$0.05  &{\bf 0.65$\pm$0.03} &  0.82$\pm$0.05  &  4.78$\pm$0.08 \\
       COVTYPE & 25.71$\pm$0.81  & 17.79$\pm$0.21  &{\bf 17.77$\pm$0.14} &  27.41$\pm$0.11\\ 
         IJCNN &  1.40$\pm$0.04  & {\bf 1.01$\pm$0.04} & 1.18$\pm$0.08  &  7.56$\pm$0.16 \\
        LETTER & 10.20$\pm$0.22  & {\bf 6.05$\pm$0.65} & 7.15$\pm$0.65  &  22.06$\pm$0.69 \\
     MARKETING &  2.24$\pm$0.20  & {\bf 1.97$\pm$0.10} & 2.04$\pm$0.20  &  10.48$\pm$0.39 \\
      PENDIGIT & 2.79$\pm$0.40  & {\bf 1.57$\pm$0.21} & 2.16$\pm$0.43  &  4.88$\pm$0.57 \\
      PHISHING &  0.84$\pm$0.12  & {\bf 0.72$\pm$0.12} & 0.90$\pm$0.30  &  6.37$\pm$0.44 \\
     SATTELITE & 4.75$\pm$0.60  & {\bf 3.82$\pm$0.27} & 5.44$\pm$0.68  &  15.36$\pm$0.37 \\
    SENSORLESS &  3.68$\pm$0.20  & {\bf 1.56$\pm$0.08} & 2.46$\pm$0.29  &  6.92$\pm$0.17 \\
       SHUTTLE & 1.31$\pm$0.47  & {\bf 0.24$\pm$0.07} & 0.55$\pm$0.15  &   2.02$\pm$0.15 \\
       \hline
        average rank &  2.82      &  1.09         &      2.09              & \\
    \multicolumn{5}{c}{(b) Uncertainty scores on top of SVM classifier.}\\
    \end{tabular}
    \caption{Performance of the uncertainty scores on 11 classification problems. The scores are constructed on top of the LR classifier and the SVM classifier measured in terms of AuRC. For each score we show the mean and the standard deviation of the test AuRC computed over 5 random splits. We compare the performance of scores learned from examples (SEL, REG, TCP) and the baseline scores derived from the classifiers output (MCP and Margin score). The last column shows the risk of the base (non-selective) classifier. All the values correspond to percentage of misclassification. The best results for each dataset are shown in bold. The last row shows the average rank.}
    \label{tab:ClassifResults}
\end{table}

\begin{figure}[h!]
    \begin{tabular}{cc}
     \includegraphics[width=0.42\columnwidth]{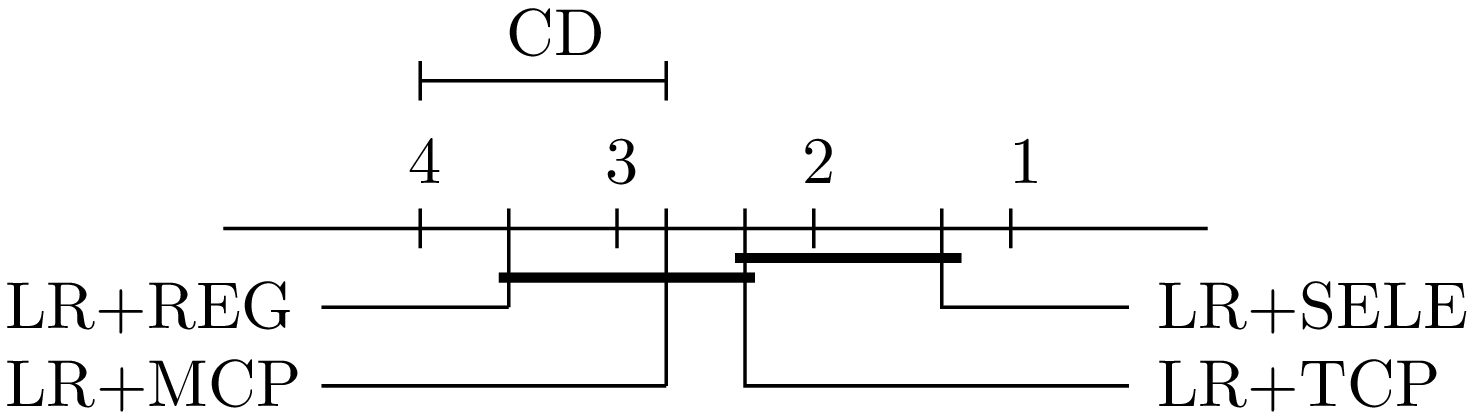} &
        \includegraphics[width=0.42\columnwidth]{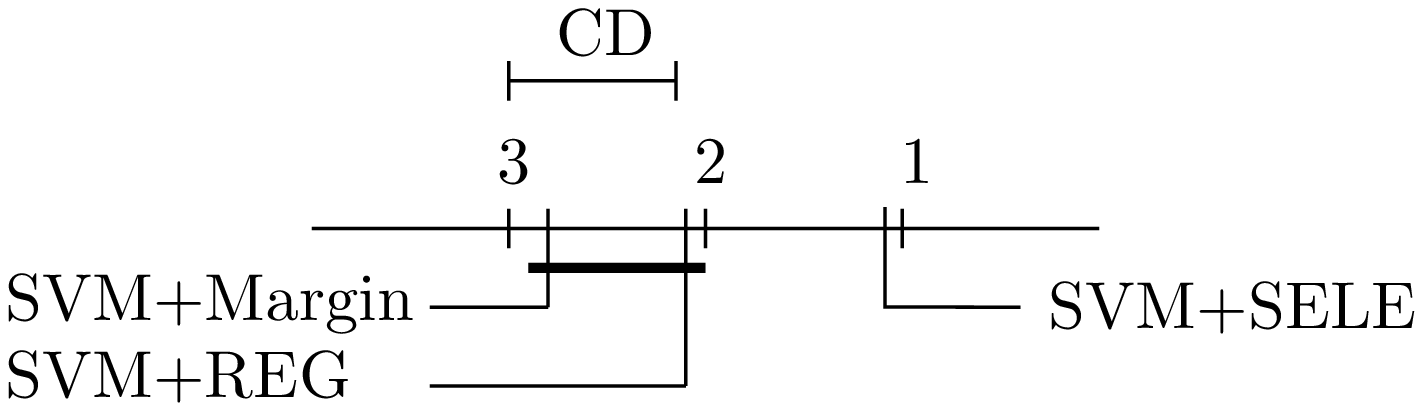} \\
      a) Average ranks for scores on top LR. & b) Average ranks for scores on top of SVM.
    \end{tabular}
    \caption{Comparison of all uncertainty scores against each other with the Nemenyi test. The test is computed separately for the LR classifier (4 scores compared) and the SVM classifier (3 scores compared). The figures show the average ranks for each score and the critical distance (CD). Groups of scores that are not significantly different at $p$-value $0.10$ are connected. 
    }
    \label{fig:PairWiseCompClassif}
\end{figure}

\begin{figure}[h!]            
            \begin{tabular}{cc}
 (a) Improvement over LR+MCP score. & (b) Improvement over SVM+Margin score.\\
        \includegraphics[width=0.45\columnwidth]{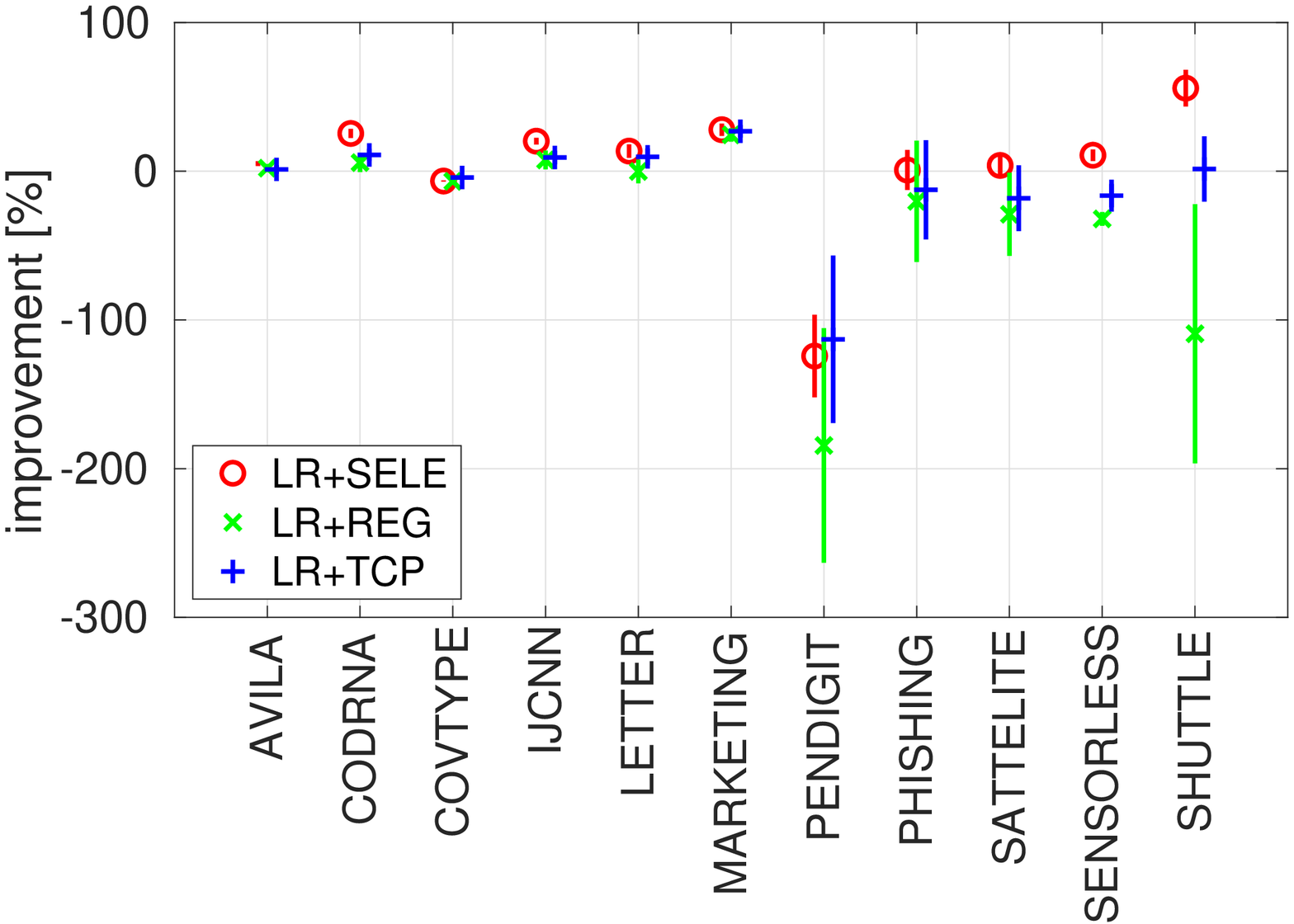} &
        \includegraphics[width=0.45\columnwidth]{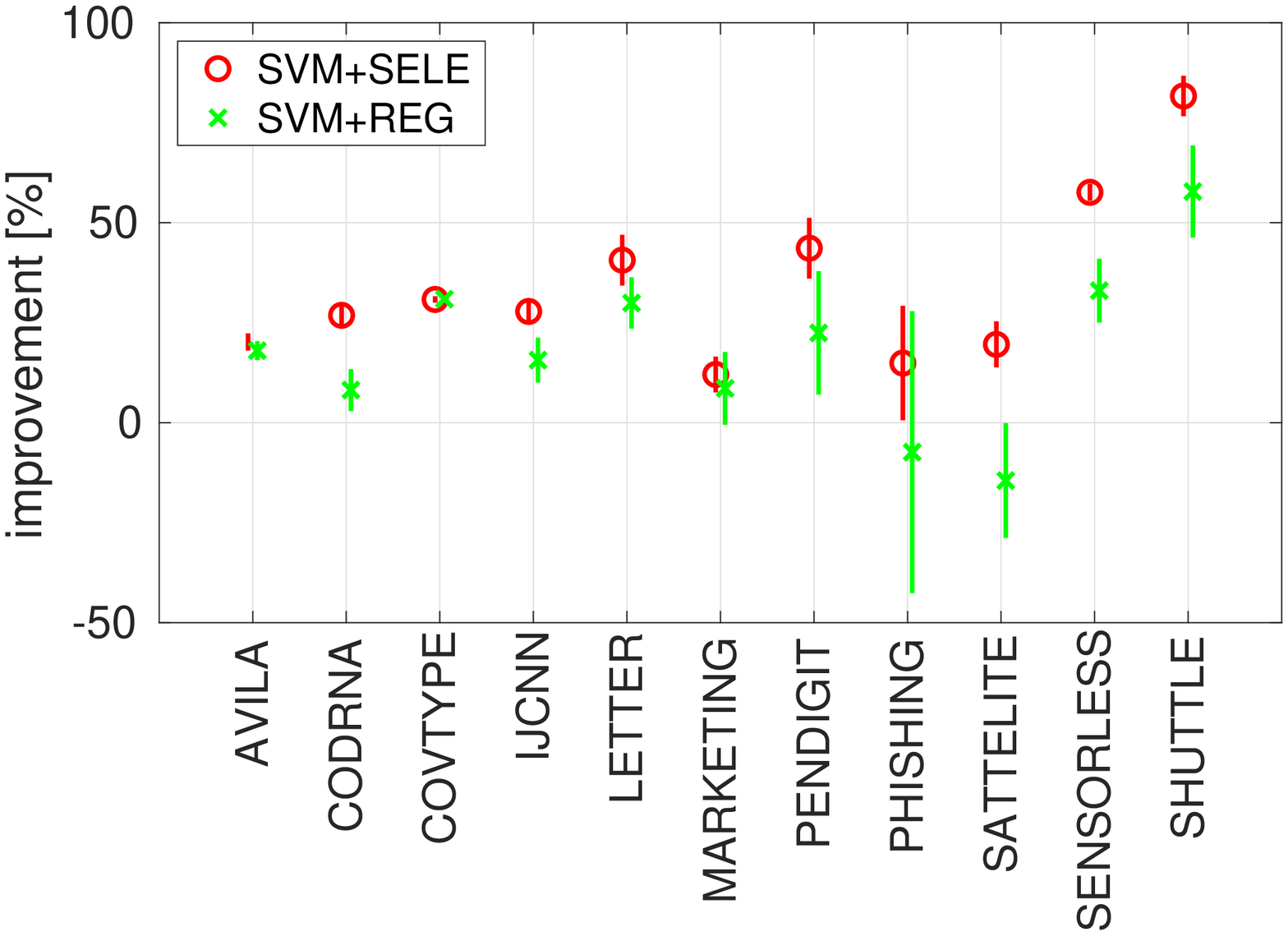}  \\
\end{tabular}
    \caption{Relative improvement gained by using the uncertainty scores (SELE, REG and TCP) that are learned from examples over the baseline scores (MCP for LR and Margin score for SVM) constructed from the classifier output. The relative improvement is computed as $100\times ({\rm AuRC}_{\rm baseline}-{\rm AuRC}_{\rm method})/{\rm AuRC}_{\rm baseline}$. 
    We show the mean and the standard deviation (error bar) of the relative improvement computed over the random 5 splits.
    }
    \label{fig:RelImprovClassif}
\end{figure}

\subsubsection{Ordinal regression}

In case of SVOR classifier we compared the baseline Margin score (sec~\ref{sec:SVOR}) and the scores learned from examples including SELE (sec~\ref{sec:SELErule}) and REG score (sec~\ref{sec:REGrule}). We used exactly the same evaluation protocol as for the classification task, however, instead of 0/1-loss the errors were evaluated by the MAE loss. The results are summarized in Table~\ref{tab:OrdRegResults}.

We again ranked the methods according to the AuRC and summarized their performance by the average rank:
\begin{itemize}
    \item  We applied the Friedman test checking whether the measured average ranks are significantly different from the mean rank. The null hypothesis, stating that the compared scores are equivalent, is rejected for p-value $0.05$, i.e., the {\em performance of compared methods is significantly different}. 
    \item We used post-hoc Nemenyi test to check for each pair whether their average ranks are significantly different. Considering $K=3$ compared methods using $N=11$ dataset yields the critical difference for p-value $0.10$ is $CD=0.98$. We conclude that {\em both SELE and REG scores are significantly better than the baseline Margin score}. The data is not sufficient to reach any conclusion about comparisons of SELE and REG. The result of the Nemenyi test is visualized in Figure~\ref{fig:OrRegResultsStatistics}(a).
\end{itemize}

The relative improvement gained by using SELE and REG scores learned from examples w.r.t. the baseline Margin score is shown in Figure~\ref{fig:OrRegResultsStatistics}(b). It is seen that the performance of the learned scores is similar and that they consistently outperform the baseline by a significant margin.

\begin{table}[h!]
  \centering
    \begin{tabular}{rcccc}
    \multicolumn{5}{c}{}\\
    & \multicolumn{1}{c}{SVOR+MARGIN} 
    & \multicolumn{1}{c}{SVOR+SELE} 
    & \multicolumn{1}{c}{SVOR+REG} & SVOR \\
    & AuRC &AuRC & AuRC & R@100 \\  
    \hline
  CALIFORNIA &  0.98$\pm$0.03  &  {\bf 0.82$\pm$0.02} &    0.84$\pm$0.02  & 1.18$\pm$0.01\\ 
       ABALONE &  1.48$\pm$0.10    &{\bf 1.19$\pm$0.09} &   1.21$\pm$0.05  &  1.54$\pm$0.02\\ 
          BANK &  1.07$\pm$0.04   & 0.99$\pm$0.04  &    {\bf 0.98$\pm$0.03} &  1.50$\pm$0.03\\ 
           CPU &  0.41$\pm$0.01    &{\bf 0.36$\pm$0.02} & 0.36$\pm$0.02  &    0.64$\pm$0.03\\ 
     BIKESHARE &  1.60$\pm$0.07    & {\bf 1.25$\pm$0.01} & 1.27$\pm$0.01  &    1.70$\pm$0.03\\ 
          CCPP &  0.46$\pm$0.02  & {\bf 0.41$\pm$0.02} &  0.42$\pm$0.02  &   0.58$\pm$0.02\\ 
      FACEBOOK &  0.51$\pm$0.01  & 0.37$\pm$0.01  & {\bf 0.36$\pm$0.01} &   1.11$\pm$0.01\\ 
           GPU&  1.43$\pm$0.02  &  {\bf 0.85$\pm$0.03} &  0.86$\pm$0.03  &   1.49$\pm$0.02\\ 
         METRO &  2.20$\pm$0.07  & {\bf 1.97$\pm$0.01} &   1.98$\pm$0.03  &   2.37$\pm$0.03\\ 
           MSD &  6.23$\pm$0.07  & 4.26$\pm$0.03  &    {\bf 4.25$\pm$0.03} &   6.22$\pm$0.03\\ 
  SUPERCONDUCT &  0.98$\pm$0.02  & {\bf 0.75$\pm$0.01} &  0.77$\pm$0.01  & 1.07$\pm$0.01\\ 
           \hline
    average rank & 3.00     &       1.27      &       1.73      & 
    \end{tabular} 
      \caption{Performance of the uncertainty scores on 11 ordinal regression problems. The scores are constructed on top of the SVOR classifier. For each score we show the mean and the standard deviation of the test AuRC computed over 5 random splits. We compare the performance of scores learned from examples (SEL, REG) and the baseline Margin score derived from the SVOR classifier output. The last column shows the risk of the base (non-selective) classifier. All the values correspond to the Mean Absolute Error (MAE). The best results for each dataset are shown in bold. The last row shows the average rank.}
    \label{tab:OrdRegResults}
\end{table}

\begin{figure}[h!]
    \begin{tabular}{cc}
        a) Average rank for scores on top of SVOR. &
        b) Relative improvement over \\
        &  SVOR+Margin score. \\
        \includegraphics[width=0.45\columnwidth]{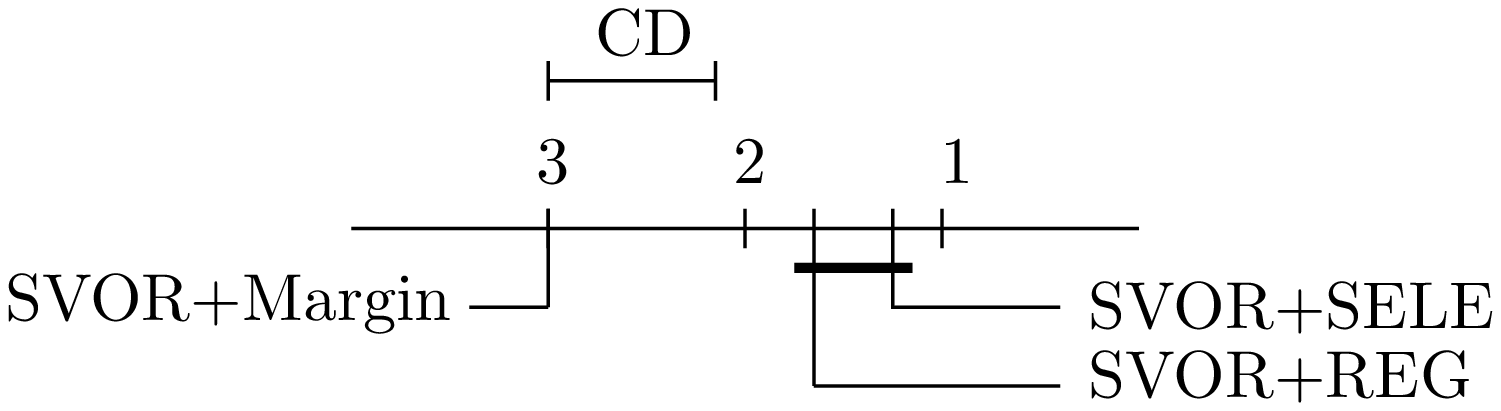} &
        \includegraphics[width=0.45\columnwidth]{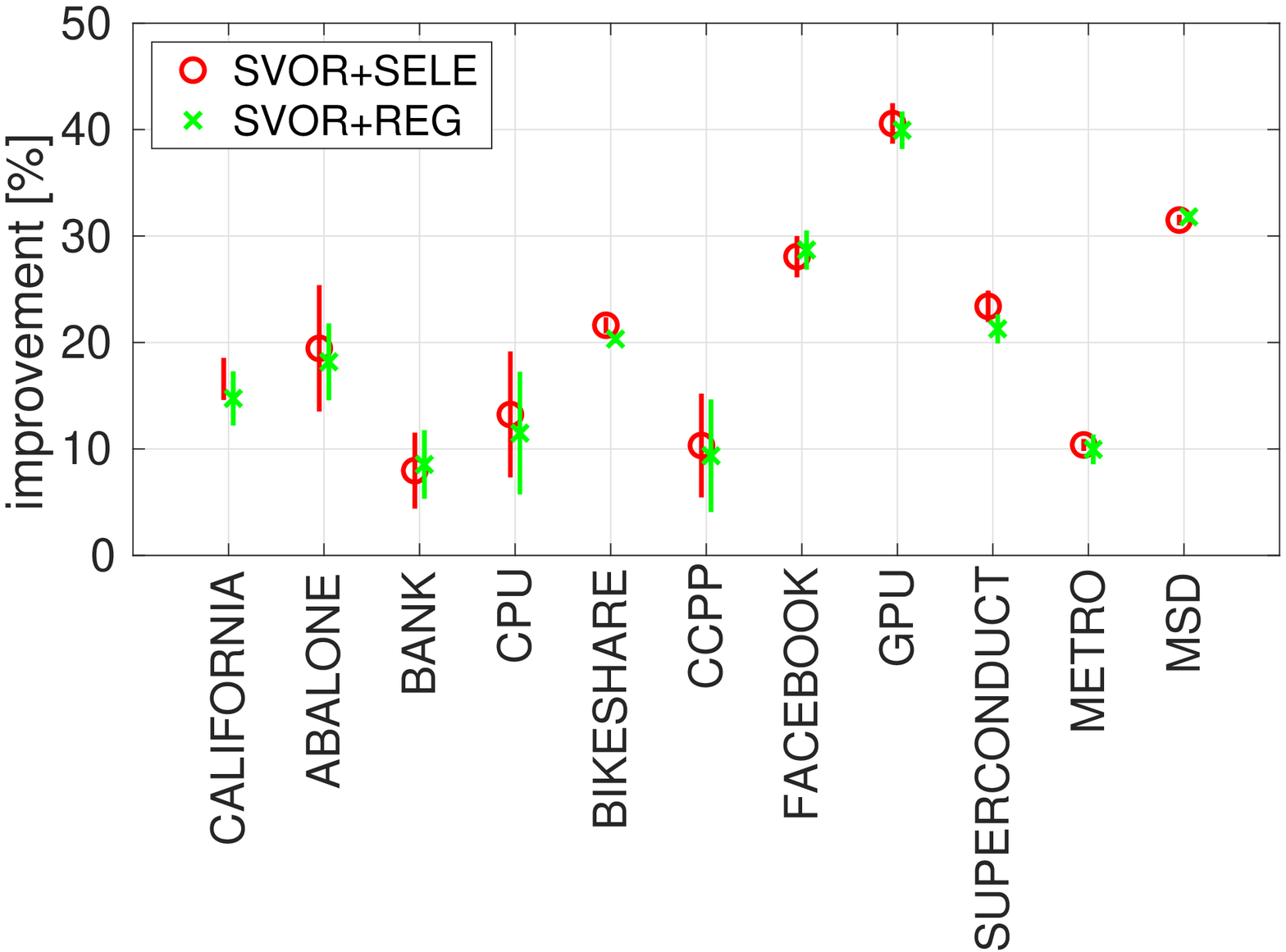} 
    \end{tabular}
    \caption{Statistics derived from results obtained on 11 ordinal regression problems. Figure (a) shows pair-wise comparison of uncertainty scores with the Nemenyi test. The figure shows the average ranks for each score and the critical distance (CD). Groups of scores that are not significantly different at $p$-value $0.10$ are connected. 
    Figure (b) shows relative improvement gained by using the SELE and REG uncertainty scores learned from examples over the baseline Margin score. 
    }
    \label{fig:OrRegResultsStatistics}
\end{figure}

\subsubsection{Structured Output Classification}
We trained SELE score (sec~\ref{sec:SELErule}) and REG score (sec~\ref{sec:REGrule}) on top of the DLIB detector and compared them with the baseline which uses the DLIB face detector score (sec~\ref{sec:DLIB}) as an uncertainty measure. The Risk-Coverage curves of the three methods and their corresponding AuRC are shown in Figure~\ref{fig:dlibResults}(a). Both the learned scores, SELE and REG, are significantly better than the baseline face detector score. The SELE is slightly better than REG score. The largest difference between the three scores are seen for low values of coverage where SELE most outperforms the other two methods. High selective risk for low values of coverage means that faces with very bad landmark predictions are assigned the lowest uncertainty scores. SELE score does not suffer from this problem. This can be seen in Figure~\ref{fig:DlibExamples} where we show examples of 10 test faces with the lowest uncertainty and the highest uncertainty predicted by SELE. 

Unlike the experiments in the previous section, the number of parameters to be learned ($m=2,448$) relative to the number of training examples ($n=3,484$) is much higher. To see whether the number of examples is sufficient we trained SELE and REG scores from increasingly bigger training set. Figure~\ref{fig:dlibResults}(b) shows the test AuRC as a function of the the number of training examples. It is seen that AuRC of the SELE is not yet saturated and would most likely converge to a significantly lower value relative to the REG score provided 300-W dataset had more training examples. 

\begin{figure}[h!]
    \centering
    \begin{tabular}{cc}
    (a) Risk-Coverage curve& (b) Learning curve\\
    \includegraphics[width=0.48\columnwidth]{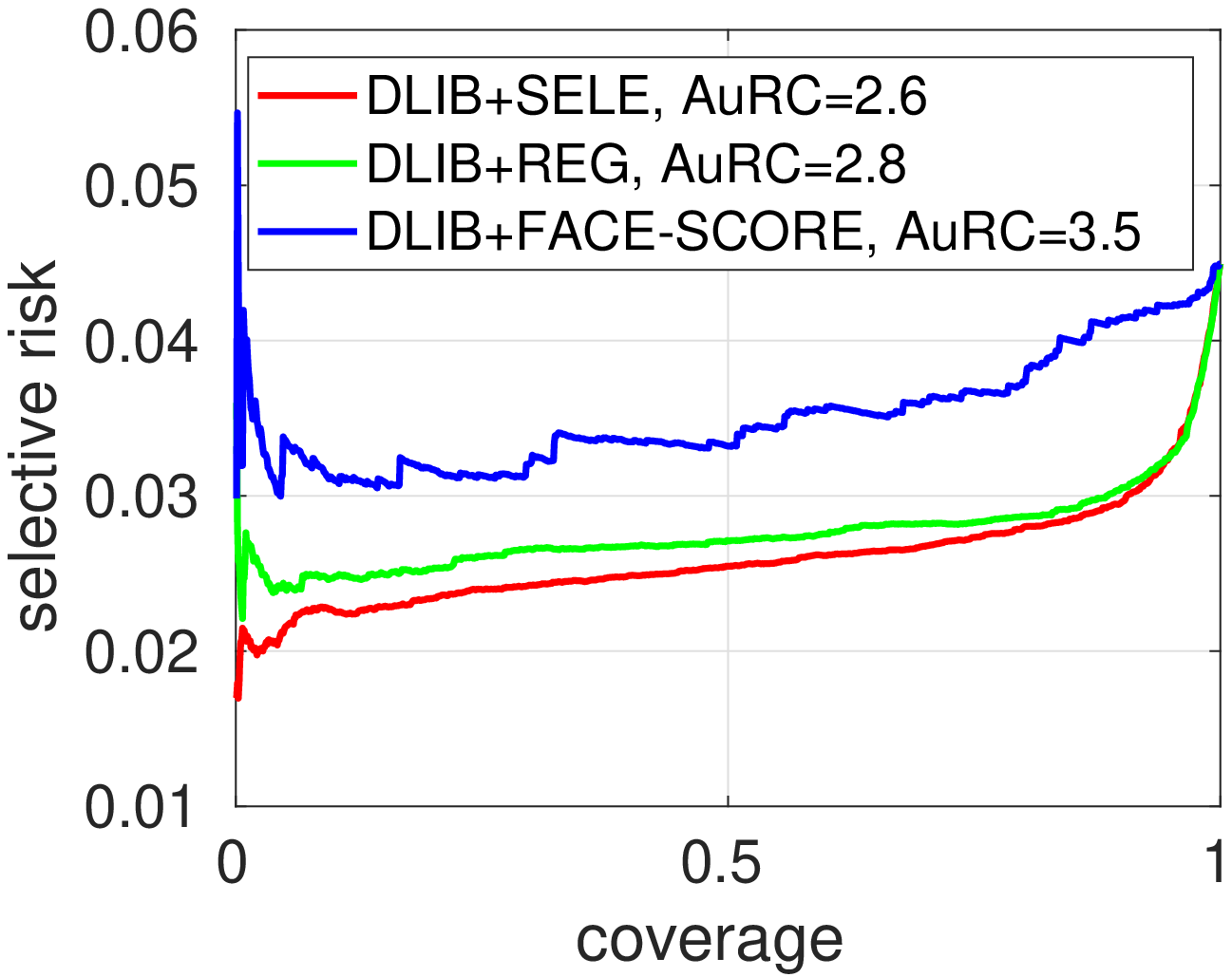}&
    \includegraphics[width=0.465\columnwidth]{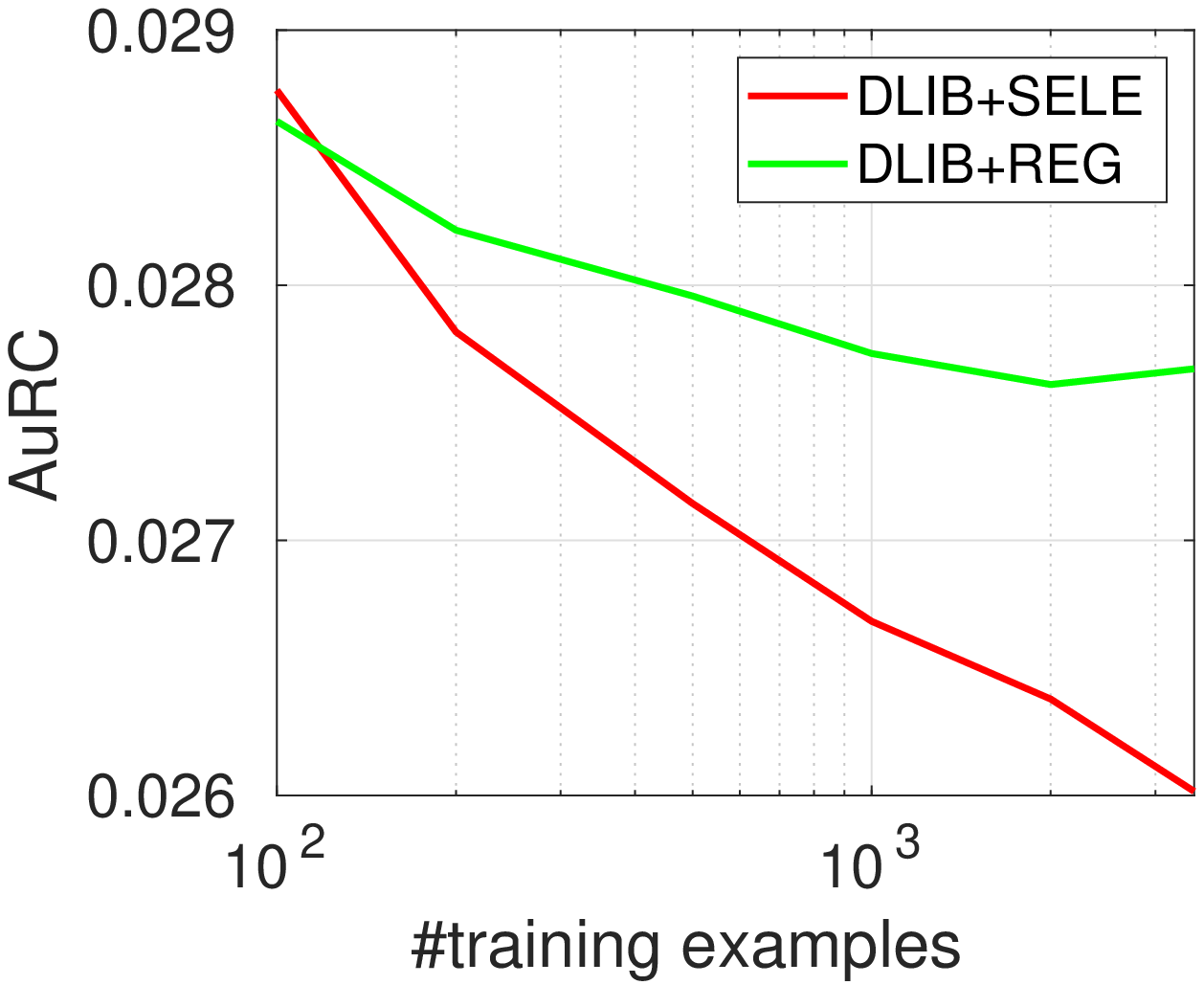}
    \end{tabular}
    \caption{Evaluation of uncertainty scores on top of DLIB landmark detector using the 300-W benchmark. Figure (a) shows the RC curve and AuRC of computed from the predictions of DLIB detector endowed with the three compared uncertainty scores: the proposed SELE and REG scores and the DLIB face detector score used as a baseline. Figure (b) shows the test AuRC for SELE and REG scores as a function of the number of training examples.}
    \label{fig:dlibResults}
\end{figure}

\begin{figure}[h!]
    \centering
    \begin{tabular}{ccccc}
        \includegraphics[width=0.22\columnwidth]{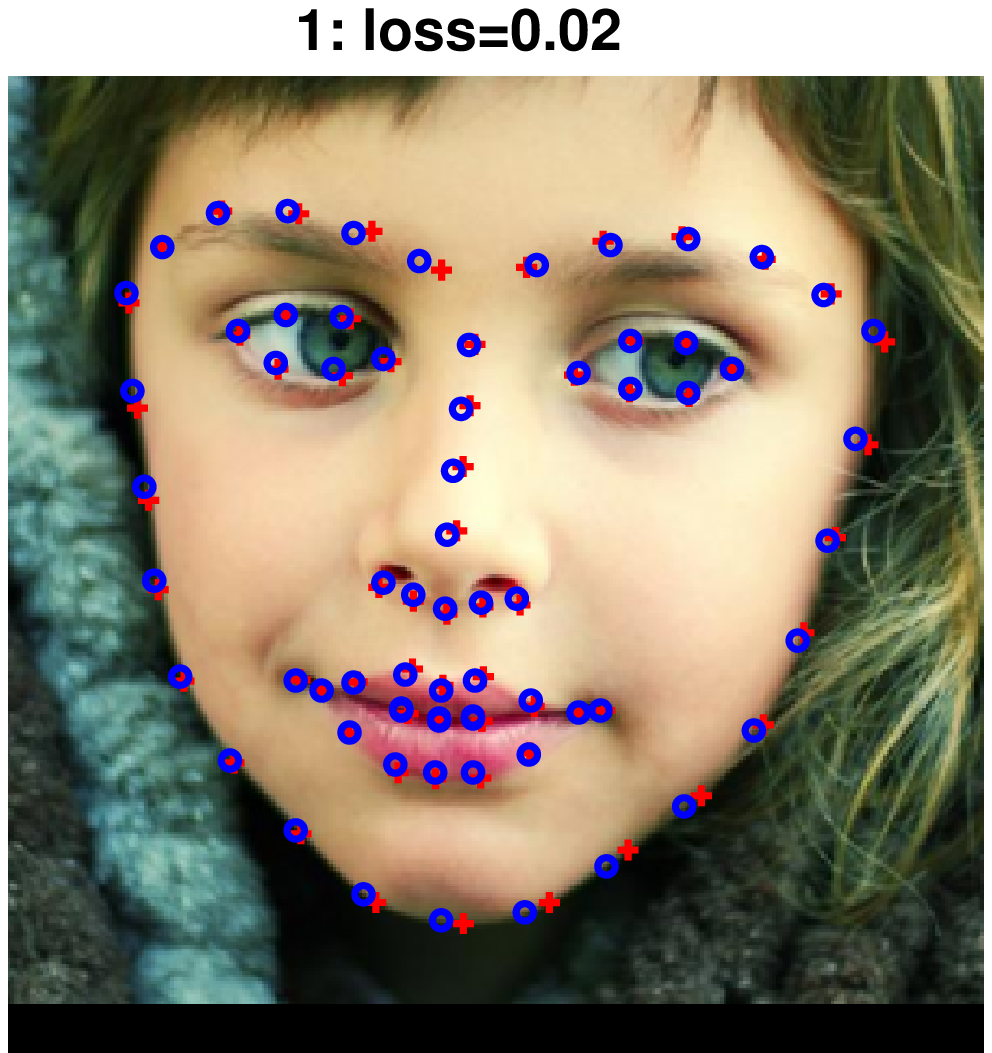}&
        \includegraphics[width=0.22\columnwidth]{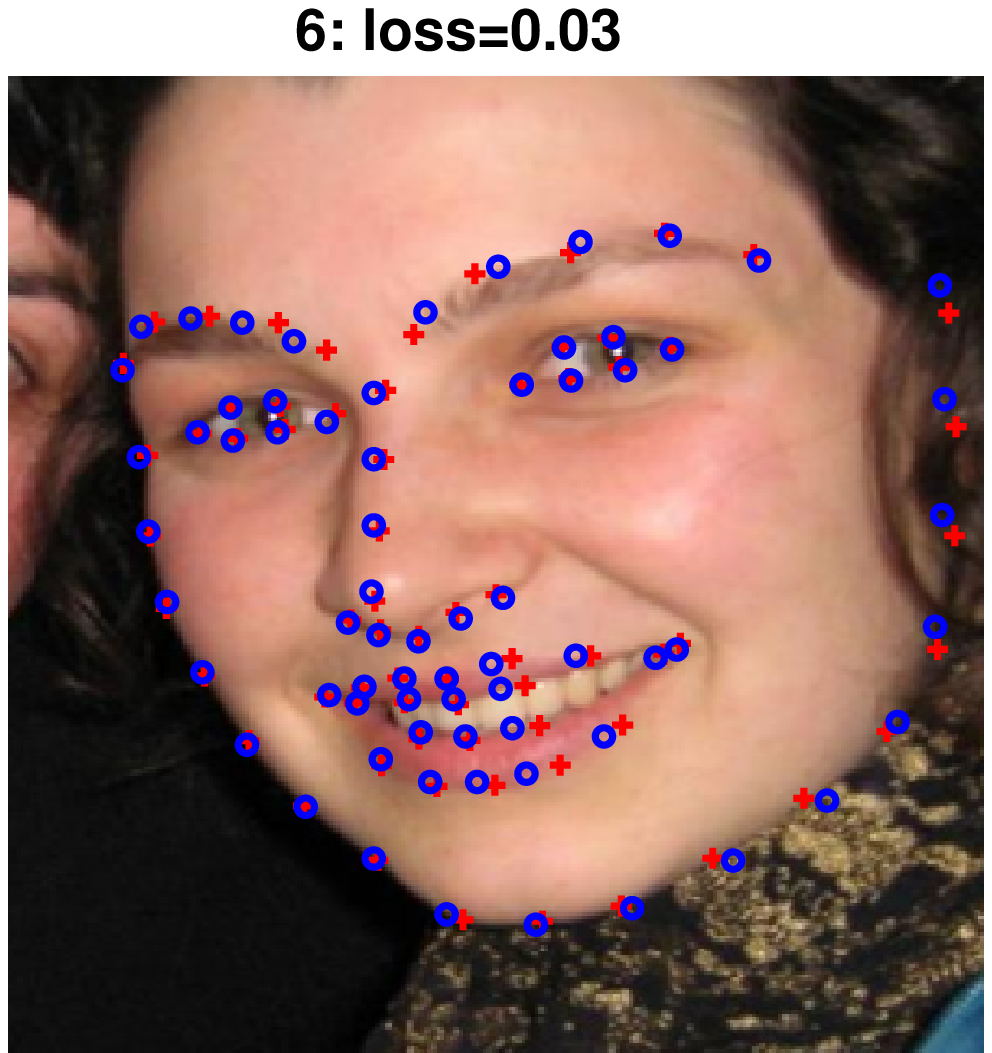}&
        \includegraphics[width=0.22\columnwidth]{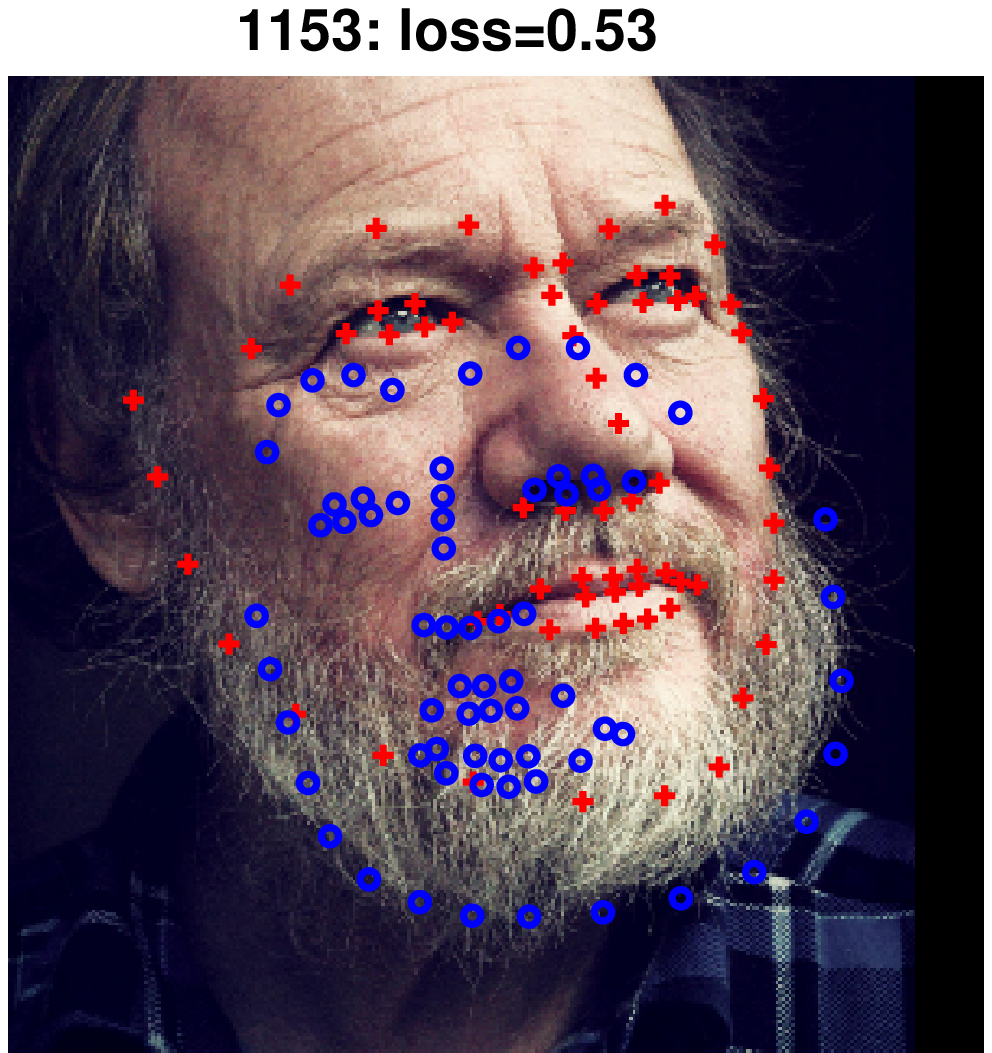}&
        \includegraphics[width=0.22\columnwidth]{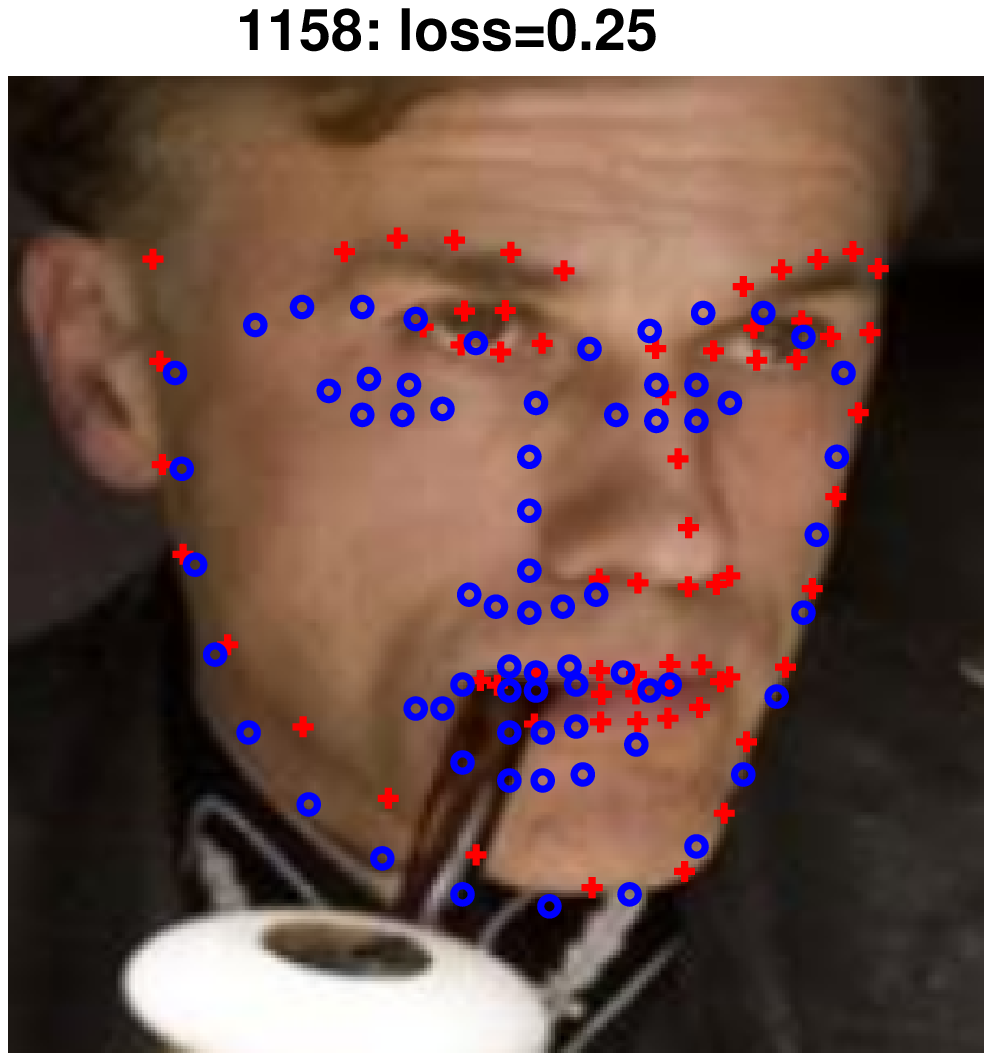}\\
        \includegraphics[width=0.22\columnwidth]{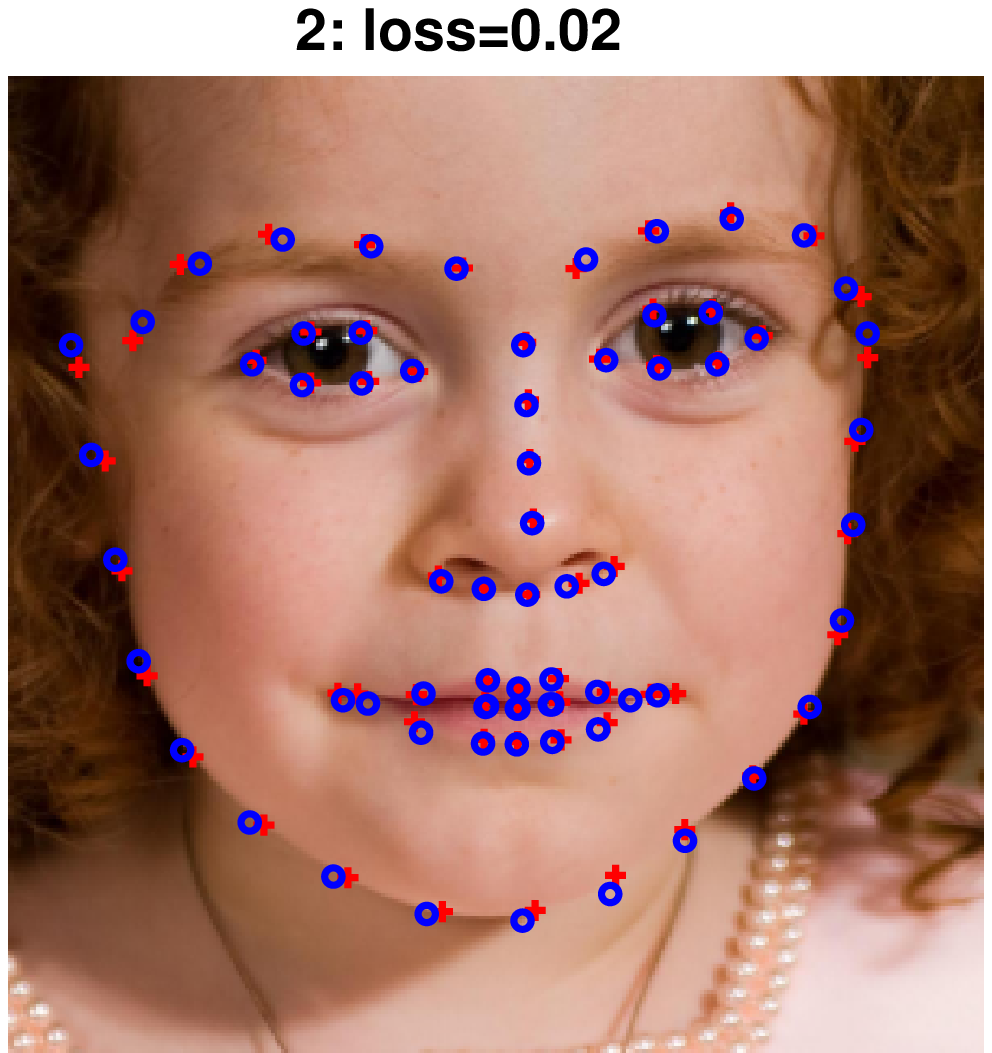}&
        \includegraphics[width=0.22\columnwidth]{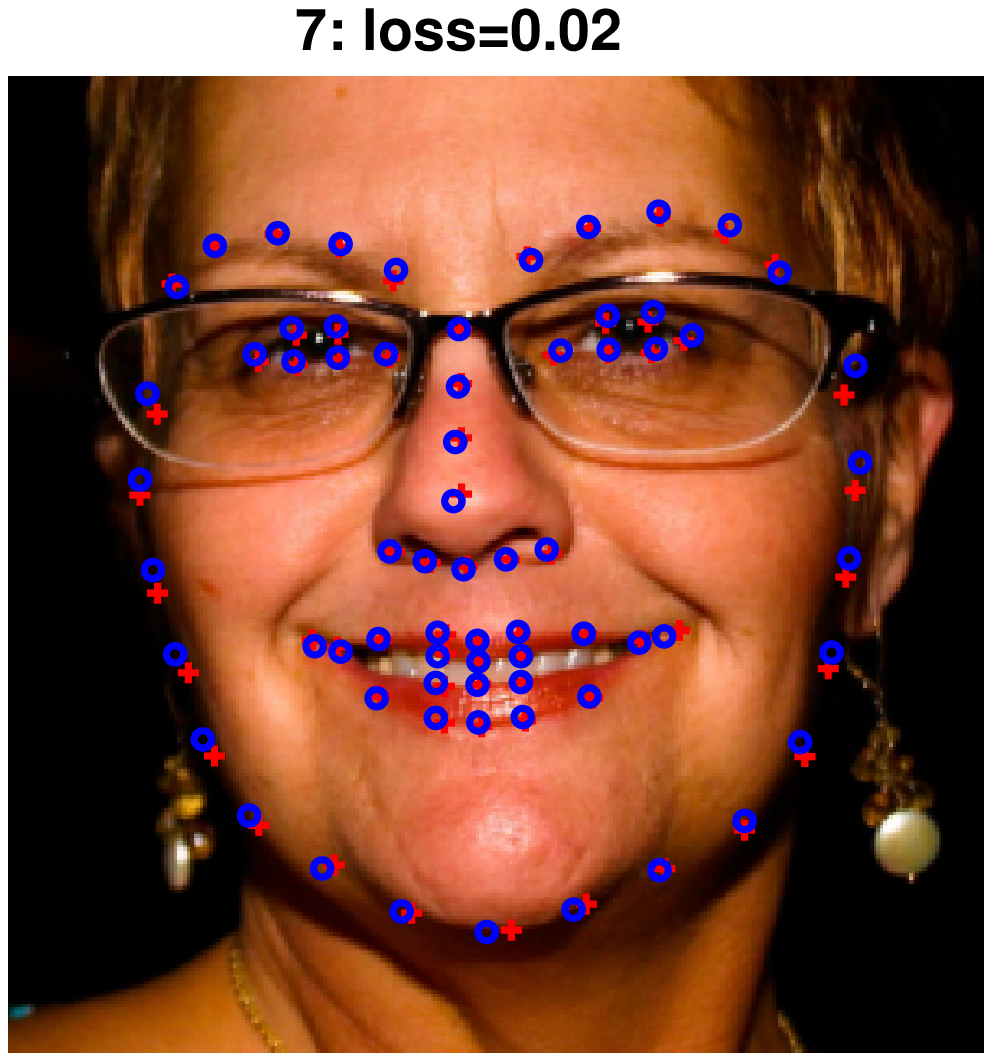}&
        \includegraphics[width=0.22\columnwidth]{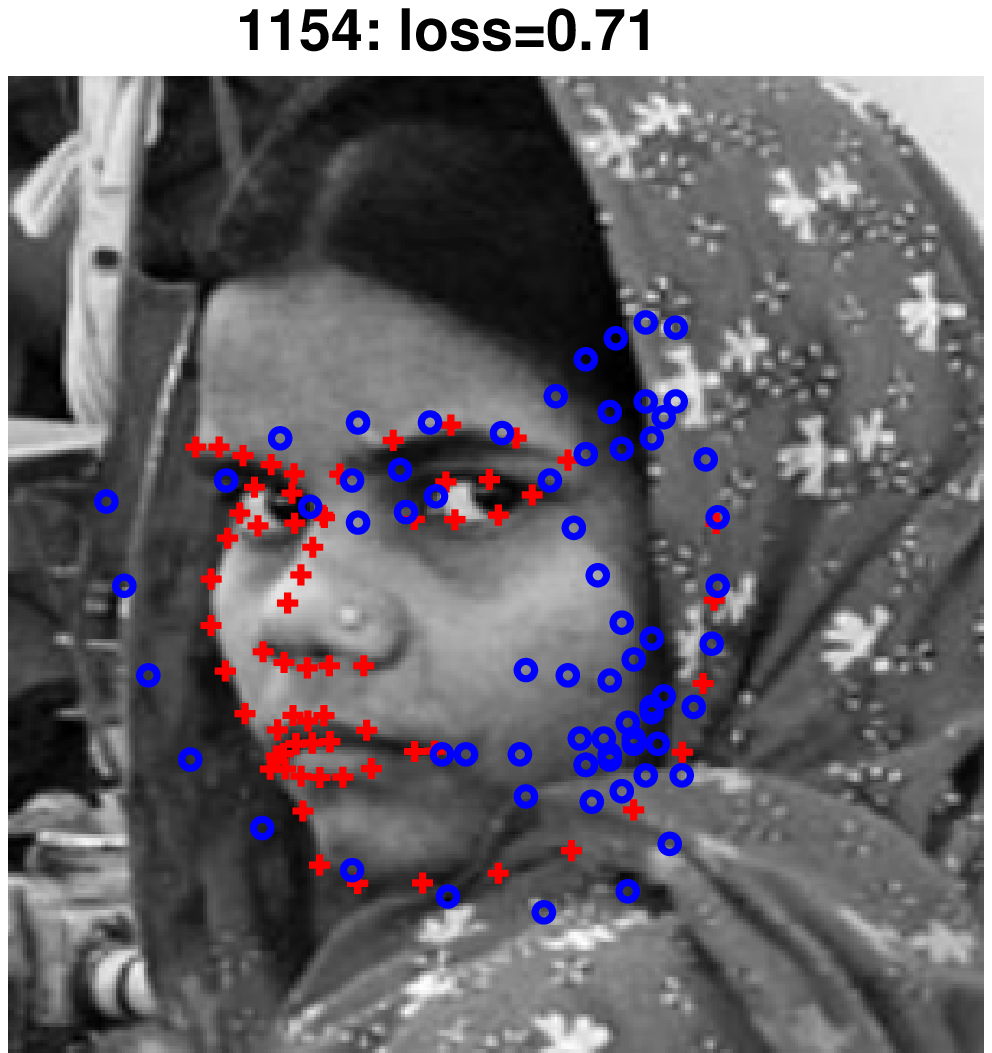}&
        \includegraphics[width=0.22\columnwidth]{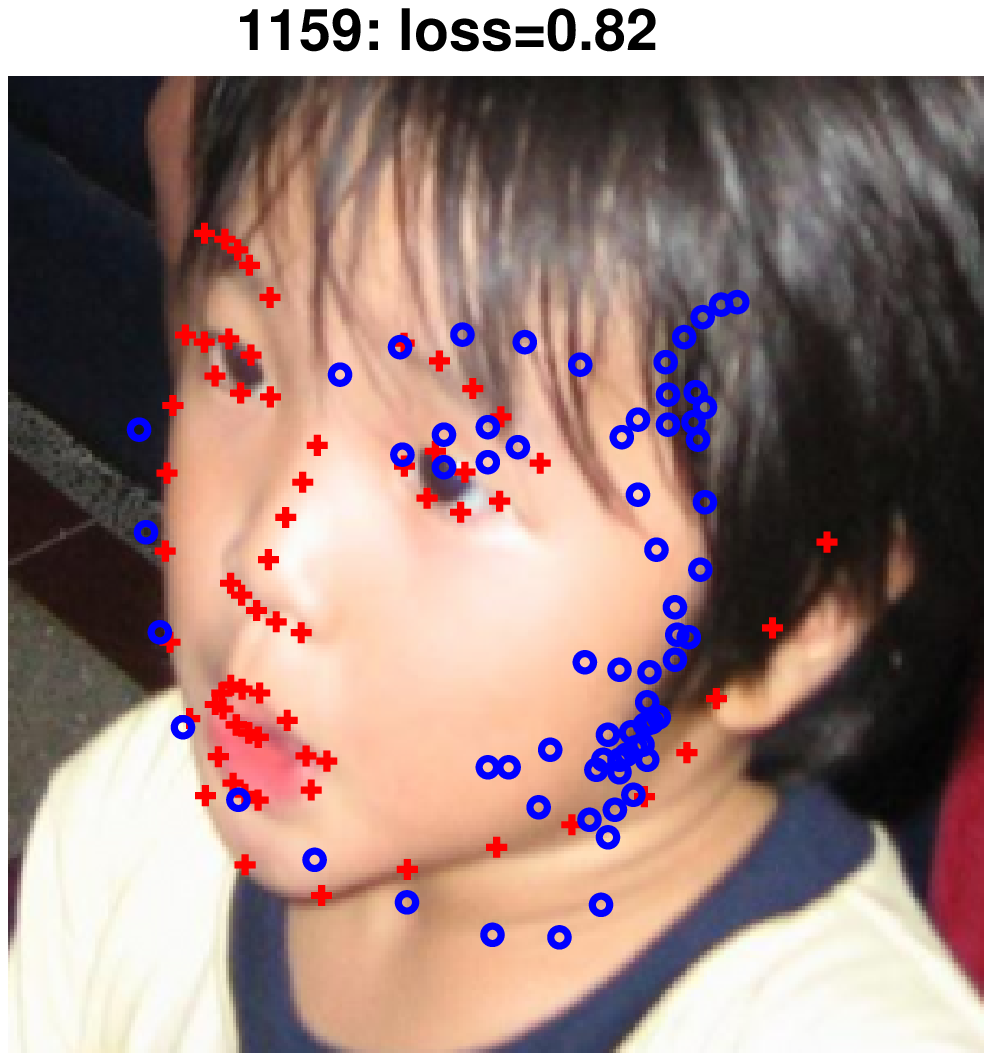}\\
        \includegraphics[width=0.22\columnwidth]{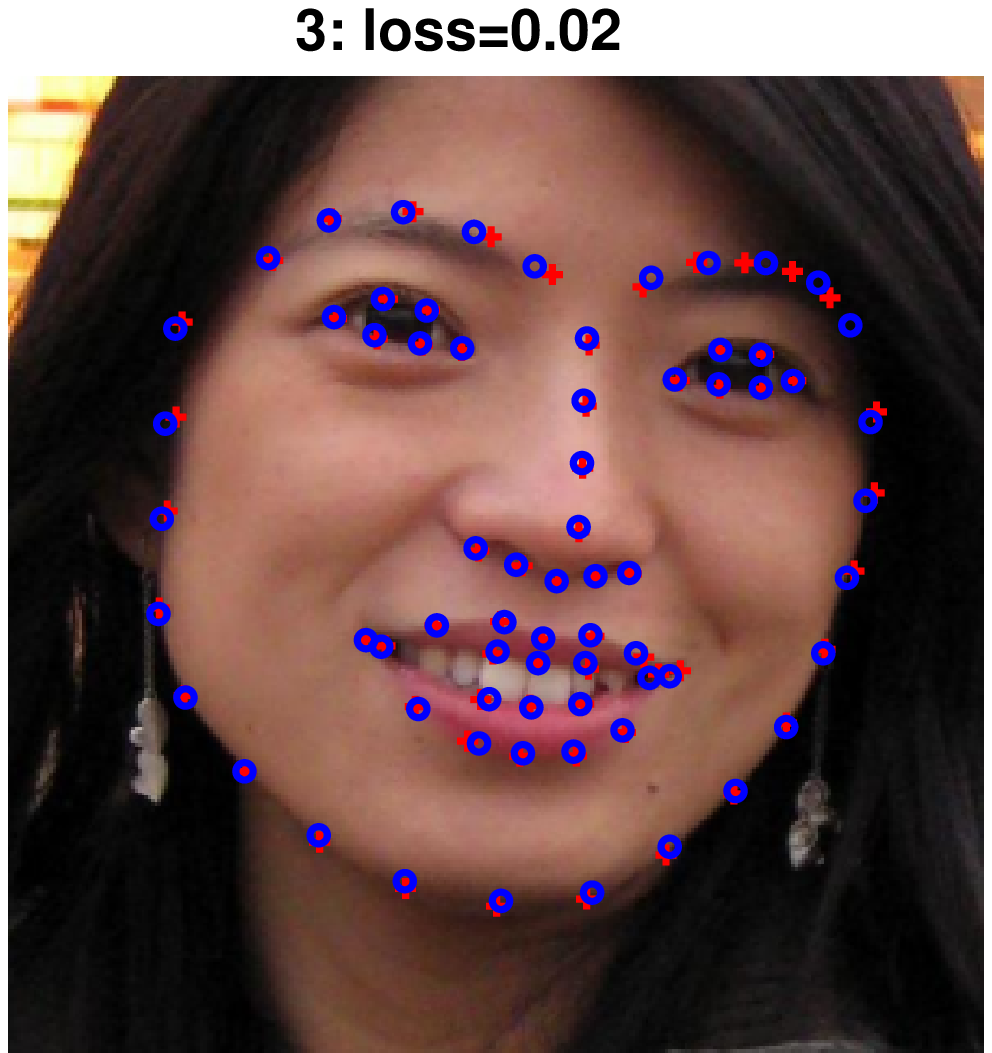}&
        \includegraphics[width=0.22\columnwidth]{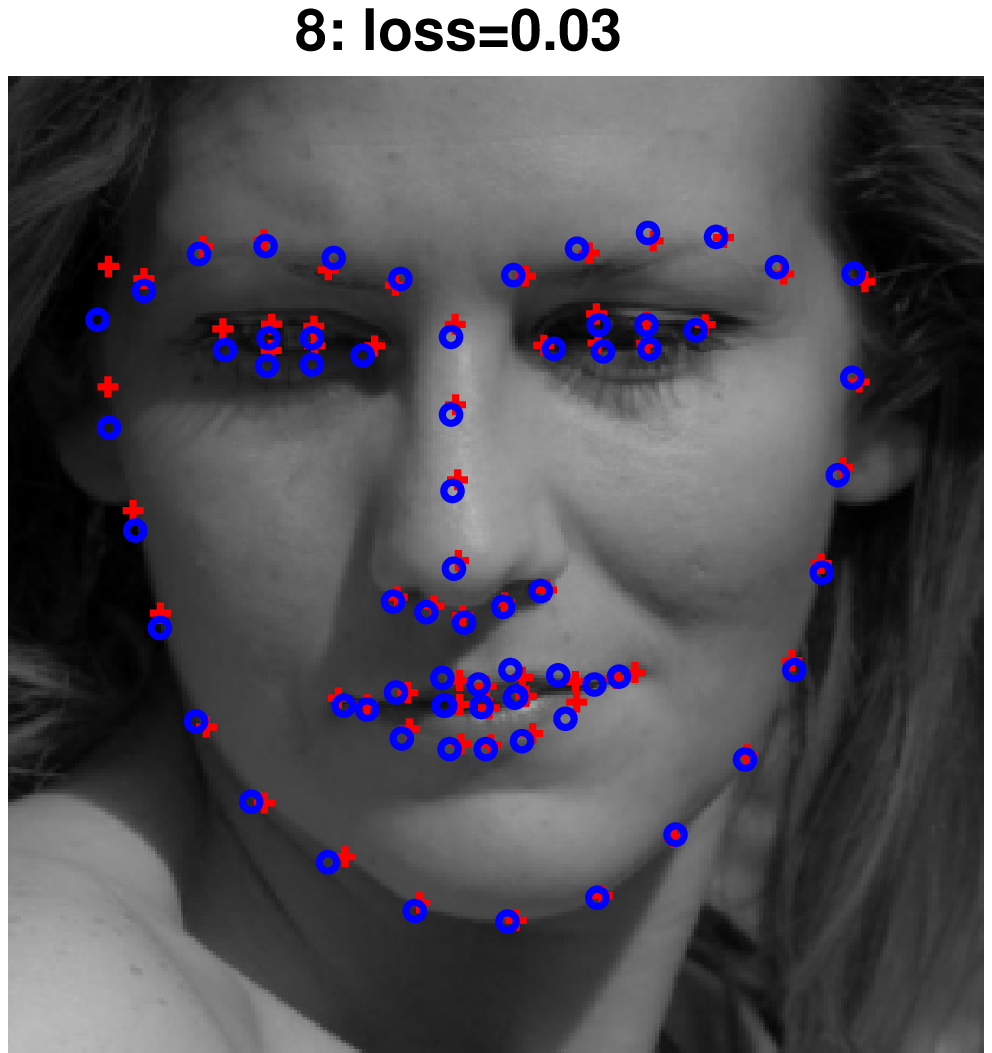}&
        \includegraphics[width=0.22\columnwidth]{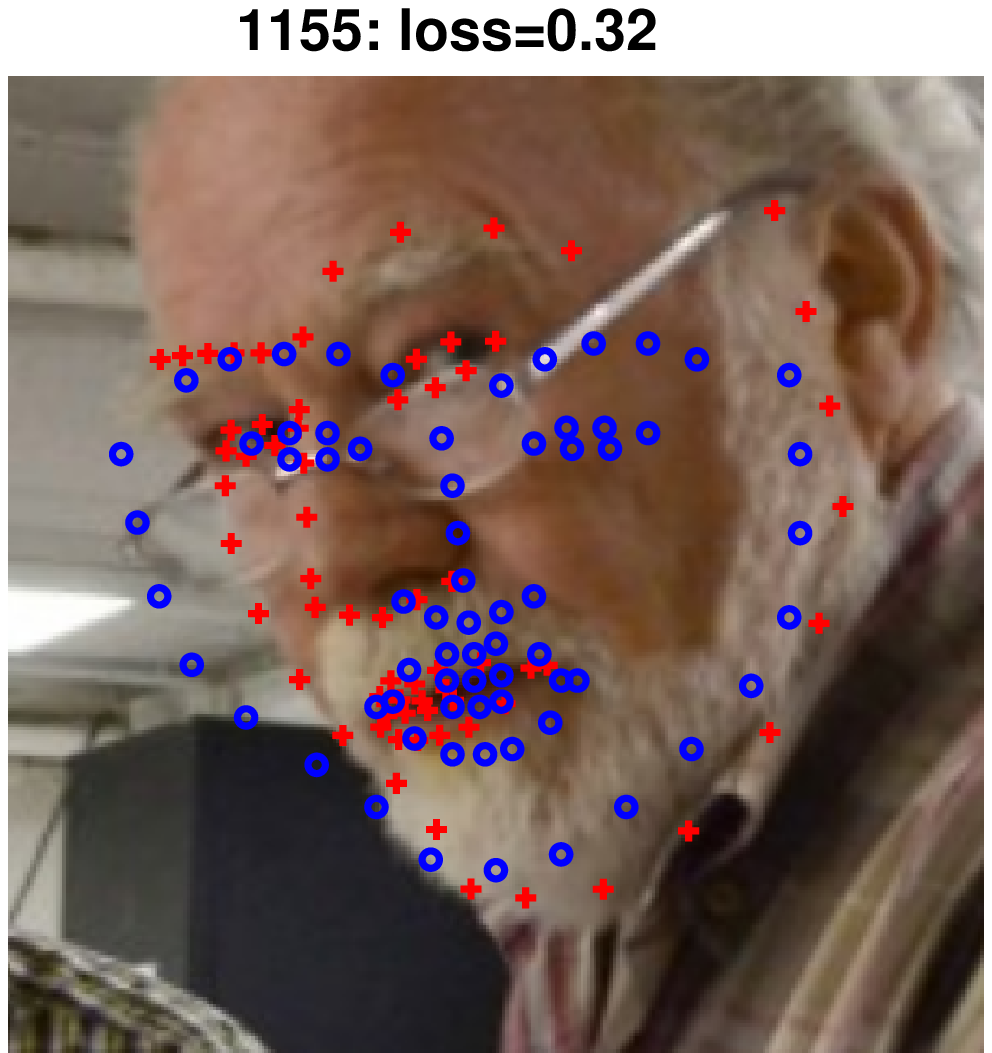}&
        \includegraphics[width=0.22\columnwidth]{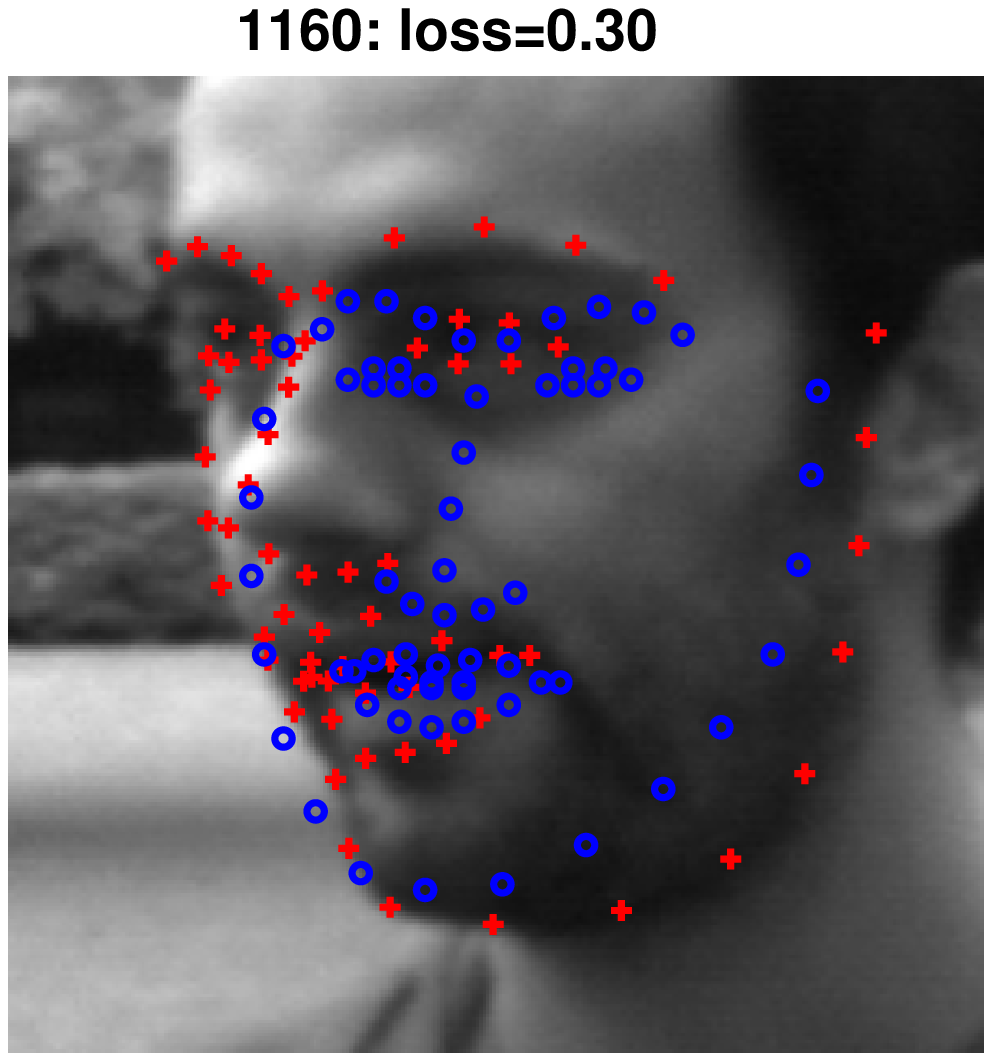}\\
        \includegraphics[width=0.22\columnwidth]{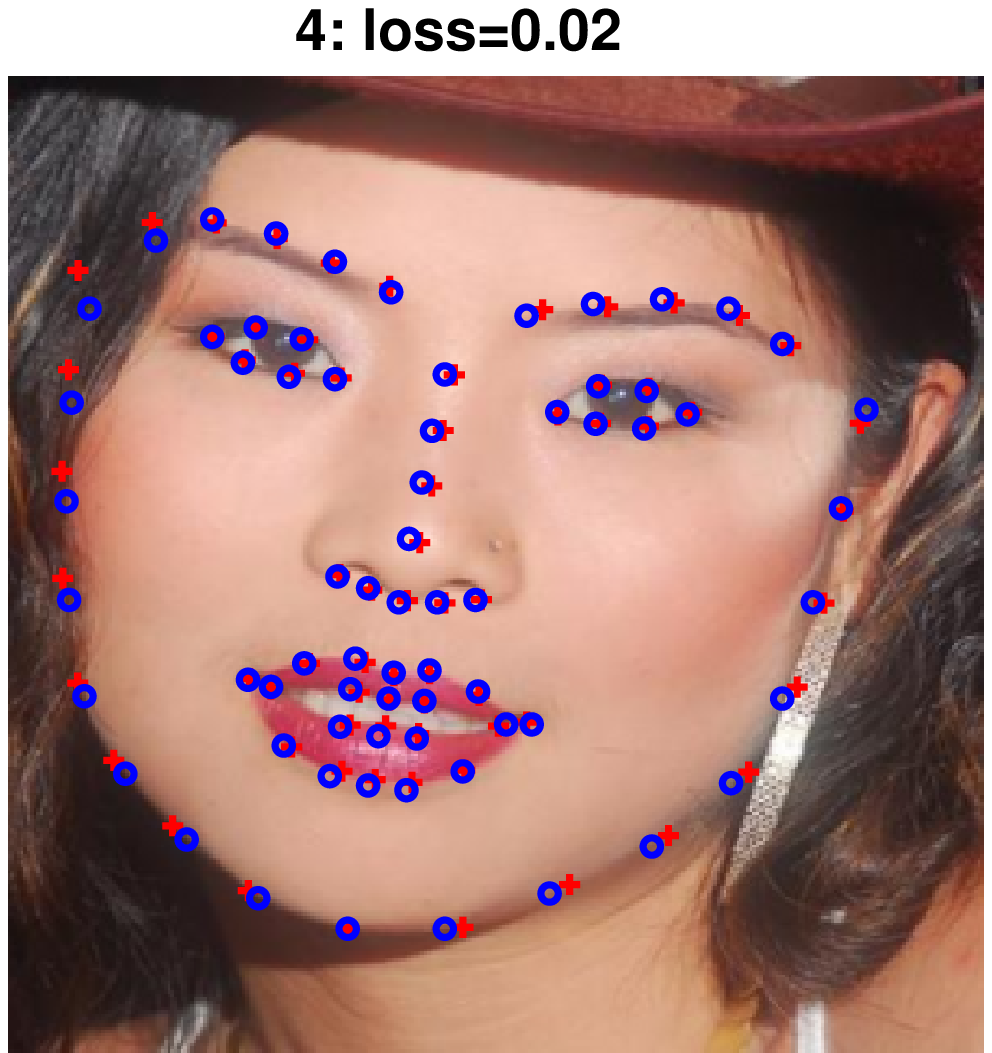}&
        \includegraphics[width=0.22\columnwidth]{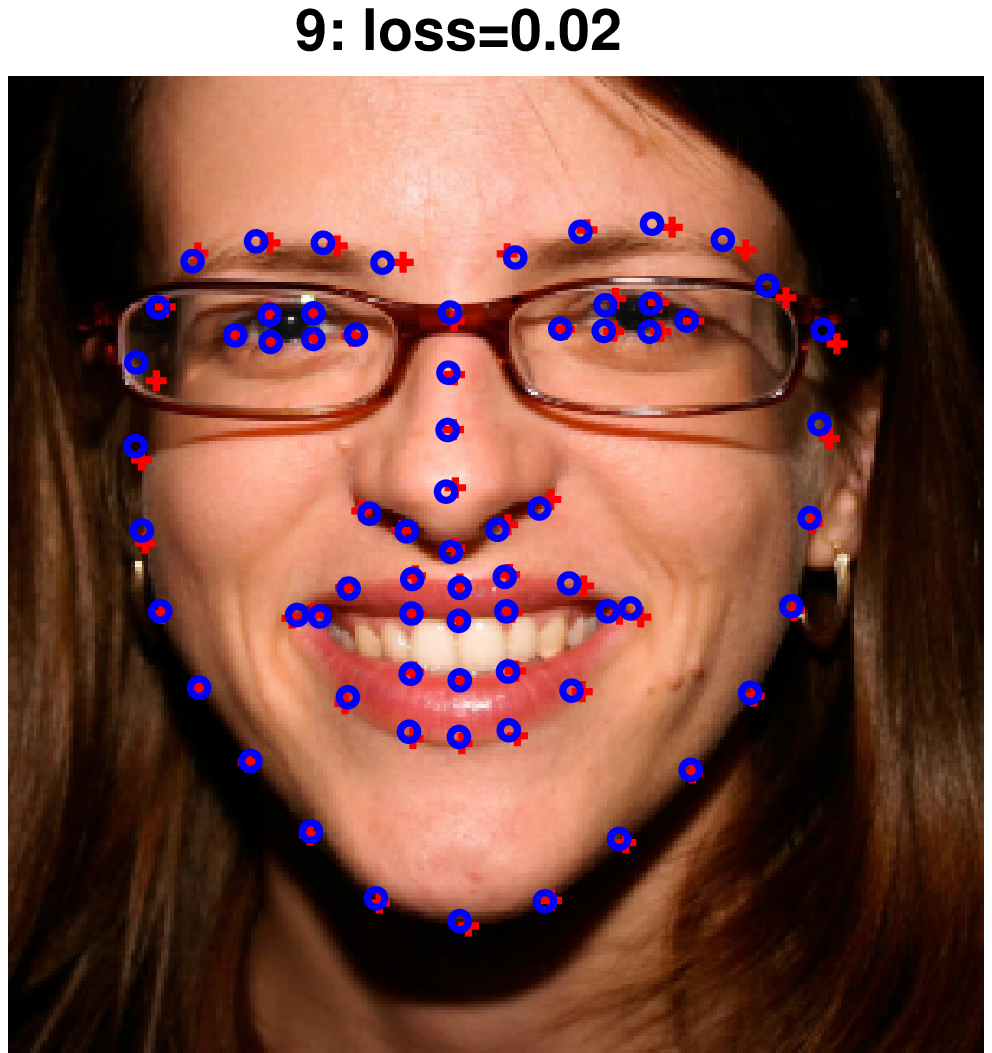}&
        \includegraphics[width=0.22\columnwidth]{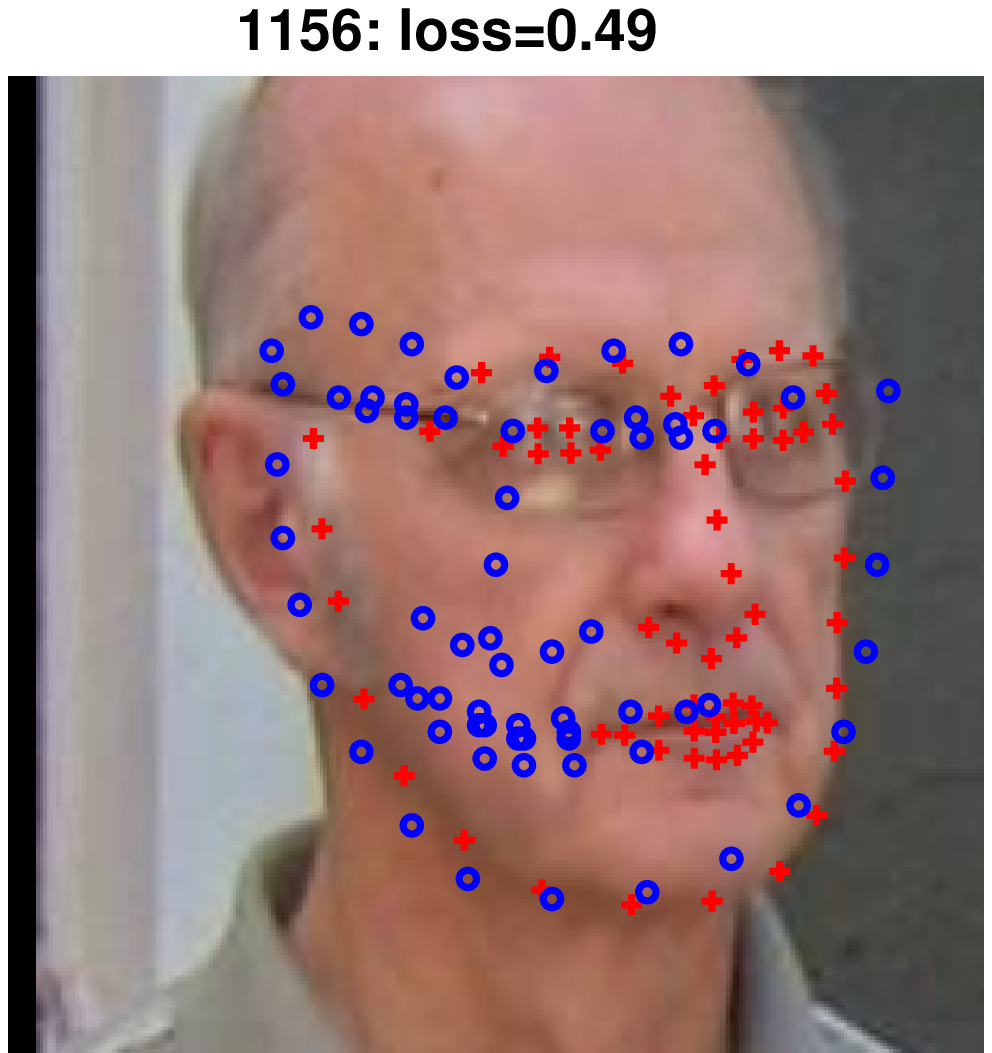}&
        \includegraphics[width=0.22\columnwidth]{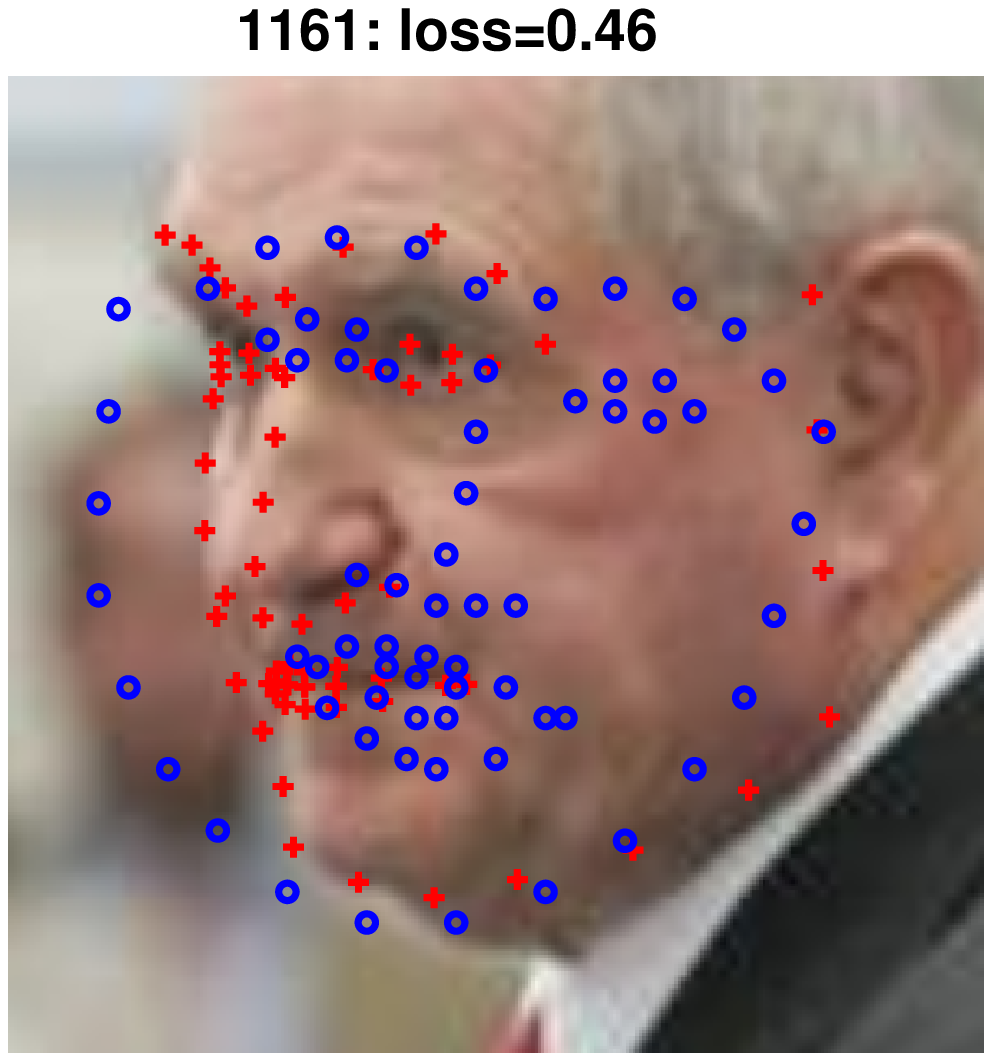}\\
        \includegraphics[width=0.22\columnwidth]{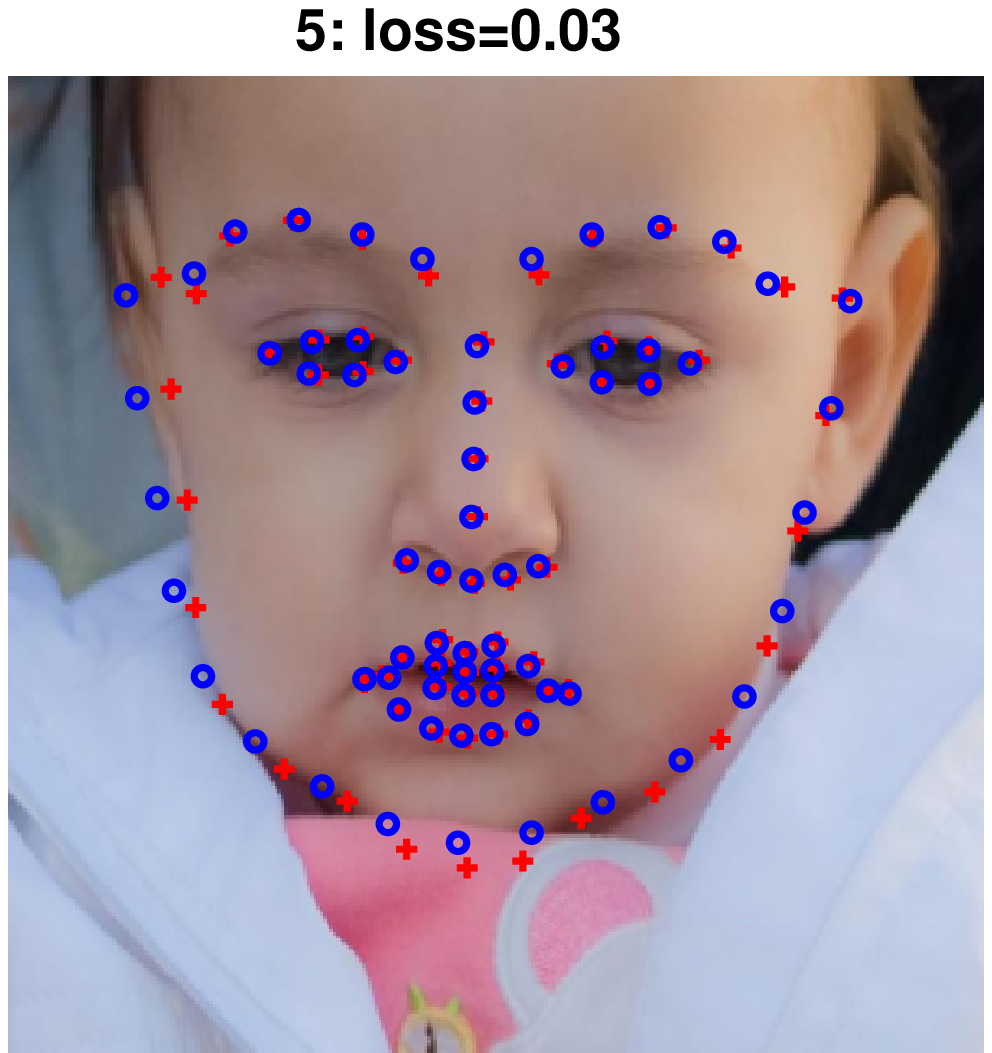}&
        \includegraphics[width=0.22\columnwidth]{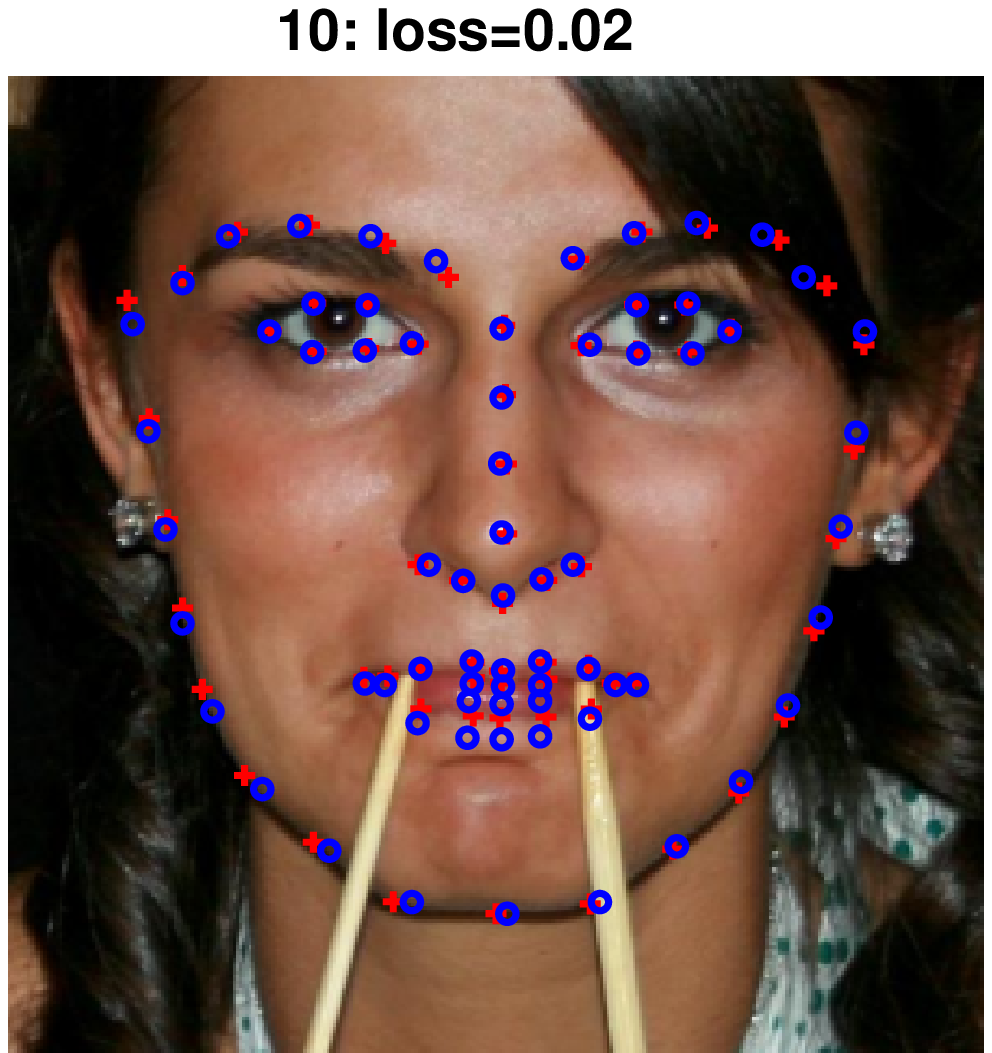}&
        \includegraphics[width=0.22\columnwidth]{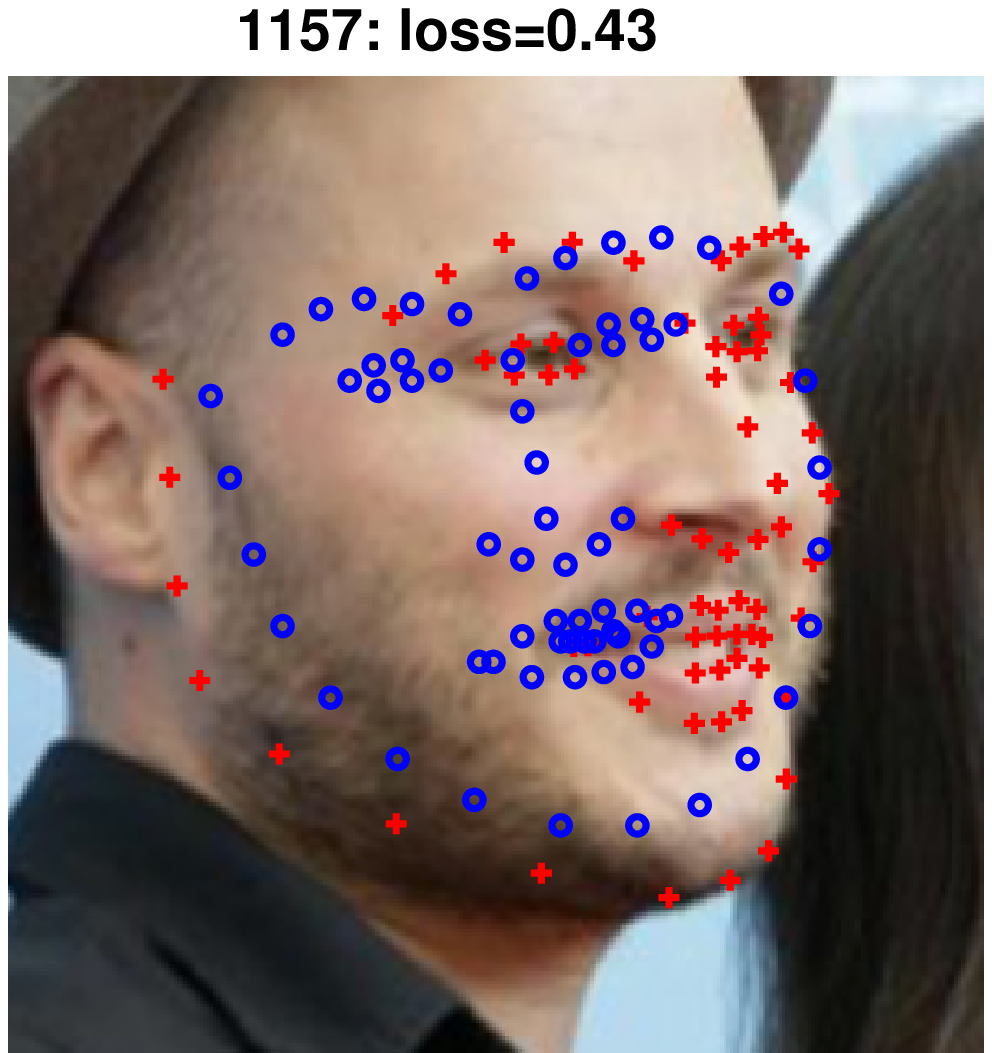}&
        \includegraphics[width=0.22\columnwidth]{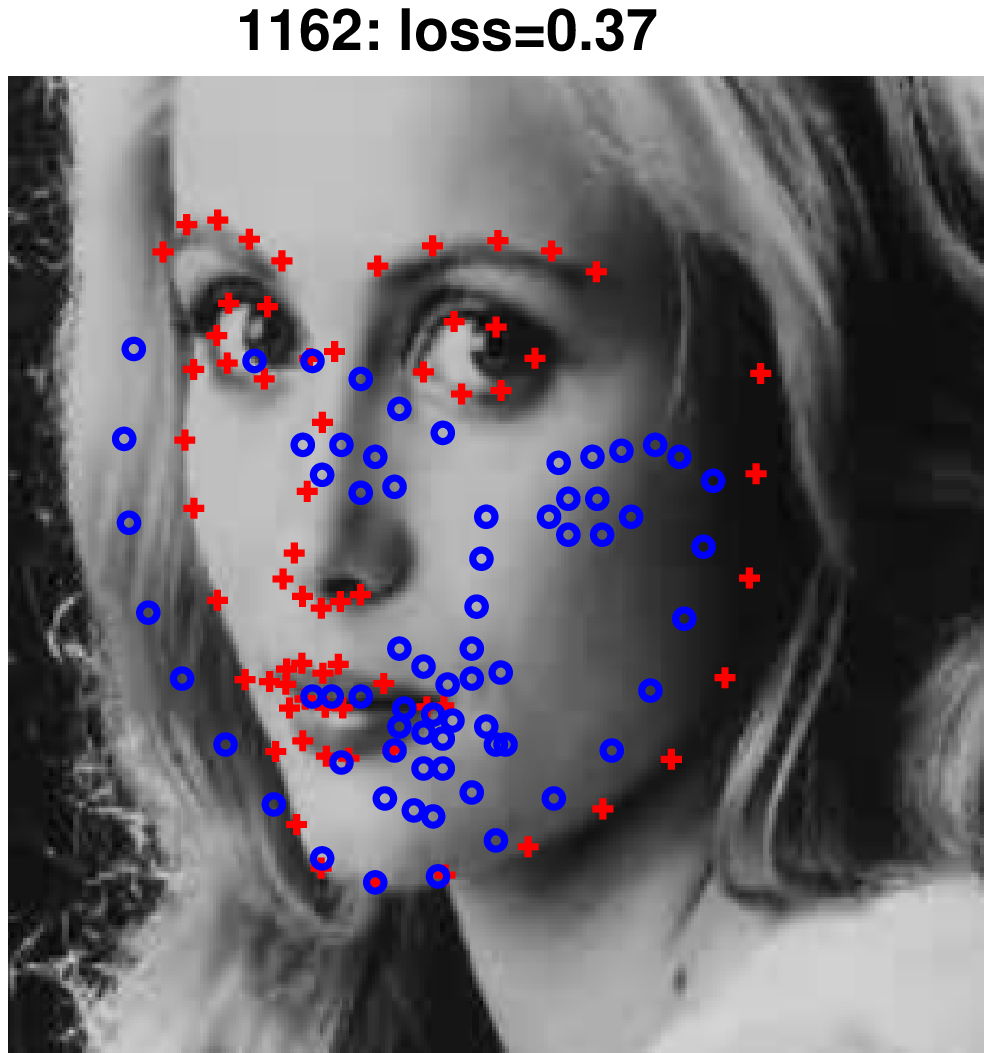}\\
    \end{tabular}
    \caption{Figure shows examples of 10 test faces from 300-W database with the lowest and 10 faces with the highest value of the SELE uncertainty score. The ground-truth landmark positions (red) and the landmark positions predicted by DLIB detector (blue) are superimposed into the image. The image title shows the rank induced by the SELE score and the normalized localization error which is used as the classification loss in this application. }
    \label{fig:DlibExamples}
\end{figure}
\section{Conclusions}\label{sec:conclusions}

The standard cost-based rejection model introduced by~\cite{Chow-RejectOpt-TIT1970} requires explicit definition of the rejection cost which is difficult in applications when the reject cost and the label loss have different nature or physical units. \cite{Pietraszek-AbstainROC-ICML2005} proposed the bounded-improvement model which avoids the problem by defining an optimal prediction strategy in terms of the coverage and the selective risk. We have coined a symmetric definition, the bounded-coverage model, which is useful when defining the target coverage is easier than defining the target selective risk. Our main result is a formal proof that despite their different objectives the three rejection models are equivalent in the sense that they lead to the same prediction strategy: the Bayes classifier and the randomized Bayes selection function. Thanks to the common optimal solution it is possible to convert between parameters of different rejection models. For example, for any target risk defining the bounded-improvement model there exists a corresponding reject cost so that both models have the same optimal strategy. 

The explicit characterization of the optimal strategies provides a recipe to construct plug-in rules solving the bounded-improvement and the bounded-coverage models. Any method estimating the class posterior probabilities can be thus turned into an algorithm for learning the selective classifier that solves the bounded-improvement and the bounded-coverage model. 

We have defined a notion of a proper uncertainty score which is sufficient to construct the randomized Bayes selection function. We proposed two algorithms to learn a proper uncertainty score from examples for a given classifier.
We have shown that both algorithms provide a Fisher consistent estimate of the proper uncertainty score. 
As a proof of concept we evaluated the proposed algorithms on different types of prediction problems. We have shown that the proposed algorithm based on minimization of the SELE loss outperforms existing approaches tailored for a particular prediction model and it works on par with the recently published state-of-the-art TCP score~\citep{Corbiere-Failure-NeurIPS2019}. Unlike the TCP score which requires the classifier to output the class posterior probabilities the proposed algorithms are applicable for an arbitrary black-box classifier. 

We have drawn a connection between the proposed bounded-coverage model and the RC curve. Namely, the RC curve represents quality of all admissible solutions of the bounded-coverage model that can be constructed from a pair of  classifier and uncertainty score. The AuRC is then the expected quality of the selective classifier constructed from the pair when the target coverage is selected uniformly at random. This connection sheds light on many published methods which do not explicitly define the target objective but use the RC curve and the AuRC as evaluation metrics.


Finally let us mentioned some topics for future work. Firstly, the proposed algorithms consider two-stage scenario when the classifier and the uncertainty score are learned separately from independent training sets. Although the scenario is useful in practice, an algorithm learning the classifier and the uncertainty score simultaneously from a single training set constitute an interesting topic to be solved. Secondly, we have shown how to learn the proper uncertainty score but have not discussed how to set up the decision threshold and the acceptance probability that are also needed to construct the selective classifier. It is straightforward to tune these parameters on empirical data using the RC curve. 
Analysis of the generalization error of this empirical approach is an open issue which has been solved only for the decision threshold of the bounded-improvement model by~\cite{Geifman-SelectClass-NIPS2017}. Thirdly, the empirical evaluation is limited to uncertainty scores linear in the parameters to be learned. Efficient implementations of the algorithms applicable to non-linear models, like e.g. the neural networks, is an another topic left for future.

\acks{We would like to acknowledge support for this work from Czech Science Foundation project GACR GA19-21198S. }


\newpage

\clearpage
\appendix

\section{Proofs of theorems from Section~\ref{sec:rejectOption}}

\subsection{Proof of Theorem~\ref{thm:optimalPredictor} }

The Bayes classifier reads
\begin{equation}
   h_B(x) \in \argmin_{\hat{y}\in\SY} \sum_{y\in\SY} p(y\mid x) \,
   \ell(y,\hat{y})
   \tag{\ref{equ:bayesPredictor}}
\end{equation}
\boundedImprovement*

\optimalPredictor*

\begin{proof}
It is sufficient to show that $(h_B,c)$ is feasible to~(\ref{equ:selectClassifTask}), i.e., that $R_S(h_B,c)\le \lambda$.
Then $(h_B,c)$ attains the same maximum objective value $\phi(c)$ as $(h,c)$.
Derive
\begin{align*}
R_S(h_B,c)&=\frac{1}{\phi(c)}\int\limits_{\SX}\sum\limits_{y\in\SY}p(x,y)\,\ell(y,h_B(x))\,c(x)\,dx \\
&= \frac{1}{\phi(c)}\int\limits_{\SX}p(x)c(x)\left(\sum\limits_{y\in\SY} p(y\,|\,x)\,\ell(y,h_B(x))\right)\,dx\\
&\stackrel{(\ref{equ:bayesPredictor})}{\le} \frac{1}{\phi(c)}\int\limits_{\SX}p(x)c(x)\left(\sum\limits_{y\in\SY} p(y\,|\,x)\,\ell(y,h(x))\right)\,dx\\
&=\frac{1}{\phi(c)}\int\limits_{\SX}\sum\limits_{y\in\SY}p(x,y)\,\ell(y,h(x))\,c(x)\,dx\\
&=R_S(h,c) \le \lambda.
\end{align*}
\end{proof}

\subsection{Proof of Theorem~\ref{thm:selectContinuousTask}}

The presented proof of the theorem uses Lemmas~\ref{lemma:nonzero-int-2} and~\ref{lemma:nonzero-int-3}, both derived based on Lemma~\ref{lemma:nonzero-int-1} bellow.

\begin{lemma}\label{lemma:nonzero-int-1}
For a set $\SX$, let $f:\SX \to \Re_{+}$\footnote{We use $\Re$, $\Re_{+}$ and $\Naturals_{+}$ to denote the set of real numbers, non-negative real numbers and positive integers, respectively.} and $g:\SX \to \Re$ be measurable functions such that $\int_{\SX} f(x)dx > 0$ and $g(x) > 0$ for all $x\in\SX$. Then it holds $\int_{\SX}g(x)f(x)dx > 0$.
\end{lemma}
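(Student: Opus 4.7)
The plan is to reduce the claim to the positivity of the integral of $f$ over a sublevel set of $1/g$. The key observation is that although $g$ is strictly positive pointwise, it may approach zero, so one cannot simply bound $\int gf$ below by $(\inf g)\int f$. Instead, I will decompose $\SX$ according to how small $g$ is.

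First, I would define, for each $n\in\Naturals_{+}$, the measurable set
\[
   A_n = \{x\in\SX \mid g(x) > 1/n\}\:.
\]
Since $g(x)>0$ for every $x\in\SX$, the sets $A_n$ form an increasing sequence whose union is all of $\SX$. Because $f$ is non-negative and measurable, the sequence of integrals $\int_{A_n} f(x)\,dx$ is monotone non-decreasing and, by the monotone convergence theorem applied to the truncations $f\cdot\leftbb x\in A_n\rightbb$, it converges to $\int_{\SX} f(x)\,dx$.

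Next, using the hypothesis $\int_{\SX} f(x)\,dx > 0$, I would conclude that there exists some $n_0\in\Naturals_{+}$ with $\int_{A_{n_0}} f(x)\,dx > 0$. On $A_{n_0}$ we have the pointwise bound $g(x) > 1/n_0$, so
\[
   \int_{\SX} g(x)f(x)\,dx \;\geq\; \int_{A_{n_0}} g(x)f(x)\,dx \;\geq\; \frac{1}{n_0}\int_{A_{n_0}} f(x)\,dx \;>\; 0\:,
\]
where the first inequality uses $gf\geq 0$ everywhere and the second uses the pointwise lower bound on $g$ over $A_{n_0}$. This yields the desired strict positivity.

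The only potentially delicate step is the passage $\int_{A_n} f \to \int_{\SX} f$, which I would justify via monotone convergence as indicated; this is routine since $f\geq 0$ and $\leftbb x\in A_n\rightbb \uparrow 1$ pointwise. No other subtlety is expected, so this lemma should follow cleanly from the decomposition argument.
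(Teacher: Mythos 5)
Your argument is correct and is essentially the paper's own proof: your truncation $f\cdot\leftbb x\in A_n\rightbb$ is exactly the paper's auxiliary sequence $f_n$ (cutting where $g$ falls below $1/n$), and both proofs invoke monotone convergence to find a level $n_0$ with $\int_{A_{n_0}} f>0$ and then bound $\int g f \geq \frac{1}{n_0}\int_{A_{n_0}} f > 0$. No gaps.
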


\begin{proof}
    For $n\in \Naturals_{+}$, define functions
    \begin{equation*}
        f_n(x)=\left\{
        \begin{array}{cl}
             f(x) & \text{if } g(x)\ge \frac{1}{n}, \\
             0 & \text{otherwise.}
        \end{array}
        \right.
    \end{equation*}
    The sequence $\{f_n\}_{n=1}^{\infty}$ is monotone and converges to $f$. Using the monotone convergence theorem~\citep{Stein-RealAnalysis-2009}, derive
    \begin{equation*}
        0 < \int\limits_{\SX} f(x)dx = \int\limits_{\SX} \lim_{n \to \infty} f_n(x)dx = \lim_{n \to \infty} \int\limits_{\SX} f_n(x)dx\,.
    \end{equation*}
    This means that there is a $k \in \Naturals_{+}$ such that $\int_{\SX} f_k(x)dx > 0$, hence we conclude
    \begin{equation*}
        \int\limits_{\SX}g(x)f(x)dx \ge \int\limits_{\SX}g(x)f_k(x)dx \ge \int\limits_{\SX}\frac{1}{k} f_k(x)dx > 0.
    \end{equation*}
    
\end{proof}

\begin{lemma}\label{lemma:nonzero-int-2}
For a set $\SX$, let $f:\SX \to \Re_{+}$ and $g:\SX \to \Re$ be measurable functions such that $\int_{\SX} f(x)dx > 0$ and $g(x) > b$ for all $x\in\SX$ and some $b\in \Re$. Then it holds $\int_{\SX}g(x)f(x)dx > b \int_{\SX} f(x)dx$.
\end{lemma}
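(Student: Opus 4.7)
The plan is to reduce Lemma~\ref{lemma:nonzero-int-2} to the already-established Lemma~\ref{lemma:nonzero-int-1} by the obvious shift $\tilde{g}(x) := g(x) - b$. Since $g$ is measurable and $b$ is a constant, $\tilde{g}$ is measurable as well; and by the assumption $g(x) > b$ for every $x\in\SX$, we have $\tilde{g}(x) > 0$ pointwise on $\SX$.

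Next, I would apply Lemma~\ref{lemma:nonzero-int-1} to the pair $(f,\tilde{g})$: the hypothesis $\int_{\SX} f(x)\,dx > 0$ is inherited verbatim, and the positivity of $\tilde{g}$ has just been verified. This yields
\[
    \int_{\SX} \tilde{g}(x)\, f(x)\, dx \;>\; 0.
\]
Expanding $\tilde{g}(x) = g(x) - b$ and using linearity of the integral gives
\[
    \int_{\SX} g(x)\, f(x)\, dx \;-\; b \int_{\SX} f(x)\, dx \;>\; 0,
\]
which rearranges exactly to the claimed strict inequality.

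The one subtlety worth flagging is that the rearrangement above is only legitimate if the two integrals on the left-hand side are individually well-defined (not of the form $\infty - \infty$). This is fine in the setting of the paper, since $f \geq 0$ is integrable (in particular $\int f < \infty$ is the implicit setting under which $\int g f$ is considered), and Lemma~\ref{lemma:nonzero-int-1} is used to show that the difference is strictly positive rather than to manipulate possibly infinite quantities. If one wanted to be completely pedantic, the same monotone-convergence argument used in Lemma~\ref{lemma:nonzero-int-1} could be repeated on $\tilde{g} f$ directly, but the clean two-line reduction above is the most natural route and I expect it to be the one the authors take. There is no real obstacle here; the lemma is essentially a corollary of Lemma~\ref{lemma:nonzero-int-1}.
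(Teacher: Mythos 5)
Your proposal is correct and matches the paper's own proof: the authors likewise apply Lemma~\ref{lemma:nonzero-int-1} to $g(x)-b$ and then split $\int g f = \int (g-b)f + b\int f > b\int f$ by linearity. Nothing further is needed.
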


\begin{proof}
    By Lemma~\ref{lemma:nonzero-int-1}, we have
    \begin{equation*}
        \int\limits_{\SX}(g(x)-b)f(x)dx > 0,
    \end{equation*}
    thus
    \begin{equation*}
        \int\limits_{\SX}g(x)f(x)dx= \int\limits_{\SX}(g(x)-b)f(x)dx+ \int\limits_{\SX}b f(x)dx > b \int\limits_{\SX}f(x)dx.
    \end{equation*}
    
\end{proof}

\begin{lemma}\label{lemma:nonzero-int-3}
For a set $\SX$, let $f:\SX \to \Re_{+}$ and $g:\SX \to \Re$ be measurable functions such that $\int_{\SX} g(x)f(x)dx > 0$ and $g(x) < 1$ for all $x\in \SX$. Then it holds $\int_{\SX} f(x)dx > \int_{\SX} g(x)f(x)dx$.
\end{lemma}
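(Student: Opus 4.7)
\textbf{Proof plan for Lemma~\ref{lemma:nonzero-int-3}.}

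The plan is to reduce the claim directly to Lemma~\ref{lemma:nonzero-int-1}. Rewriting the desired inequality as
\[
   \int_{\SX}(1-g(x))\,f(x)\,dx > 0\,,
\]
I would set $\tilde g(x) := 1 - g(x)$. By hypothesis $g(x) < 1$ everywhere, so $\tilde g(x) > 0$ on $\SX$, and $\tilde g$ is measurable because $g$ is. This puts the integrand into precisely the form handled by Lemma~\ref{lemma:nonzero-int-1}, provided that the auxiliary hypothesis $\int_{\SX} f(x)\,dx > 0$ also holds.

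The remaining step is therefore to verify this auxiliary hypothesis from the assumption $\int_{\SX} g(x)f(x)\,dx > 0$. I would argue by contradiction: if $\int_{\SX} f(x)\,dx = 0$, then since $f \ge 0$ is measurable, $f = 0$ almost everywhere, hence $gf = 0$ almost everywhere as well, which contradicts $\int_{\SX} g(x)f(x)\,dx > 0$. Thus $\int_{\SX} f(x)\,dx > 0$.

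With both hypotheses of Lemma~\ref{lemma:nonzero-int-1} verified for the pair $(\tilde g, f)$, that lemma yields $\int_{\SX}(1-g(x))f(x)\,dx > 0$, and rearranging gives the claim $\int_{\SX} f(x)\,dx > \int_{\SX} g(x)f(x)\,dx$. I do not anticipate any genuine obstacle here; the only subtle point is justifying $\int f > 0$, which is the a.e.\ argument above (and which implicitly relies only on $f$ being a nonnegative measurable function).
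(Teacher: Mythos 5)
Your argument is correct and follows essentially the same route as the paper: both apply Lemma~\ref{lemma:nonzero-int-1} to the pair $(1-g,\,f)$ after noting that $\int_{\SX} g(x)f(x)\,dx>0$ forces $\int_{\SX} f(x)\,dx>0$. The only difference is that you spell out the almost-everywhere justification of that last implication, which the paper states without proof.
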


\begin{proof}
    $\int_{\SX} g(x)f(x)dx > 0$ implies $\int_{\SX} f(x)dx > 0$. Since it holds $\forall x\in \SX: (1-g(x))>0$, Lemma~\ref{lemma:nonzero-int-1} yields
    \begin{equation*}
        0 < \int\limits_{\SX}(1-g(x))f(x)dx = \int\limits_{\SX}f(x)dx- \int\limits_{\SX}g(x)f(x)dx,
    \end{equation*}
   and $\int_{\SX} f(x)dx > \int_{\SX} g(x)f(x)dx$ is obtained as a direct consequence.
\end{proof}

\boundedImprovSelection*
\boundedImprovSolution*

\begin{proof} Observe that $b\ge 0$, because $\rho(\SX_{\bar{r}(x)\le 0})\le 0$. Next, observe that Problem~\ref{task:boundedImprovSelection} can be rewritten into the form
\begin{equation}
   \label{equ:selectContinuousTask}
  \max_{c\in [0,1]^\SX} \int_{\SX} p(x)c(x)dx  \,\, \text{ s.t.}\,\,\int_{\SX} p(x)c(x)\bar{r}(x)dx \le 0\,
\end{equation}
since
\begin{align}
  R_S(h,c) - \lambda &= \frac{\int\limits_{\SX}\sum\limits_{y\in\SY}p(x,y)\,\ell(y,h(x))\,c(x)\,dx- \lambda \phi(c)}{\phi(c)} \\
  &= \frac{\int\limits_{\SX}p(x)c(x)r(x)\,dx- \lambda \int\limits_{\SX}p(x)c(x)}{\phi(c)} = \frac{\int\limits_{\SX}p(x)c(x)\bar{r}(x)\,dx}{\phi(c)} \,.
\end{align}
Let $F(c)=\phi(c)=\int_{\SX}p(x)c(x)dx$ denote the objective function of~(\ref{equ:selectContinuousTask}).

\begin{case} $b>0$. \newline
    \textbf{\textit{Claim I}} Each $c^*:\SX\to [0,1]$ which fulfils~(\ref{equ:cond-1}), (\ref{equ:cond-2}) and (\ref{equ:cond-3}) is feasible to~(\ref{equ:selectContinuousTask}) and
    \begin{equation}\label{equ:objective_value}
    F(c^*)=\int\limits_{\SX_{\bar{r}(x)<b}}p(x)dx-\frac{1}{b}\rho(\SX_{\bar{r}(x)<b}).
    \end{equation}
    Proof of Claim I.
    
    Equality~(\ref{equ:objective_value}) is simply obtained by summing LHS and RHS of~(\ref{equ:cond-1}), (\ref{equ:cond-2}) and (\ref{equ:cond-3}). To verify the constraint of~(\ref{equ:selectContinuousTask}), observe that, since $\bar{r}$ is a bounded function and
    \begin{equation}
    \int_{\SX_{\bar{r}(x)<b}}p(x)(c^*(x)-1)dx \stackrel{(\ref{equ:cond-1})}= 0 \,,
    \end{equation}
    it holds that
    \begin{equation}
    \int_{\SX_{\bar{r}(x)<b}}p(x)(c^*(x)-1)\bar{r}(x)dx = 0 \,,
    \end{equation}
    which implies
    \begin{equation}\label{equ:proof-0}
         \int\limits_{\SX_{\bar{r}(x)<b}}p(x)c^*(x)\bar{r}(x)dx = \int\limits_{\SX_{\bar{r}(x)<b}}p(x)\bar{r}(x)dx \stackrel{(\ref{equ:rho_def})}= \rho(\SX_{\bar{r}(x)<b}).
    \end{equation}
    If $b<\infty$, then
    \begin{align}
        \int\limits_{\SX}p(x)c^*(x)\bar{r}(x)dx &\stackrel{(\ref{equ:cond-3})}= \int\limits_{\SX_{\bar{r}(x)<b}}p(x)c^*(x)\bar{r}(x)dx + \int\limits_{\SX_{\bar{r}(x)=b}}p(x)c^*(x)\bar{r}(x)dx\nonumber \\
        & \stackrel{(\ref{equ:proof-0})}= \int\limits_{\SX_{\bar{r}(x)<b}}p(x)\bar{r}(x)dx + b\int\limits_{\SX_{\bar{r}(x)=b}}p(x)c^*(x)dx \\ & \stackrel{(\ref{equ:cond-2}), (\ref{equ:rho_def}), (\ref{equ:proof-0})} = \rho(\SX_{\bar{r}(x)<b})-\rho(\SX_{\bar{r}(x)<b})=0.\label{equ:proof-1}
    \end{align}
    If $b=\infty$, then
    \begin{align*}
        & \int\limits_{\SX}p(x)c^*(x)\bar{r}(x)dx = \int\limits_{\SX_{\bar{r}(x)<b}}p(x)c^*(x)\bar{r}(x)dx \stackrel{(\ref{equ:proof-0})}= \rho(\SX_{\bar{r}(x)<b}) \le 0.
    \end{align*}
    \textbf{\textit{Claim II}}
    Let $c:\SX\to [0,1]$ be a feasible solution to~(\ref{equ:selectContinuousTask}) that violates at least one of the constraints~(\ref{equ:cond-1}), (\ref{equ:cond-2}) and (\ref{equ:cond-3}). Then, $F(c) < F(c^*)$, where $c^*:\SX\to [0,1]$ is a confidence function satisfying~(\ref{equ:cond-1}), (\ref{equ:cond-2}), (\ref{equ:cond-3}), and, without loss of generality,
    \begin{equation}\label{equ:c-star-assumption}
        \forall x\in \SX_{\bar{r}(x)<b}: c^*(x)=1\, .
    \end{equation}
    Proof of Claim II.
    
    Distinguish three cases.
    \begin{subcase}
    Condition~(\ref{equ:cond-3}) is violated (note that this is possible only if $b<\infty$), i.e.
    \begin{equation}\label{equ:cond-3-violated}
        \int\limits_{\SX_{\bar{r}(x)>b}}p(x)c(x)dx >0.
    \end{equation}
    Inequality~(\ref{equ:cond-3-violated}) and Lemma~\ref{lemma:nonzero-int-2} (applied to $f(x)=p(x)c(x)$ and $g(x)=\bar{r}(x)$) yield
    
    \begin{equation*}
        \int\limits_{\SX_{\bar{r}(x)>b}} p(x)c(x)\bar{r}(x)dx > b \int\limits_{\SX_{\bar{r}(x)>b}} p(x)c(x)dx.
    \end{equation*}
    Therefore, we can write
    \begin{equation}\label{equ:constr-multiple}
        \int\limits_{\SX_{\bar{r}(x)>b}} p(x)c(x)\bar{r}(x)dx=b' \int\limits_{\SX_{\bar{r}(x)>b}} p(x)c(x)dx
    \end{equation}
    for a suitable $b'\in \Re_{+}$ such that
    \begin{equation}\label{equ:multiple}
        b'>b>0.
    \end{equation}
    
    Based on the constraint of~(\ref{equ:selectContinuousTask}), derive
    \begin{align}
        \int\limits_{\SX}p(x)c(x)\bar{r}(x)dx &\stackrel{(\ref{equ:constr-multiple})}= \int\limits_{\SX_{\bar{r}(x)<b}} p(x)c(x)\bar{r}(x)dx + b\int\limits_{\SX_{\bar{r}(x)=b}} p(x)c(x)dx + b'\int\limits_{\SX_{\bar{r}(x)>b}} p(x)c(x)dx\nonumber \\
        & \stackrel{(\ref{equ:selectContinuousTask})}\le 0 \stackrel{(\ref{equ:proof-1})}= \int\limits_{\SX_{\bar{r}(x)<b}} p(x)c^*(x)\bar{r}(x)dx + b\int\limits_{\SX_{\bar{r}(x)=b}}p(x)c^*(x)dx.\label{equ:proof-10}
    \end{align}
    Let $\sigma(x)=\frac{1}{b}\bar{r}(x)$. Inequality~(\ref{equ:proof-10}) can be rearranged and upper bounded as
    \begin{align}\label{equ:proof-11}
        \int\limits_{\SX_{\bar{r}(x)=b}} &p(x)c(x)dx - \int\limits_{\SX_{\bar{r}(x)=b}}p(x)c^*(x)dx + \frac{b'}{b}\int\limits_{\SX_{\bar{r}(x)>b}} p(x)c(x)dx \\ &\stackrel{(\ref{equ:proof-10})}\le
        \int\limits_{\SX_{\bar{r}(x)<b}} p(x)(c^*(x)-c(x))\sigma(x)dx \nonumber \le \int\limits_{\SX_{\bar{r}(x)<b}}p(x)c^*(x)dx- \int\limits_{\SX_{\bar{r}(x)<b}}p(x)c(x)dx
    \end{align}
    where the second inequality follows from $\forall x\in \SX_{\bar{r}(x)<b}: \sigma(x)\le 1$. From this we get
    \begin{align}
        \int\limits_{\SX_{\bar{r}(x)\le b}} p(x)c(x)dx \stackrel{(\ref{equ:proof-11})} \le \int\limits_{\SX_{\bar{r}(x)\le b}}p(x)c^*(x)dx - \frac{b'}{b}\int\limits_{\SX_{\bar{r}(x)>b}} p(x)c(x)dx \,.\label{equ:proof-11b}
    \end{align}
    Now, derive
    \begin{align*}
        F(c) &=\int\limits_{\SX_{\bar{r}}(x)\le b} p(x)c(x)dx + \int\limits_{\SX_{\bar{r}}(x)> b} p(x)c(x)dx \\ &\stackrel{(\ref{equ:proof-11b})}\le \int\limits_{\SX_{\bar{r}(x)\le b}} p(x)c^*(x)dx - \left( \frac{b'}{b} - 1\right) \int\limits_{\SX_{\bar{r}(x)>b}} p(x)c(x)dx \\
        &\stackrel{(\ref{equ:cond-3-violated}), (\ref{equ:multiple})}< \int\limits_{\SX_{\bar{r}(x)\le b}}p(x)c^*(x)dx = F(c^*).
    \end{align*}
    \end{subcase}
    
    \begin{subcase}
    Condition~(\ref{equ:cond-3}) holds, condition~(\ref{equ:cond-2}) is violated.
    
    If $\int_{\SX_{\bar{r}(x)=b}}p(x)c(x)dx < -\frac{\rho(\SX_{\bar{r}(x)<b})}{b}$, then obviously $F(c) < F(c^*)$.
    Hence, assume 
    \begin{equation}\label{equ:cond-2-violated}
        \int\limits_{\SX_{\bar{r}(x)=b}}p(x)c(x)dx > -\frac{\rho(\SX_{\bar{r}(x)<b})}{b}.
    \end{equation}
    Analogically to~(\ref{equ:proof-10}), derive
    \begin{align}
        \int\limits_{\SX_{\bar{r}(x)<b}} &p(x)c(x)\bar{r}(x)dx + b\int\limits_{\SX_{\bar{r}(x)=b}}p(x)c(x)dx \stackrel{(\ref{equ:selectContinuousTask})}
        \le 0 \\ &\stackrel{(\ref{equ:proof-1})}= \int\limits_{\SX_{\bar{r}(x)<b}}p(x)c^*(x)\bar{r}(x)dx + b\int\limits_{\SX_{\bar{r}(x)=b}}p(x)c^*(x)dx,
    \end{align}
    and
    \begin{equation}\label{equ:proof-2}
        \int\limits_{\SX_{\bar{r}(x)<b}}p(x)c(x)\sigma(x)dx + \int\limits_{\SX_{\bar{r}(x)=b}}p(x)c(x)dx \le \int\limits_{\SX_{\bar{r}(x)<b}}p(x)c^*(x)\sigma(x)dx + \int\limits_{\SX_{\bar{r}(x)=b}}p(x)c^*(x)dx
    \end{equation}
    where $\sigma(x)=\frac{1}{b}\bar{r}(x)<1$ for all $x\in \SX_{\bar{r}(x)<b}$.
    
    Denote and derive
    \begin{equation}\label{equ:proof-3}
        \Delta = \int\limits_{\SX_{\bar{r}(x)=b}}p(x)c(x)dx - \int\limits_{\SX_{\bar{r}(x)=b}}p(x)c^*(x)dx \stackrel{(\ref{equ:cond-2})} = \int\limits_{\SX_{\bar{r}(x)=b}}p(x)c(x)dx +\frac{\rho(\SX_{\bar{r}(x)<b})}{b} \stackrel{(\ref{equ:cond-2-violated})}> 0.
    \end{equation}
    Then, (\ref{equ:proof-2}) can be rewritten as
    \begin{equation}\label{equ:delta-ineq}
        \int\limits_{\SX_{\bar{r}(x)<b}}p(x)(c^*(x)-c(x))\sigma(x)dx \ge \Delta \stackrel{(\ref{equ:proof-3})} > 0.
    \end{equation}
    Inequality~(\ref{equ:delta-ineq}) and Lemma~\ref{lemma:nonzero-int-3} (applied to $g(x)=\sigma(x)<1$ and $f(x)=p(x)(c^*(x)-c(x)) \stackrel{(\ref{equ:c-star-assumption})} \ge 0$ over $\SX_{\bar{r}(x)<b}$) yield
    \begin{equation}\label{equ:proof-4}
         \int\limits_{\SX_{\bar{r}(x)<b}}p(x)(c^*(x)-c(x))dx > \Delta.
    \end{equation}
    Now, combine and rearrange~(\ref{equ:proof-3}) and~(\ref{equ:proof-4}) to obtain
    \begin{align}
        F(c^*)&=\int\limits_{\SX_{\bar{r}(x)<b}}p(x)c^*(x)dx+\int\limits_{\SX_{\bar{r}(x)=b}}p(x)c^*(x)dx \stackrel{(\ref{equ:proof-4})} > \Delta \\
        & \stackrel{(\ref{equ:proof-3})}= \int\limits\limits_{\SX_{\bar{r}(x)<b}}p(x)c(x)dx+\int\limits_{\SX_{\bar{r}(x)=b}}p(x)c(x)dx=F(c).
    \end{align}
    \end{subcase}
    
    \begin{subcase}
    Conditions~(\ref{equ:cond-2}) and~(\ref{equ:cond-3}) hold, condition~(\ref{equ:cond-1}) is violated, i.e.
    \begin{equation}\label{equ:cond-1-violated}
        \int\limits_{\SX_{\bar{r}(x)<b}}p(x)c(x)dx <\int\limits_{\SX_{\bar{r}(x)<b}}p(x)dx.
    \end{equation}
    Then,
    \begin{equation*}
        F(c^*)=\int\limits_{\SX_{\bar{r}(x)<b}}p(x)c^*(x)dx -\frac{\rho(\SX_{\bar{r}(x)<b})}{b} \stackrel{(\ref{equ:cond-1-violated})}> \int\limits\limits_{\SX_{\bar{r}(x)<b}}p(x)c(x)dx -\frac{\rho(\SX_{\bar{r}(x)<b})}{b}=F(c).
    \end{equation*}
    \end{subcase}
    
\end{case}
\begin{case} $b=0$.

This occurs only if $\int_{\SX_{\bar{r}(x)<0}}p(x)\bar{r}(x)dx = 0$. The constraint of~(\ref{equ:selectContinuousTask}) implies
\begin{equation*}
\int\limits_{\SX_{\bar{r}(x)>0}}p(x)c(x)\bar{r}(x)dx = 0\,,
\end{equation*}
thus
\begin{equation*}
\int\limits_{\SX_{\bar{r}(x)>0}}p(x)c(x)dx = 0\,,    
\end{equation*}
which confirms condition~(\ref{equ:cond-3}).

Finally, the obvious equations 
\begin{align*}
\max_{c:\SX\to [0,1]}\int\limits_{\SX_{\bar{r}(x)<0}}p(x)c(x)dx&=\int\limits_{\SX_{\bar{r}(x)<0}}p(x)dx\,, \text{ and} \\
\max_{c:\SX\to [0,1]}\int\limits_{\SX_{\bar{r}(x)=0}}p(x)c(x)dx&=\int\limits_{\SX_{\bar{r}(x)=0}}p(x)dx
\end{align*}
confirm condition~(\ref{equ:cond-1}) and~(\ref{equ:cond-2}), respectively.

\end{case}
\end{proof}

\subsection{Proof of Theorem~\ref{thm:optSelectingFun}}

\ThmOptSelectingFce*

\begin{proof}
The optimality conditions~\equ{equ:cond-1} and ~\equ{equ:cond-3} given in Theorem~\ref{thm:selectContinuousTask} are equivalent to a probabilistic statement $\Prob_{x\sim p(x)}[ c^*(x)=0 \wedge \bar{r}(x)< b]=0$ and $\Prob_{x\sim p(x)}[ c^*(x)=1 \wedge \bar{r}(x)> b]=0$, respectively. Hence the two conditions are satisfied by a selection function which predicts, $c^*(x)=1$, whenever $\bar{r}(x)<b$ and rejects, $c^*(x)=0$, whenever $\bar{r}(x)>b$. Or equivalently, using the identity $\bar{r}(x)=r(x)-\lambda$ and a threshold $\gamma=b+\lambda$, 
by $c^*(x)=1$ when $r(x)<\gamma$  and $c^*(x)=0$ when $r(x)> \gamma$.
%
Finally, if we opt for a selection function that is constant $c^*(x)=\tau$ inside the boundary region $\SX_{\bar{r}(x)=b}$, then the condition~\equ{equ:cond-2} implies $\tau=-\frac{\rho(\SX_{r(x)<\gamma})}{b\cdot \rho_0}$ if $b>0$, where $\rho_0=\int_{\SX_{\bar{r}(x)=b}}p(x)\,dx$, and $\tau=1$ if $b=0$. Using $\SX_{\bar{r}(x)<b}=\SX_{r(x)<\gamma}$ and $b\cdot \rho_0=\rho(\SX_{\bar{r}(x)=b})=\rho(\SX_{r(x)=\gamma})$, we derive~\equ{equ:rejectProbab}.
\end{proof}

\subsection{Proof of Theorem~\ref{thm:optClsForBoundedCovModel}}

\optimalClassifierForBndCov*

\begin{proof}
The theorem follows from the fact that $R_S(h_B,c) \le R_S(h,c)$ for any $(h,c)$ feasible to~\equ{equ:boundedCoverageModel}, which is derived as follows:
\begin{align*}
R_S(h_B,c)&=\frac{1}{\phi(c)}\int\limits_{\SX}\sum\limits_{y\in\SY}p(x,y)\,\ell(y,h_B(x))\,c(x)\,dx \\
&= \frac{1}{\phi(c)}\int\limits_{\SX}p(x)c(x)\left(\sum\limits_{y\in\SY} p(y\,|\,x)\,\ell(y,h_B(x))\right)\,dx\\
&{\stackrel{\equ{equ:bayesPredictor}}\le} \frac{1}{\phi(c)}\int\limits_{\SX}p(x)c(x)\left(\sum\limits_{y\in\SY} p(y\,|\,x)\,\ell(y,h(x))\right)\,dx\\
&=\frac{1}{\phi(c)}\int\limits_{\SX}\sum\limits_{y\in\SY}p(x,y)\,\ell(y,h(x))\,c(x)\,dx\\
&=R_S(h,c).
\end{align*}

\end{proof}

\subsection{Proof of Theorem~\ref{thm:selectContinuousTask2}}

\boundedCoverageKnownCls*
\boundedCoverageSolution*

\begin{proof}
By substituting the definitions of $R_S(h,c)$ and $\phi(c)$ to~(\ref{equ:boundedCoverageModelKnownCls}), we rewrite the problem into the form
\begin{equation}
\label{equ:boundedCovModelKnownClsSubstituted}
   \min_{c\in [0,1]^\SX} \frac{\int\limits_{\SX}p(x)c(x)r(x)dx}{\int\limits_{\SX} p(x)c(x)dx} \,\quad \mbox{s.t.}\quad
    \int\limits_{\SX} p(x)c(x)dx \geq \omega  \,.
\end{equation}

Let $F(c)=\frac{\int_{\SX}p(x)c(x)r(x)dx}{\int_{\SX}p(x)c(x)dx}$ denote the objective function of~(\ref{equ:boundedCovModelKnownClsSubstituted}). Whenever $c^*:\SX\to [0,1]$ fulfils~(\ref{equ:task2-cond-1}), (\ref{equ:task2-cond-2}) and (\ref{equ:task2-cond-3}), it is feasible to~(\ref{equ:boundedCovModelKnownClsSubstituted}) and
\begin{equation}
F(c^*)=\beta + \frac{1}{\omega} \int\limits_{\SX_{{r}(x)<\beta}}p(x)r(x)dx-\frac{\beta}{\omega} \int\limits_{\SX_{{r}(x)=\beta}}p(x)dx \,,
\end{equation}
which is a value independent of $c^*$.

We will prove the theorem by showing that any
$c:\SX\to [0,1]$ feasible to~(\ref{equ:boundedCovModelKnownClsSubstituted}) that violates at least one of conditions (\ref{equ:task2-cond-1}), (\ref{equ:task2-cond-2}), (\ref{equ:task2-cond-3}) is not an optimal solution.
Three cases will be examined.

\noindent
\textbf{Case 1} Condition~(\ref{equ:task2-cond-1}) is violated, i.e.,
\begin{equation}\label{equ:task2-cond-3-violated}
    \int\limits_{\SX_{{r}(x)<\beta}}p(x)c(x)dx < \int\limits_{\SX_{{r}(x)<\beta}}p(x)dx.
\end{equation}
This means that there is a subset $X\subseteq \SX$ such that 
\begin{equation}\label{equ:task2-prop1}
    \forall x\in X \,:\, r(x) \ge \beta
\end{equation}
and 
\begin{equation}\label{equ:task2-prop2}
    \int\limits_{X}p(x)c(x)dx = \int\limits_{\SX_{{r}(x)<\beta}}p(x)dx - \int\limits_{\SX_{{r}(x)<\beta}}p(x)c(x)dx \stackrel{(\ref{equ:task2-cond-3-violated})}> 0 \,.
\end{equation}
Define $c':\SX\to [0,1]$ as follows.
\begin{equation}
   c'(x) = \left \{ 
      \begin{array}{ccl}
         1 & \mbox{if} & r(x) < \beta \,,\\
         0 & \mbox{if} & x \in X\,, \\
         c(x) & \mbox{otherwise} \,.\\
      \end{array}
   \right .
\end{equation}
$c'$ is feasible to~(\ref{equ:boundedCovModelKnownClsSubstituted}) as $\phi(c')=\phi(c)$.
Derive
\begin{align}
    \phi(c) \left(F(c)-F(c')\right) &= \int\limits_{X}p(x)c(x)r(x)dx - \int\limits_{\SX_{r(x)<\beta}}p(x)r(x)dx + \int\limits_{\SX_{r(x)<\beta}}p(x)c(x)r(x)dx \\
    &\stackrel{(\ref{equ:task2-prop1}), (\ref{equ:task2-prop2})}\ge
    \int\limits_{\SX_{{r}(x)<\beta}}\beta \cdot p(x)(1-c(x))dx - \int\limits_{\SX_{{r}(x)<\beta}}p(x)(1-c(x))r(x)dx \\
    &= \int\limits_{\SX_{{r}(x)<\beta}}p(x)(1-c(x))(\beta-r(x))dx > 0 \label{equ:task2-last-inequ}
\end{align}
where the inequality in (\ref{equ:task2-last-inequ}) is obtained from Lemma~\ref{lemma:nonzero-int-1} applied to $f(x)=p(x)(1-c(x))$, $g(x)=\beta-r(x)$, and the set $\SX_{{r}(x)<\beta}$. This shows that $c$ is not an optimal solution. 

\noindent
\textbf{Case 2} Condition~(\ref{equ:task2-cond-1}) is satisfied, condition~(\ref{equ:task2-cond-2}) is violated and
\begin{equation}\label{equ:task2-case2}
     \int_{\SX_{{r}(x)=\beta}}p(x)c(x)dx < \omega - \int_{\SX_{{r}(x)<\beta}}p(x)dx > 0\,.
\end{equation}

In this case, there is a $c':\SX\to [0,1]$ such that
\begin{alignat}{2}
    c'(x) &= c(x) &\quad \mbox{if } \,\, r(x) < \beta\,,\\
    c'(x) &= 0 &\quad \mbox{if } \,\, r(x) > \beta\,,
\end{alignat}
and
\begin{equation}\label{equ:task2-prop3}
    \int_{\SX_{{r}(x)=\beta}}p(x)c'(x)dx = \int_{\SX_{{r}(x)=\beta}}p(x)c(x)dx + \int_{\SX_{{r}(x)>\beta}}p(x)c(x)dx \,.
\end{equation}

If Lemma~\ref{lemma:nonzero-int-2} is applied to $f(x)=p(x)c(x)$, $g(x)=r(x)$, and the set $\SX_{r(x)>\beta}$, we get
\begin{equation}\label{equ:task2-prop4}
    \int_{\SX_{{r}(x)>\beta}}p(x)c(x)r(x)dx > \beta \int_{\SX_{{r}(x)>\beta}}p(x)c(x)dx \,.
\end{equation}

It also holds that $\phi(c')=\phi(c)$. Now, deriving
\begin{align}
    \phi(c) \left(F(c)-F(c')\right) &\stackrel{(\ref{equ:task2-prop3})}= \int\limits_{\SX_{r(x) > \beta}}p(x)c(x)r(x)dx - \beta \int\limits_{\SX_{r(x)>\beta}}p(x)c(x)dx \\
    &\stackrel{(\ref{equ:task2-prop4})}>
    \beta \int\limits_{\SX_{r(x) > \beta}}p(x)c(x) - \beta \int\limits_{\SX_{r(x)>\beta}}p(x)x(x)dx = 0
\end{align}
shows that $c$ is not an optimal solution.

\noindent
\textbf{Case 3} $\phi(c)>\omega$, which occurs if
\begin{equation}
     \int_{\SX_{{r}(x)=\beta}}p(x)c(x)dx > \omega - \int_{\SX_{{r}(x)<\beta}}p(x)dx > 0
\end{equation}
(implying that condition~(\ref{equ:task2-cond-2}) is violated), or if condition~(\ref{equ:task2-cond-3}) is violated.

Observe that $F(c)=F(\alpha \cdot c)$ for any $a\in \Re_+$. Let $c'=\frac{\omega}{\phi(c)} \cdot c$. Since $\phi(c')=\omega$, the selection function $c'$ is feasible to~(\ref{equ:boundedCovModelKnownClsSubstituted}). Because
\begin{equation}
     \int_{\SX_{{r}(x)<\beta}}p(x)c'(x)dx = \frac{\omega}{\phi(c)} \int_{\SX_{{r}(x)<\beta}}p(x)c(x)dx < \int_{\SX_{{r}(x)<\beta}}p(x)dx \,,
\end{equation}
$c'$ violates condition~(\ref{equ:task2-cond-1}) and is therefore not an optimal solution (see Case 1). This implies that $c$ is not an optimal solution too.
\end{proof}

\subsection{Proof of Theorem~\ref{thm:optSelectingFun2} }

\ThmOptSelectingFceTwo*

\begin{proof}
    It is easy to see that $c^*$ satisfies conditions~(\ref{equ:task2-cond-1}) and~(\ref{equ:task2-cond-3}).
    The validity of condition~(\ref{equ:task2-cond-2}) is proved as follows. If $\int_{\SX_{{r}(x)=\beta}}p(x)dx = 0$, then $\int_{\SX_{{r}(x)=\beta}}p(x)dx = \omega$, and condition~(\ref{equ:task2-cond-2}) is met. If $\int_{\SX_{{r}(x)=\beta}}p(x)dx > 0$, we derive
    \begin{align}
        \int_{\SX_{{r}(x)=\beta}}p(x)c^*(x)dx = \frac{\omega - \int_{\SX_{{r}(x)<\beta}}p(x)dx} {\int_{\SX_{{r}(x)=\beta}}p(x)dx} \int_{\SX_{{r}(x)=\beta}}p(x)dx = \omega - \int_{\SX_{{r}(x)<\beta}}p(x)dx \,.
    \end{align}
\end{proof}

\section{Proofs of theorems from Section~\ref{sec:UncertaintyLearning}}

\subsection{Proof of Theorem~\ref{thm:regEstimSolution} }

The expectation of the squared loss deviation reads
\begin{equation}
  E_{\rm reg}(s)  = \int\limits_{\SX} \sum_{y\in\SY} p(x,y)\,\Big (\ell(y,h(x))-s(x)\Big)^2 dx \:.
\end{equation}

\regEstimatorSolution*

\begin{proof} We can rewrite $E_{\rm reg}(s)$ as
\[
   E_{\rm reg}(s) = \int_{\SX} p(x) \sum_{y\in\SY} p(y\mid x) \Big (
   \ell(y,h(x))^2- 2\,\ell(y,h(x))\,s(x) + s(x)^2 \Big ) dx = \int_{\SX} p(x) f(s(x)) dx \:.
\]
Due to additivity we can solve $\min_{s\colon\SX\rightarrow\Re} E_{\rm reg}(s)$ for each $x\in\SX$ separately by setting derivative of $f(s)$ to zero and solving for $s$ which yields
\[
      f'(s) = -2\sum_{y\in\SY} p(y\mid x) \ell( y,h(x)) + 2\sum_{y\in\SY} p(y\mid x) s(x) = 0 \Rightarrow s^*(x) = \sum_{y\in\SY} p(y\mid x) \ell( y, h(x)) \:.
\]

\end{proof}

\subsection{Proof of Theorem~\ref{theorem:SeleUpperBoundsAuRC}}

\begin{lemma}\label{lemma:bound}
For an integer $n \ge 1$, let $\{a_i\}_{i=1}^n$ and $\{b_i\}_{i=1}^n$ be sequences of positive real numbers such that $\{\frac{a_i}{b_i}\}_{i=1}^n$ is a non-increasing sequence. For each non-decreasing sequence $\{\ell_i\}_{i=1}^n$ of non-negative real numbers with a positive sum it holds that
$$\frac{\sum_{i=1}^n a_i \ell_i}{\sum_{i=1}^n b_i \ell_i} \le \frac{\sum_{i=1}^n a_i}{\sum_{i=1}^n b_i}\:.$$
\end{lemma}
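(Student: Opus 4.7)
The plan is to cross-multiply the inequality and reduce it to a symmetrized sum that is manifestly non-positive, exploiting the opposite monotonicity of $r_i := a_i/b_i$ and $\ell_i$. This is essentially a Chebyshev-sum / rearrangement-style argument.

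First I would observe that since all $b_i > 0$ and $\sum_i b_i \ell_i > 0$ (because $\ell_i \ge 0$, not all zero, and $b_i > 0$), both denominators are strictly positive, so cross-multiplying is legitimate. The inequality is thus equivalent to
\[
\Big(\sum_i a_i \ell_i\Big)\Big(\sum_j b_j\Big) \;\le\; \Big(\sum_i a_i\Big)\Big(\sum_j b_j \ell_j\Big),
\]
i.e.\ $\sum_{i,j} a_i b_j (\ell_i - \ell_j) \le 0$. Exchanging the roles of $i$ and $j$ and averaging the two resulting expressions gives the symmetrized form
\[
\tfrac{1}{2}\sum_{i,j}(a_i b_j - a_j b_i)(\ell_i - \ell_j) \;\le\; 0.
\]

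Now I would factor $a_i b_j - a_j b_i = b_i b_j\bigl(\tfrac{a_i}{b_i} - \tfrac{a_j}{b_j}\bigr) = b_i b_j (r_i - r_j)$, so the sum becomes
\[
\tfrac{1}{2}\sum_{i,j} b_i b_j (r_i - r_j)(\ell_i - \ell_j).
\]
For every ordered pair $(i,j)$, the factor $b_i b_j$ is positive, while $(r_i - r_j)$ and $(\ell_i - \ell_j)$ have opposite signs whenever $i < j$ or $i > j$ (and one of them is zero in the remaining ties), because $r$ is non-increasing and $\ell$ is non-decreasing. Hence each term in the sum is non-positive, and the required inequality follows.

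There is no real obstacle here; the only subtlety is confirming the positivity of $\sum_i b_i \ell_i$ so that the cross-multiplication step is valid, which follows from $b_i > 0$ together with the assumption that $\sum_i \ell_i > 0$. The argument extends verbatim to sequences indexed by an arbitrary finite set, which is all that is needed in the subsequent application to the risk-coverage bound.
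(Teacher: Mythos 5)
Your proof is correct, and it takes a genuinely different route from the paper. The paper proves the lemma by induction on $n$: it applies the induction hypothesis twice (once to the shifted sequence $\{\ell_i-\ell_1\}_{i=2}^n$ and once to $\{\ell_i\}_{i=2}^n$) and then verifies a fairly long chain of equivalent inequalities to absorb the first term. You instead cross-multiply — justified, as you note, because $\sum_i b_i\ell_i \ge b_n\ell_n > 0$ — reduce the claim to $\sum_{i,j} a_i b_j(\ell_i-\ell_j)\le 0$, symmetrize in $(i,j)$, and factor $a_ib_j-a_jb_i=b_ib_j\bigl(\tfrac{a_i}{b_i}-\tfrac{a_j}{b_j}\bigr)$, so that every term $b_ib_j(r_i-r_j)(\ell_i-\ell_j)$ is non-positive because $r_i=a_i/b_i$ and $\ell_i$ are oppositely ordered (your phrase ``opposite signs'' should strictly be ``the product is non-positive,'' since either factor may vanish, but your conclusion is exactly right). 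This is the classical Chebyshev-sum / rearrangement argument, and it buys a shorter, induction-free proof whose key identity is immediate to check; it also makes transparent that the only property used is the pairwise condition $(r_i-r_j)(\ell_i-\ell_j)\le 0$, so the statement holds for any finite index set with that property. The paper's induction, by contrast, is self-contained but requires more bookkeeping (and its displayed algebra is harder to audit than your two-line symmetrization), so your version is arguably the cleaner one for the application to bounding $\AUC$ by $2\cdot\SELE$.
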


\begin{proof}
By induction on $n$. Base case: If $n=1$, then $\frac{a_1\ell_1}{b_1\ell_1}=\frac{a_1}{b_1}$.

Induction step: Let $n>1$. The fact that $\{\frac{a_i}{b_i}\}_{i=1}^{n}$ is non-increasing implies $a_i\le \frac{a_1}{b_1}\cdot b_i$ for all $i=1,\ldots,n$, hence
\begin{equation}
\frac{\sum_{i=2}^{n}a_i}{\sum_{i=2}^{n}b_i} \le \frac{\sum_{i=2}^{n}\frac{a_1}{b_1}\cdot b_i}{\sum_{i=2}^{n}b_i} = \frac{a_1}{b_1} \:.
\end{equation}

The lemma is obviously satisfied if $0< \ell_1=\ell_2= \ldots = \ell_n$. Assume that $\ell_n > \ell_1$. Then, the sequence $\{\ell_{i}-\ell_n\}_{i=2}^n$ is non-decreasing with positive sum, hence the induction hypothesis yields that
\begin{equation}\label{eq:sequences-1}
\frac{\sum_{i=2}^n a_i(\ell_i-\ell_1)}{\sum_{i=2}^n b_i(\ell_i-\ell_1)} \le \frac{\sum_{i=2}^{n}a_i}{\sum_{i=2}^{n}b_i} \le \frac{a_1}{b_1} \:.
\end{equation}
The induction hypotheses also ensures that
\begin{equation}\label{eq:sequences-2}
    \frac{\sum_{i=2}^n a_i\ell_i}{\sum_{i=2}^n b_i\ell_i} \le \frac{\sum_{i=2}^{n}a_i}{\sum_{i=2}^{n}b_i} \:.
\end{equation}
We can thus derive the following sequence of equivalent inequalities:
\begin{align*}
    b_1 \sum_{i=2}^n a_i(\ell_i-\ell_1) & \stackrel{(\ref{eq:sequences-1})}\le a_1 \sum_{i=2}^n b_i(\ell_i-\ell_1) \\
    b_1 \sum_{i=2}^n a_i\ell_i + a_1\ell_1 \sum_{i=2}^n b_i &\le a_1 \sum_{i=2}^n b_i\ell_i + b_1\ell_1 \sum_{i=2}^n a_i \\
    a_1\ell_1 b_1\! +\! b_1\! \sum_{i=2}^n a_i\ell_i \!+\! a_1\ell_1\! \sum_{i=2}^n b_i \!+\! \sum_{i=2}^n b_i\! \sum_{i=2}^n a_i\ell_i\! &\le\! a_1\ell_1 b_1 \!+\! a_1\! \sum_{i=2}^n b_i\ell_i \!+\! b_1\ell_1\! \sum_{i=2}^n a_i \!+\! \sum_{i=2}^n b_i\! \sum_{i=2}^n a_i\ell_i \\
    a_1\ell_1 b_1 \!+\! b_1\! \sum_{i=2}^n a_i\ell_i \!+\! a_1\ell_1\! \sum_{i=2}^n b_i \!+\! \sum_{i=2}^n b_i\! \sum_{i=2}^n a_i\ell_i & \!\!\stackrel{(\ref{eq:sequences-2})}\le\!\! a_1\ell_1 b_1\! + \!a_1\! \sum_{i=2}^n b_i\ell_i \!+\! b_1\ell_1\! \sum_{i=2}^n a_i \!+\! \sum_{i=2}^n a_i\! \sum_{i=2}^n b_i\ell_i \\
    a_1\ell_1\sum_{i=1}^n b_i + \sum_{i=1}^n b_i \sum_{i=2}^n a_i\ell_i &\le b_1\ell_1 \sum_{i=1}^n a_i + \sum_{i=1}^n a_i \sum_{i=2}^n b_i\ell_i \\
    \sum_{i=1}^n a_i\ell_i\sum_{i=1}^n b_i &\le \sum_{i=1}^n b_i\ell_i\sum_{i=1}^n a_i \\
    \frac{\sum_{i=1}^n a_i \ell_i}{\sum_{i=1}^n b_i \ell_i} &\le \frac{\sum_{i=1}^n a_i}{\sum_{i=1}^n b_i}\:.
\end{align*}

\end{proof}

For $i=1,\ldots,n$ and a permutation $\pi$ on $\{1,\ldots,n\}$, let $a_{\pi_i}=\sum_{j=i}^n \frac{n}{j}$ and $b_{\pi_i}=n-i+1$. We can write
$$\SELE(s, \ST_n) = \frac{1}{n^2} \sum_{i=1}^n b_{\pi_i} \ell_{\pi_i}$$
and
$$\AUC(s, \ST_n) = \frac{1}{n^2} \sum_{i=1}^n a_{\pi_i} \ell_{\pi_i} \,,$$
where $\ell_{\pi_i}=\ell(y_{\pi_i}, h(x_{\pi_i}))$.

Without loss of generality, assume that $\ell_1\le \ell_2 \le \ldots \le \ell_n$ and $\pi_i=i$ for all $i=1,\ldots,n$.

Let $H_k=\sum_{i=1}^k \frac{1}{i}$ denote the $k$-th harmonic number.
It fulfils
\begin{equation}\label{eq:harmonic-bounds}
\ln(k) + \gamma + \frac{1}{2k+1} \le H_k \le \ln(k) + \gamma + \frac{1}{2k-1}
\end{equation}
where $\gamma\approx 0.5772156649$ is the Euler–Mascheroni constant. Moreover, let us define $H_0=0$.

\begin{lemma}\label{lemma:monotonic}
Let $n\ge 2$ be an integer. For each $i=1,\ldots,n$, let $a_i=\sum_{j=i}^n \frac{n}{j}$ and $b_i=n-i+1$.
Then, $\frac{a_i}{b_i} \ge \frac{a_{i+1}}{b_{i+1}}$ holds for all $i=1,\ldots,n-1$.
\end{lemma}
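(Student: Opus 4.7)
The plan is to unfold the recursion $a_i = a_{i+1} + \tfrac{n}{i}$ together with $b_i = b_{i+1}+1$ and cross-multiply so that the claimed inequality $\tfrac{a_i}{b_i} \ge \tfrac{a_{i+1}}{b_{i+1}}$ reduces to a simple scalar bound on $a_{i+1}$. Specifically, since $b_i, b_{i+1} > 0$ for $i \le n-1$, the claim is equivalent to $a_i b_{i+1} \ge a_{i+1} b_i$. Substituting $a_i = a_{i+1} + n/i$, $b_i = n-i+1$ and $b_{i+1} = n-i$ and cancelling the common $a_{i+1}(n-i)$ term on both sides leaves the equivalent inequality
\[
   \frac{n(n-i)}{i} \;\ge\; a_{i+1} \:.
\]

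The second step is to verify this bound directly from the definition. By expanding $a_{i+1} = \sum_{j=i+1}^n \tfrac{n}{j}$ and using the elementary fact that $\tfrac{1}{j} \le \tfrac{1}{i}$ for every $j \ge i+1$, I obtain
\[
   a_{i+1} \;=\; n \sum_{j=i+1}^n \frac{1}{j} \;\le\; n \sum_{j=i+1}^n \frac{1}{i} \;=\; \frac{n(n-i)}{i} \:,
\]
which is exactly what is required. This completes the proof.

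There is no real obstacle here: the whole argument is a one-step algebraic rearrangement followed by the crudest possible upper bound on a harmonic tail. The only thing to be careful about is the boundary index $i = n-1$, where $b_{i+1} = 1$ and the sum $a_{i+1}$ has a single term; checking that the argument still goes through (it does, since $\tfrac{n(n-(n-1))}{n-1} = \tfrac{n}{n-1} \ge 1 = a_n$) is immediate.
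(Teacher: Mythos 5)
Your proof is correct. The cross-multiplication and cancellation reduce the claim to $a_{i+1}\le \frac{n(n-i)}{i}$, which, written in terms of harmonic numbers, is exactly the inequality $H_n-H_i\le \frac{n-i}{i}$ that the paper also arrives at in (\ref{eq:monotonic}); so the algebraic reduction is essentially the same, only phrased via the recursions $a_i=a_{i+1}+\frac{n}{i}$, $b_i=b_{i+1}+1$ rather than via the $H_k$ notation. Where you genuinely diverge is in how this final inequality is established: the paper invokes the logarithmic bounds (\ref{eq:harmonic-bounds}) on harmonic numbers (with the Euler--Mascheroni constant) together with $\ln x< x-1$, whereas you simply bound each term of the tail $\sum_{j=i+1}^{n}\frac{1}{j}$ by $\frac{1}{i}$, giving $H_n-H_i\le\frac{n-i}{i}$ directly. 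Your route is more elementary and self-contained --- it removes the need for the asymptotic harmonic estimates entirely, which are used only for this step in the paper --- while the paper's argument yields a strict inequality (not needed for the lemma as stated) and stays within the $H_k$ bookkeeping it reuses in Lemma~\ref{lemma:harm_expr}. Your separate check of the boundary index $i=n-1$ is harmless but unnecessary, since the general argument already covers it.
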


\begin{proof}
Since
\begin{align*}
    \frac{a_i}{b_i} - \frac{a_{i+1}}{b_{i+1}} = \frac{n\cdot (H_n-H_{i-1})}{n-i+1} - \frac{n\cdot (H_n-H_i)}{n-i}
\end{align*}
it suffices to show that
$$(n-i)(H_n-H_{i-1}) \ge (n-i+1)(H_n-H_i)$$
and this is equivalent to showing that 
$$(n-i+1)H_i - (n-i)H_{i-1} \ge H_n\,.$$
If we substitute $H_{i-1}=H_i - \frac{1}{i}$, the inequality further reduces to
\begin{equation}\label{eq:monotonic}
    H_n - H_i \le \frac{n-i}{i} = \frac{n}{i}-1\,.
\end{equation}
Now we derive
$$H_n - H_i \stackrel{(\ref{eq:harmonic-bounds})}\le \ln\left(\frac{n}{i}\right) + \frac{1}{2n-1}-\frac{1}{2i+1} < \frac{n}{i}-1$$
where the second inequality follows from the fact that $\ln(x) < x - 1$ for all $x>1$. This confirms inequality~(\ref{eq:monotonic}).

\end{proof}

\begin{lemma}\label{lemma:harm_expr}
$\sum_{i=1}^n (H_n-H_{i-1})=n$.
\end{lemma}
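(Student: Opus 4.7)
The plan is to expand each term $H_n - H_{i-1}$ as a tail of the harmonic series and then swap the order of summation. Concretely, since $H_n - H_{i-1} = \sum_{k=i}^{n} \frac{1}{k}$ (valid for $i = 1, \dots, n$, with the convention $H_0 = 0$), the double sum becomes
\[
   \sum_{i=1}^{n} (H_n - H_{i-1}) \;=\; \sum_{i=1}^{n} \sum_{k=i}^{n} \frac{1}{k}\:.
\]
Swapping the two summations (the index pair $(i,k)$ satisfies $1 \le i \le k \le n$) yields
\[
   \sum_{k=1}^{n} \sum_{i=1}^{k} \frac{1}{k} \;=\; \sum_{k=1}^{n} k \cdot \frac{1}{k} \;=\; \sum_{k=1}^{n} 1 \;=\; n,
\]
which completes the proof.

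An equivalent route, if one prefers not to swap sums, is to isolate the last term and use $H_n - H_{n-1} = 1/n$. Write
\[
   \sum_{i=1}^{n}(H_n - H_{i-1}) \;=\; n H_n - \sum_{i=0}^{n-1} H_i,
\]
and compute $\sum_{i=1}^{n-1} H_i = \sum_{k=1}^{n-1}\frac{n-k}{k} = n H_{n-1} - (n-1)$ by the same index-exchange trick. Substituting gives $n H_n - n H_{n-1} + (n-1) = 1 + (n-1) = n$. Either derivation is a one-liner; there is no real obstacle, the identity follows purely by a change in the order of summation.
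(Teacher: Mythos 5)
Your proof is correct, but it takes a different route from the paper. The paper proves the identity by induction on $n$: for $n>1$ it peels off the relation $H_n-H_{i-1}=\tfrac{1}{n}+H_{n-1}-H_{i-1}$ and invokes the induction hypothesis for $n-1$, so the argument is a short recursive bookkeeping step. You instead expand each summand as the harmonic tail $H_n-H_{i-1}=\sum_{k=i}^{n}\tfrac{1}{k}$ and interchange the order of summation over the triangle $1\le i\le k\le n$, which collapses immediately to $\sum_{k=1}^{n} k\cdot\tfrac{1}{k}=n$; your second variant is the same index-exchange idea packaged as a computation of $\sum_{i=1}^{n-1}H_i$. The interchange argument is arguably the more transparent one: it exposes the combinatorial reason the sum equals $n$ (each reciprocal $\tfrac{1}{k}$ is counted exactly $k$ times) and needs no induction, whereas the paper's induction is mechanical and matches the style of its neighbouring lemmas (e.g.\ Lemma~\ref{lemma:bound}, also proved by induction). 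Both are complete and equally rigorous; nothing is missing from your version, including the needed convention $H_0=0$, which the paper also states explicitly.
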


\begin{proof}
By induction on $n$. The lemma trivially holds for $n=1$.
Let $n>1$. Then, using the induction hypothesis, we derive
\begin{align*}
\sum_{i=1}^n (H_n-H_{i-1}) &=H_n-H_{n-1}+\sum_{i=1}^{n-1} \left(\frac{1}{n}+H_{n-1}-H_{i-1}\right)\\
&=\frac{1}{n}+\frac{n-1}{n} + \sum_{i=1}^{n-1} (H_{n-1}-H_{i-1}) = n\:.
\end{align*}

\end{proof}

\SeleUpperBoundsAuRC*

\begin{proof}
The theorem trivially holds if $\sum_{i=1}^n \ell_i = 0$. If $\sum_{i=1}^n \ell_i > 0$, we apply Lemmas~\ref{lemma:monotonic}, \ref{lemma:bound} and~\ref{lemma:harm_expr} to derive
$$\frac{\AUC(s, \ST_n)}{\SELE(s, \ST_n)} \le \frac{\sum_{i=1}^n a_i}{\sum_{i=1}^n b_i}= \frac{\sum_{i=1}^n n\cdot (H_n-H_{i-1})}{\sum_{i=1}^n (n-i+1)}= \frac{n^2}{\frac{n}{2}(n+1)} = \frac{2n}{n+1} < 2\:.$$
\end{proof}

\subsection{Proof of Theorem~\ref{thm:scoringFunction}}


\begin{remark}
For the sake of simplicity, for predicates $\varphi_1(x,z), \ldots, \varphi_k(x,z)$ and a function $f:\SX\times\SX\to \Re$, we write
\[
\int\limits_{\SX}\int\limits_{\substack{\varphi_1(x,z)  \\ \vdots\\ \varphi_k(x,z)}}f(x,z)dz\, dx
\]
to represent
\[
\int\limits_{\SX}\int\limits_{\SX}f(x,z)\leftbb \varphi_1(x,z) \land \ldots \land \varphi_k(x,z) \rightbb dz\,dx \,.
\]
\end{remark}

\minSeleSolution*

\begin{proof}
    We first present four equalities to be used later. We assume that $s:\SX \to \Re$ is any measurable function. The validity of the equalities can be easily verified.
    \begin{align}
        &\int\limits_{\SX} r(x) p(x)\! \int\limits_{\substack{ r(z)> r(x) \\ s(z) < s(x) }} \!\! p(z) dz\, dx = \int\limits_{\SX} p(z) \! \int\limits_{\substack{ r(x)< r(z) \\ s(x) > s(z) }} \!\! r(x)p(x) dx\, dz = \int\limits_{\SX} p(x) \! \int\limits_{\substack{ r(z)< r(x) \\ s(z) > s(x) }} \!\!r(z)p(z) dz\,dx\,,\label{equ:aux-1}\\
        &\int\limits_{\SX} \int\limits_{\substack{ r(z)< r(x) \\ s(z) = s(x) }} r(x) p(x) p(z) dz\,dx = \frac{1}{2} \int\limits_{\SX} \int\limits_{\substack{ z \neq x \\ s(z) = s(x) }} \max\{r(x),r(z)\} p(x) p(z) dz\,dx \nonumber\\
        &\qquad\qquad\qquad\qquad\qquad\qquad\qquad - \frac{1}{2} \int\limits_{\SX} \int\limits_{\substack{ z \neq x \\ r(z) = r(x) \\ s(z) = s(x) }} \max\{r(x),r(z)\} p(x) p(z) dz\,dx \,, \label{equ:aux-2}\\
        &\int\limits_{\SX} \int\limits_{\substack{ r(z)= r(x) \\ s(z) < s(x) }} \!\!\!\!\!\! r(x) p(x) p(z) dz\,dx = \frac{1}{2} \int\limits_{\SX} \int\limits_{r(z)= r(x)} \!\!\!\!\!\! r(x) p(x) p(z) dz\,dx - \frac{1}{2} \int\limits_{\SX} \int\limits_{\substack{ r(z)= r(x) \\ s(z) = s(x) }} \!\!\!\!\!\! r(x) p(x) p(z) dz\,dx \,, \label{equ:aux-3}\\
        &\int\limits_{\SX} \int\limits_{\substack{ r(z)= r(x) \\ s(z) = s(x) }} r(x) p(x) p(z) dz\,dx = 2 \int\limits_{\SX} \int\limits_{\substack{ z > x \\ r(z)= r(x) \\ s(z) = s(x) }} r(x) p(x) p(z) dz\,dx + \int\limits_{\SX} \int\limits_{z=x} r(x) p(x) p(z) dz\,dx \,. \label{equ:aux-4}
    \end{align}
    Since $\argmin_{s:\SX \to \Re} E(s) = \argmin_{s:\SX \to \Re} \left( E(s)-E(r) \right)$, it suffices to analyze minimizers of $E(s)-E(r)$ instead of $E(s)$. Derive
    \begin{align*}
        E(s)-E(r)&=
        \int\limits_{\SX} \int\limits_{s(z)\ge s(x)} r(x)p(x)p(z) dz\,dx - \int\limits_{\SX} \int\limits_{{r(z)\ge r(x)}} r(x)p(x)p(z) dz\,dx \\
        &=\int\limits_{\SX} \int\limits_{\substack{ r(z) < r(x) \\ s(z)\ge s(x) }} r(x) p(x) p(z) dz\,dx - \int\limits_{\SX} \int\limits_{\substack{ r(z)\ge r(x) \\ s(z) < s(x) }} r(x) p(x) p(z) dz \,dx \\
        &=\int\limits_{\SX} \int\limits_{\substack{ r(z) < r(x) \\ s(z)> s(x) }} r(x)p(x)p(z) dz\,dx -  \int\limits_{\SX} \int\limits_{\substack{ r(z)> r(x) \\ s(z) < s(x) }} r(x) p(x) p(z) dz\, dx \\
        &+\int\limits_{\SX} \int\limits_{\substack{ r(z) < r(x) \\ s(z)= s(x) }} r(x)p(x)p(z) dz\,dx - \int\limits_{\SX} \int\limits_{\substack{ r(z)= r(x) \\ s(z) < s(x) }} r(x) p(x) p(z) dz\,dx \\
        &= F_1(s) + F_2(s)
    \end{align*}
    where
    \begin{align*}
        F_1(s)&=\int\limits_{\SX} \int\limits_{\substack{ r(z) < r(x) \\ s(z)> s(x) }} r(x)p(x)p(z) dz\,dx -  \int\limits_{\SX} \int\limits_{\substack{ r(z)> r(x) \\ s(z) < s(x) }} r(x) p(x) p(z) dz\, dx \\
        &\stackrel{(\ref{equ:aux-1})}= \int\limits_{\SX} \int\limits_{\substack{ r(z) < r(x) \\ s(z)> s(x) }} r(x)p(x)p(z) dz\,dx -  \int\limits_{\SX} \int\limits_{\substack{ r(z)< r(x) \\ s(z) > s(x) }} r(z) p(x) p(z) dz\, dx \\
        &=\int\limits_{\SX}\int\limits_{\substack{ r(z)< r(x) \\ s(z)> s(x)}} \left(r(x)-r(z)\right)p(x)p(z)dz\,dx
    \end{align*}
    and
    \begin{align*}
        F_2(s)&=\int\limits_{\SX} \int\limits_{\substack{ r(z) < r(x) \\ s(z)= s(x) }} r(x)p(x)p(z) dz\,dx - \int\limits_{\SX} \int\limits_{\substack{ r(z)= r(x) \\ s(z) < s(x) }} r(x) p(x) p(z) dz\,dx \\ &\stackrel{(\ref{equ:aux-2},\ref{equ:aux-3},\ref{equ:aux-4})}{=} \frac{1}{2} \int\limits_{\SX} \int\limits_{\substack{ z \neq x \\ s(z)= s(x) }} \max\{r(x),r(z)\}p(x)p(z) dz\,dx + \frac{1}{2} \int\limits_{\SX} \int\limits_{z=x} r(x) p(x) p(z) dz\,dx \\ &- \frac{1}{2} \int\limits_{\SX} \int\limits_{r(z)=r(x)} r(x) p(x) p(z) dz\,dx \,.
    \end{align*}
    Observe that
    \begin{equation*}
        \min_{s:\SX \to \Re} F_1(s) = 0,
    \end{equation*}
    \begin{equation*}
        \min_{s:\SX \to \Re} F_2(s) = \frac{1}{2} \int\limits_{\SX} \int\limits_{z=x} r(x) p(x) p(z) dz\,dx - \frac{1}{2} \int\limits_{\SX} \int\limits_{r(z)=r(x)} r(x) p(x) p(z) dz\,dx\,,  
    \end{equation*}
    and both minima are attained by a scoring function $s^*:\SX \to \Re$ if and only if conditions (\ref{equ:scoringFunction-cond1}) and (\ref{equ:scoringFunction-cond2}) hold for $s^*$. Also note that the conditions can be fulfilled, e.g. by any $s^*$ such that
    \begin{eqnarray*}
        \left( \forall x,z\in \SX\right) \left( x\neq z \Rightarrow \,s^*(x)\neq s^*(z) \,\, \land \,\, r(x) < r(z) \Rightarrow s^*(x) < s^*(z) \right).
    \end{eqnarray*}
\end{proof}

\subsection{Proof of Theorem~\ref{thm:seleProxySolution}}

The expectation of $\SELEcvx$ reads
\[
  E_{\rm proxy}(s)\!  =\! \frac{n^2-n}{n^2}\!\! \int_{\SX} \!p(x)r(x)\! \left( \int_{\SX} \!p(z) \log\big( 1 \!+\! \exp(s(z)\!-\!s(x)) \big ) dz\! \right) \!dx  + \frac{\log(2)}{n}\!\! \int_{\SX} p(x) r(x) dx \:.
\]

\seleProxySolution*

\begin{proof}
  For every $a \in \mathcal{X}$,  $E_{\rm proxy}(s)$ can be seen as a function of one variable $s(a)$, where the others $s(b)$, $b\in\SX\setminus\{a\}$ are fixed. Hence if $s$ is a minimizer of $E_{\rm proxy}(s)$, then for every $s(a), a\in\mathcal{X}$, the partial derivative w.r.t. $s(a)$ must be zero, i.e.,
  \begin{align*}
    0 &= \frac{\partial}{\partial s(a)}   \int_{\SX} p(x)\, r(x) \bigg ( \int_{\SX} p(z) \log\left(1+\exp( s(z)-s(x))\right) dz \bigg ) dx  \\
    & =p(a)r(a)\int_{\SX} p(z)\frac{-\exp(s(z)-s(a))}{1+\exp(s(z)-s(a))} dz 
     +p(a)\int_{\SX} p(x)r(x)\frac{\exp(s(a)-s(x))}{1+\exp(s(a)-s(x))} dx\\
    &= p(a)\int_{\SX} \frac{r(a)\,p(x)+r(x)\,p(x)}{1+\exp(s(a)-s(x))} dx - p(a)\,\int_{\SX} p(x)\,r(x)\, dx\\
    & = f(r(a),s(a)) - C\:.
\end{align*}
%
It shows that $f(r(a),s(a))=C$ for any $a \in \mathcal{X}$ in order to guarantee that $s$ is a minimizer of $E_{\rm proxy}(s)$. We prove by contradiction that the condition $r(a) < r(b) \Rightarrow s(a) < s(b)$ is satisfied up to a set $\{(a,b)| (a,b) \in \mathcal{X}^2\}$ of zero measure. Assume $s$ is optimal and the condition is violated, i.e. $r(a) < r(b) \wedge s(a) \geq s(b)$ holds for a pair $(a,b)\in\SX^2$. Since $s$ is optimal then $f(r(a),s(a))=C$. Since $r(b)>r(a)$ and $s(b) \leq s(a)$ then $f(r(b),s(b))>f(r(a),s(b))$ 
because the function $f(u,v)$ is strictly increasing in $u$ and strictly decreasing in $v$. Combined it implies that $f(r(b),s(b))>0$ which leads to a contradiction because an optimal $s$ requires $f(r(b),s(b))=C$.

\end{proof}


\bibliography{main}

\end{document}